\documentclass[twoside,11pt]{article}

\usepackage{jmlr2e}

\jmlrheading{11}{2010}{1-48}{6/10}{-}{B. Nelson, B.~I.~P. Rubinstein, L. Huang, A.~D. Joseph, S.~J. Lee, S.~Rao and J.~D. Tygar} %

\ShortHeadings{Query Strategies for Evading \ConvexClass}{Nelson, Rubinstein, Huang, Joseph, Lee, Rao and Tygar}
\firstpageno{1}

\usepackage{xkeyval}
\usepackage{color}
\usepackage{algorithm}
\usepackage{algorithmic}
\usepackage{MathLib}
\usepackage{xcolor}
\usepackage{amsmath}
\usepackage{multirow}
\usepackage{textcomp}
\usepackage{pstricks,pst-plot,pst-3dplot,pst-infixplot,pst-xkey}
\usepackage{NearOpt-LaTeX-Lib}

\usepackage{paralist}

\newtheorem{program}[theorem]{Program}
\newenvironment{Program}[1]
  {\begin{program} \quad #1 \vspace{0mm} \\ \mbox{} \quad \quad \textbf{s.t.} \begin{minipage}[c]{.8\textwidth} \begin{trivlist}}
  {\end{trivlist} \end{minipage} \end{program}}

\newenvironment{proofof}[1]{\par\noindent{\bf Proof of {#1}\ }}{\hfill\BlackBox\\[2mm]}

\newcommand{\algoName}[1]{\textsc{#1}}
\newtheorem{algo}{Algorithm}
\newenvironment{Algorithm}[2]
{\begin{algo} \label{#2} \quad \algoName{#1}  \\ \rule[1cm]{\linewidth}{.5pt}\vspace{-15mm}
    \begin{algorithmic}\rm}
{\end{algorithmic}\vspace{-3mm}\rule{\linewidth}{.5pt}\end{algo}}

\newcommand{\picwidth}{\linewidth}

\newcounter{subfig}[figure]
\setcounter{subfig}{0}
\renewcommand{\thesubfig}{(\alph{subfig})}
\newcommand{\subref}[2]{\ref{#1}\ref{#2}}

\hyphenation{half-space half-spaces}

\newcommand{\ie}{\emph{i.e.},}
\newcommand{\eg}{\emph{e.g.},}
\newcommand{\cf}{\emph{cf.}}
\newcommand{\term}[1]{\emph{#1}}
\newcommand{\nth}[2]{\ensuremath{{#1}^{\mbox{\scriptsize #2}}}} %

\begin{document}

\title{Query Strategies for Evading \ConvexClass}

\author{\name Blaine Nelson \email nelsonb@cs.berkeley.edu \\
	\addr Computer Science Division\\
	University of California\\
	Berkeley, CA 94720-1776, USA
	\AND
	\name Benjamin I.~P. Rubinstein \email benr@cs.berkeley.edu \\
	\addr Microsoft Research\\
	Mountain View, CA 94043, USA
	\AND
	\name Ling Huang \email ling.huang@intel.com \\
	\addr Intel Labs Berkeley \\
	Berkeley, CA 94709, USA
	\AND
	\name Anthony D. Joseph \email adj@cs.berkeley.edu \\
	\addr Computer Science Division\\
	University of California\\
	Berkeley, CA 94720-1776, USA
	\AND
	\name Steven J. Lee \email stevenjlee@berkeley.edu  \\
	\name Satish Rao \email satishr@cs.berkeley.edu \\
	\name J.~D. Tygar \email tygar@cs.berkeley.edu \\
	\addr Computer Science Division \\
	University of California \\
	Berkeley, CA 94720-1776, USA}

\editor{Leslie Pack Kaelbling}

\maketitle

\begin{abstract}%
  Classifiers are often used to detect miscreant activities. We study
  how an adversary can systematically query a classifier to elicit
  information that allows the adversary to evade detection while
  incurring a near-minimal cost of modifying their intended
  malfeasance. We generalize the theory of \citet{lowd05adversarial}
  to the family of \convexClass\ that partition input space into two
  sets one of which is convex. We present query algorithms for this
  family that construct undetected instances of approximately minimal
  cost using only polynomially-many queries in the dimension of the
  space and in the level of approximation. Our results demonstrate
  that near-optimal evasion can be accomplished without
  reverse-engineering the classifier's decision boundary. We also
  consider general \LP\ costs and show that near-optimal evasion on
  the family of \convexClass\ is generally efficient for both positive
  and negative convexity for all levels of approximation if $p=1$.

\end{abstract}

\begin{keywords}
Query Algorithms, Evasion, Reverse Engineering, Adversarial Learning
\end{keywords}

\section{Introduction}

A number of systems and security engineers have proposed the use of
machine learning techniques 
to filter or detect miscreant activities in a variety of applications;
\eg\ spam, intrusion, virus, and fraud detection. All known detection techniques have blind spots: classes of
miscreant activity that fail to be detected.
While learning algorithms allow the detection algorithm to adapt over
time, real-world constraints on the learner typically allow an
adversary to programmatically find vulnerabilities. We consider how an
adversary can
systematically discover blind spots by querying a fixed or learning-based
detector to find a low cost (for some cost function) instance that the
detector does not filter. As a motivating example, consider a spammer who
wishes to
minimally modify a spam message so it is not classified as a spam
(here cost is a measure of how much the spam must be modified). 
There are a variety of domain specific mechanisms an adversary can use to
  observe the classifier's response to a query; \eg\ the spam filter
  of a public email system can be observed by creating a dummy account
  on that system and sending the queries to that account. We assume the
  attacker has access to a membership oracle for the filter.
By observing the responses of the spam detector, the spammer can
search for a modification while using as few queries as possible.

The problem of near-optimal evasion (\ie\ finding a low cost negative
instance with few queries) was first posed by
 \citet{lowd05adversarial}. We continue their investigation by
generalizing their results to the family of \convexClass---classifiers that
partition their instance space into two sets one of which is convex.
The family of \convexClass\ is a particularly important and natural
class to examine, as it includes the family of linear classifiers
studied by Lowd and Meek as well as anomaly detection classifiers
using bounded PCA~\citep{LCD04}, anomaly detection algorithms that
use hyper-sphere boundaries~\citep{PRML}, one-class classifiers that
predict anomalies by thresholding the log-likelihood of a log-concave (or 
uni-modal) density function, and quadratic classifiers of the form
$\dpt^\top \mat{A} \dpt + \vec{b}^\top \dpt + c \ge 0$ if $\mat{A}$ is
semidefinite, to name a few.  Furthermore, the family of \convexClass\ also
includes more complicated bodies such as the countable intersection of
halfspaces, cones, or balls.
We also show that near-optimal evasion does not require reverse
engineering the classifier's decision boundary, which is the approach taken by
\citet{lowd05adversarial} for evading linear classifiers.
Our algorithms for evading
\convexClass\ do not require fully estimating the classifier's
boundary (which is hard in the general convex case; see
\citealp{LearningConvexIsHard}) or otherwise reverse-engineering the
classifier's state. Instead, we directly search for a minimal-cost
evading instance.  Our algorithms require only polynomial-many
queries, with one algorithm solving the linear case with better query
complexity than the previously-published reverse-engineering
technique.

This paper is organized as follows. We overview past work related to
near-optimal evasion in the remainder of this section.
In Section~\ref{sec:problem} we formalize the near-optimal evasion problem, and
review Lowd and Meek's definitions and results. We present algorithms for 
evasion that are near-optimal under \LP[1] cost in
Section~\ref{sec:conv-evasion} and we consider minimizing general \LP[p] costs
in Section~\ref{sec:LPCosts}. We conclude the paper by discussing future
directions for near-optimal evasion of classifiers in
Section~\ref{sec:conclusion}.

\subsection{Related Work}

\citet{lowd05adversarial} first explored near-optimal evasion, and developed a
method that reverse-engineered linear classifiers. Our approach generalizes
their result and improves upon it in three significant ways.
\begin{itemize}
\item We consider a more general family of classifiers: the family of
  \convexClass\ that partition the space of instances into two sets
  one of which is convex.  This family subsumes the family of linear
  classifiers considered by Lowd and Meek.
\item Our approach does not fully estimate the classifier's decision boundary
  (which is generally hard; see \citealt{LearningConvexIsHard}) or
  reverse-engineer the classifier's state; instead, we directly search
  for an instance that the classifier recognizes as negative that is
  close to the desired attack instance (an evading instance of
  near-minimal cost).
\item Even though our algorithms find solutions for a more general
  family of classifiers, our algorithms still only use a limited
  number of queries: they require only a number of queries polynomial
  in the dimension of the instance space. Moreover, our \KMLS\
  (Algorithm~\ref{alg:kmls}) solves the linear case with fewer queries
  than the previously-published reverse-engineering technique.
\end{itemize}

\citet{dalvi04adversarial} use a cost-sensitive game theoretic
approach to preemptively patch a classifier's blind
spots~\citep{dalvi04adversarial}. They construct a modified classifier
designed to detect optimally modified instances. This work is
complementary to our own; we examine optimal evasion strategies while
they have studied mechanisms for adapting the classifier.  In this
paper we assume the classifier is not adapting during evasion.

A number of authors have studied evading sequence-based intrusion
detector systems
(IDSs)~\citep{tan-etal-2002-undermining,wagner-soto-2002-mimicry}.  In
exploring \emph{mimicry attacks} these authors demonstrated that real
IDSs can be fooled by modifying exploits to mimic normal behaviors.
These authors used offline analysis of the IDSs to construct their
modifications; by contrast, our modifications are optimized by
querying the classifier.

The field of active learning also studies
a form of query-based optimization~\citep{ActLearning}.
While active learning and near-optimal evasion are similar in their exploration
of querying strategies, the objectives for these two settings are quite
different (see Section~\ref{sec:opt-evasion}).

\section{Problem Setup}\label{sec:problem}

We begin by introducing our notation and our assumptions.  First, we
assume that instances are represented in a feature space $\xspace$
which is $\dims$-dimensional Euclidean space\footnote{Lowd and Meek
  also consider integer and Boolean-valued instance spaces and derive
  results for several classes of Boolean-valued
  learners.} $\xspace = \reals^\dims$.  Each component of an
instance $\dpt \in \xspace$ is a \term{feature} which we denote as
$\dptc_d$.
We denote each coordinate vector of the form $(0,\ldots,1,\ldots,0)$
with a $1$ only at the \nth{d}{th} feature as $\coordinVect{d}$. We
assume that the feature space representation is known to the adversary
and there are no restrictions on the adversary's queries; \ie\ any
point in feature space $\xspace$ can be queried by the adversary.
These assumptions may not be true in every real-world setting, but they allow us
to investigate strategies taken by a worst-case adversary. We revisit this
assumption in Section~\ref{sec:conclusion}.

We further assume the target classifier $\classifier$ belongs to a
family of classifiers $\classSpace$. Any classifier $\classifier \in
\classSpace$ is a mapping from feature space $\xspace$ to its response
space $\yspace$; \ie\ $\classifier: \xspace \to \yspace$. We assume
the adversary's attack will be against a fixed $\classifier$ so the
learning method and the training data used to select $\classifier$ are
irrelevant. We assume the adversary does not know $\classifier$ but
knows its family $\classSpace$. We also restrict our attention to
binary classifiers and use $\yspace = \set{\negLbl,\posLbl}$.

We assume $\classifier \in \classSpace$ is deterministic and so it
partitions $\xspace$ into 2 sets---the positive class $\xplus =
\set[\prediction{\dpt} = \posLbl]{\dpt \in \xspace}$ and the negative
class $\xminus = \set[\prediction{\dpt} = \negLbl]{\dpt \in \xspace}$.
We take the negative set to be \emph{normal} instances.
We assume that the adversary is aware of at least one
instance in each class, $\dpt^- \in \xminus$ and $\xtarget \in
\xplus$, and can observe $\prediction{\dpt}$ for any $\dpt$ by issuing
a \term{membership query} (see Section~\ref{sec:discuss} for a more detailed
discussion of this assumption).

\subsection{Adversarial Cost}
\label{sec:adCost}

We assume the adversary has a notion of utility over the instance
space which we quantify with a cost function $\adCost : \xspace
\to \realnn$; \eg\ for a spammer this could be edit distance on email
messages. The adversary wishes to optimize $\adCost$ over the
negative class, $\xminus$; \eg\ the spammer wants to send spam that
will be classified as normal email (\negLbl) rather than as spam
(\posLbl). We assume this cost function is a distance to some instance
$\xtarget \in \xplus$ that is most desirable to the adversary.
We focus on the general class of weighted \LP[p] ($0 < p \le \infty$)
cost functions:
\begin{equation}
  \lpCostFunc[\featCost]{p}{\dpt}
  = \left(\sum_{d=1}^{\dims}{\ithCost{d} \left|\dptc_d - \xtargetc_d\right|^p}\right)^{1/p} \enspace,
  \label{eq:weightedL1}
\end{equation}
where $ 0 < \ithCost{d} < \infty$ is the relative cost the adversary
associates with the \nth{d}{th} feature. We also consider the
cases when some features have $\ithCost{d}=0$ (adversary
doesn't care about the \nth{d}{th} feature) or $\ithCost{d}=\infty$
(adversary requires the \nth{d}{th} feature to match $\dptc^A_d$).
Weighted \LP[1] costs are particularly appropriate
for many adversarial problems since costs are assessed based on the
degree to which a feature is altered and the adversary typically is
interested in some features more than others. Unless stated otherwise, we take
``\LP[1] cost'' to mean a weighted \LP[1] cost in the sequel. The $\LP[1]$-norm
is a natural measure of edit distance for email spam, while larger weights
can model tokens that are more costly to remove (\eg\ a payload URL).
As with Lowd and Meek, we focus primarily on \LP[1] costs in
Section~\ref{sec:conv-evasion} before exploring general \LP[p] costs in
Section~\ref{sec:LPCosts}.  We use
$\ball{C}{\adCost} = \set[\adCostFunc{\dpt} \le C]{\dpt \in \xspace}$
to denote the cost-ball (or sublevel set) with cost no more than $C$.
For instance, $\ball{C}{\lpCost{1}}$ is the set of instances that do
not exceed an \LP[1] cost of $C$ from the target $\xtarget$.
\citet{lowd05adversarial} define \term{minimal adversarial cost (\MAC)} of a
classifier $\classifier$ to be the value
\[
  \function{\MAC}{\classifier,\adCost}
  \defAs \inf_{\dpt \in \xminus[\classifier]}\left[ \adCostFunc{\dpt}
  \right] \enspace;
\]
\ie\ the greatest lower bound on the cost obtained by any negative
instance.  They further define a data point to be an
$\multGoal$-approximate \emph{instance of minimal adversarial cost
  (\kIMAC)} if it is a negative instance with a cost no more than a
factor $(1+\multGoal)$ of the \MAC; \ie\ every \kIMAC\ is a member of
the set\footnote{We use `\kIMAC' to refer both to this set and its
  members. The meaning will be clear from the context.}
\begin{equation}
  \label{eq:kIMAC}
  \function{\kIMAC}{\classifier,\adCost} \defAs
  \set[\adCostFunc{\dpt} \le 
  (1+\multGoal)\cdot\function{\mathrm{MAC}}{\classifier,\adCost}]
  {\dpt \in \xminus[\classifier]}
  \enspace .
\end{equation}
The adversary's goal is to find an \kIMAC\
efficiently, while issuing as few queries as possible.

\subsection{Search Terminology}
\label{sec:binSearch}

The notion of near-optimality introduced in Eq.~\eqref{eq:kIMAC} is
that of \term{multiplicative optimality}; \ie\ an \kIMAC\ must have
a cost within a factor of $(1+\multGoal)$ of the \MAC. However, the
results of this paper can also be immediately adopted for
\term{additive optimality} in which we seek instances with cost no more than
$\addGoal>0$ \emph{greater} than the \MAC.  To differentiate between
these notions of optimality, we will use the notation
$\kIMAC[\multGoal]^{(\ast)}$ to refer to the set in
Eq.~\eqref{eq:kIMAC} and define an analagous set
$\kIMAC[\addGoal]^{(+)}$ for addative optimality as
\begin{equation}
  \label{eq:additiveIMAC}
  \function{\kIMAC[\addGoal]^{(+)}}{\classifier,\adCost} \defAs
  \set[\adCostFunc{\dpt} \le 
  \addGoal + \function{\mathrm{MAC}}{\classifier,\adCost}]
  {\dpt \in \xminus[\classifier]}
  \enspace .
\end{equation}
We use the terms $\kIMAC[\multGoal]^{(\ast)}$ and
$\kIMAC[\addGoal]^{(+)}$ to refer both to the sets defined in
Eq.~\eqref{eq:kIMAC} and~\eqref{eq:additiveIMAC} as well as the
members of them---the usage will be clear from the context.

Either notion of optimality allows us to efficiently use bounds on the
\MAC\ to find an $\kIMAC[\multGoal]^{(\ast)}$ or an
$\kIMAC[\addGoal]^{(+)}$. Suppose there is a negative instance,
$\dpt$, with cost \maxCost[] and all instances with cost no more than
\minCost[] are positive; \ie\ \maxCost[] is an upper bound and
\minCost[] is a lower bound on the \MAC: $\minCost[] \le
\function{\MAC}{\classifier,\adCost} \le \maxCost[]$. Then the
negative instance $\dpt$ is \multGoal-multiplicatively optimal if
$\maxCost/\minCost \le (1+\multGoal)$ whereas it is
\addGoal-additively optimal if $\maxCost - \minCost \le \addGoal$. In
the sequel, we will consider algorithms that can achieve either
additive or multiplicative optimality. These algorithms employ binary
search strategies to iteratively reduce the gap between any \maxCost[]
and \minCost[]. Namely, if we can determine whether an intermediate
cost establishes a new upper or lower bound on \MAC, then our binary
search strategies can iteratively reduce the \nth{t}{th} gap between
$\maxCost[t]$ and $\minCost[t]$. We now provide common terminology for
the binary search and in Section~\ref{sec:conv-evasion} we use
convexity to establish a new bound at each iteration.

\begin{lemma}
  If an algorithm can provide bounds $\minCost[] \le
  \function{\MAC}{\classifier,\adCost} \le \maxCost[]$, then this
  algorithm has achieved (1) $(\maxCost[]-\minCost[])$-additive
  optimality and (2) $(\frac{\maxCost[]}{\minCost[]} - 1)$-multiplicative
  optimality.
\end{lemma}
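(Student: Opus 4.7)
The plan is to unpack the hypothesis into its two operative pieces and then apply the definitions of $\kIMAC^{(\ast)}_{\multGoal}$ and $\kIMAC^{(+)}_{\addGoal}$ directly. Following the discussion preceding the lemma, I would first make explicit what it means for the algorithm to ``provide'' the bounds $\minCost \le \function{\MAC}{\classifier,\adCost} \le \maxCost$. The upper bound is witnessed by a concrete negative instance $\dpt \in \xminus[\classifier]$ whose cost is $\adCostFunc{\dpt} = \maxCost$ (or at most $\maxCost$; without loss of generality I take it to equal $\maxCost$ since otherwise we could tighten the upper bound). The lower bound is the assertion that every point of cost strictly less than $\minCost$ is positive, from which $\function{\MAC}{\classifier,\adCost} \ge \minCost$.

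For additive optimality, I would then simply compute
\[
\adCostFunc{\dpt} - \function{\MAC}{\classifier,\adCost}
\;\le\; \maxCost - \minCost,
\]
using $\adCostFunc{\dpt} = \maxCost$ on the left and the lower bound $\function{\MAC}{\classifier,\adCost} \ge \minCost$ on the right. Setting $\addGoal = \maxCost - \minCost$ this rearranges to $\adCostFunc{\dpt} \le \addGoal + \function{\MAC}{\classifier,\adCost}$, and since $\dpt$ is negative this shows $\dpt \in \function{\kIMAC[\addGoal]^{(+)}}{\classifier,\adCost}$ via Eq.~\eqref{eq:additiveIMAC}.

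For multiplicative optimality the argument is parallel but in ratio form. Since $\minCost > 0$ in any nontrivial instance (otherwise the \MAC{} is already $0$ and the statement is vacuous), I would write
\[
\frac{\adCostFunc{\dpt}}{\function{\MAC}{\classifier,\adCost}}
\;\le\; \frac{\maxCost}{\minCost}
\;=\; 1 + \multGoal,
\]
so that $\adCostFunc{\dpt} \le (1+\multGoal)\cdot\function{\MAC}{\classifier,\adCost}$ and hence $\dpt \in \function{\kIMAC[\multGoal]^{(\ast)}}{\classifier,\adCost}$ via Eq.~\eqref{eq:kIMAC}.

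There is no real obstacle here; the lemma is essentially a restatement of the definitions once one observes that an ``upper bound on \MAC'' produced by a search procedure is always accompanied by a witnessing negative point. The only edge case to flag is the degenerate situation $\minCost = 0$, where multiplicative optimality is undefined; in that case the additive statement still holds and the multiplicative claim should be read vacuously.
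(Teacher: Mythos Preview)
Your proposal is correct and mirrors the paper's own treatment: the paper does not give a separate proof block for this lemma but instead establishes it in the paragraph immediately preceding the statement, by noting that the upper bound is witnessed by a negative instance $\dpt$ of cost $\maxCost$ and that the lower bound $\minCost$ on the \MAC\ then forces $\dpt$ to satisfy the additive and multiplicative optimality conditions directly from the definitions. Your handling of the edge case $\minCost = 0$ is also appropriate and consistent with the paper's later remarks.
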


In the \nth{t}{th} iteration of an additive binary search, the
\term{additive gap} between the \nth{t}{th} bounds is given by
$\gap{t}^{(+)} = \maxCost[t] - \minCost[t]$ with $\gap{0}^{(+)}$
defined accordingly by the initial bounds \maxCost\ and \minCost.  The
search uses a proposal step of $\proposal{t} =(\maxCost[t] +
  \minCost[t])/2$, a stopping criterion of $\gap{t}^{(+)} \le
\addGoal$ and achieves \addGoal-additive optimality in
\begin{equation}
  \addSteps = \left\lceil\log_2\left[\frac{\gap{0}^{(+)}}{\addGoal}\right]
  \right\rceil
  \label{eq:Ladd}
\end{equation}
steps. Binary search has the best worst-case query complexity for
achieving \addGoal-additive optimality.

Binary search can also be used for multiplicative optimality by
searching in exponential space. By rewriting our upper and lower
bounds as $\maxCost[]=2^a$ and $\minCost[]=2^b$, the multiplicative
optimality condition becomes $a-b \le \log_2(1+\multGoal)$, an
additive optimality condition. Thus, binary search on the exponent
achieves \multGoal-multiplicative optimality and does so with the
fewest queries. The \term{multiplicative gap} of the \nth{t}{th}
iteration is $\gap{t}^{(\ast)} = \maxCost[t] / \minCost[t]$ with
$\gap{0}^{(\ast)}$ defined accordingly by the initial bounds \maxCost\
and \minCost.  The \nth{t}{th} query is $\proposal{t} =
\sqrt{\maxCost[t]\cdot\minCost[t]}$, the stopping criterion is
$\gap{t}^{(\ast)} \le 1+\multGoal$ and achieves
\multGoal-multiplicative optimality in
\begin{equation}
  \multSteps = \left\lceil\log_2\left[\frac{\log_2\left(\gap{0}^{(\ast)}\right)}{\log_2(1+\multGoal)}\right]\right\rceil
  \label{eq:Lmult}
\end{equation}
steps. Multiplicative optimality only makes sense when both \maxCost\
and \minCost\ are strictly positive.

Binary searches for additive and multiplicative optimality differ in
their proposal step and their stopping criterion. For additive
optimality, the proposal is the arithmetic mean $\proposal{t} =
(\maxCost[t]+\minCost[t])/2$ and search stops when
$\gap{t}^{(+)} \le \addGoal$, whereas for multiplicative optimality,
the proposal is the geometric mean $\proposal{t} =
\sqrt{\maxCost[t]\cdot\minCost[t]}$ and search stops when
$\gap{t}^{(\ast)} \le 1+\multGoal$.  For the remainder of this paper,
we will address \multGoal-multiplicative optimality for an \kIMAC\
(except where explicitly noted) and define $\steps =
\multSteps$ and $\gap{t} = \gap{t}^{(\ast)}$.  Nonetheless,
our algorithms are immediately adapted to additive optimality by
simply changing the proposal step, stopping condition, and the
definitions of $\multSteps$ and \gap{t}. 

\subsection{Near-Optimal Evasion}
\label{sec:opt-evasion}

\citet{lowd05adversarial} introduce the concept of \term{adversarial classifier
  reverse engineering (ACRE) learnability} to quantify the difficulty
of finding an \kIMAC\ instance for a particular family of classifiers
\classSpace, and a family of adversarial costs
\adCostSpace. Using our notation, their
definition of \kACRE\ is
\begin{quote}
  A set of classifiers \classSpace\ is \kACRE\ under a set of cost
  functions \adCostSpace\ if an algorithm exists such that for all
  $\classifier \in \classSpace$ and $\adCost \in \adCostSpace$, it can
  find a $\dpt \in \function{\kIMAC}{\classifier,\adCost}$ using only
  polynomially many membership queries in $\dims$, the encoded size of
  $\classifier$, and the encoded size of $\dpt^+$ and $\dpt^-$.
\end{quote}
In generalizing their result, we slightly alter their definition of
query complexity. First, to quantify query complexity we only use the
dimension $\dims$ and the number of steps \multSteps\ required by a
univariate binary search to narrow the gap between initial bounds
\minCost\ and \maxCost\ to less than $(1+\multGoal)$.\footnote{Using
  the encoded sizes of $\classifier$, $\dpt^+$, and $\dpt^-$ in
  defining \kSearchable\ is problematic. For our purposes, it is clear
  that the encoded size of both $\dpt^+$ and $\dpt^-$ is $\dims$ so it
  is unnecessary to include additional terms for their size. Further
  we allow for families of non-parametric classifiers for which the
  notion of \emph{encoding size} is ill-defined but is also
  unnecessary for the algorithms we present.  In extending beyond
  linear and parametric family of classifiers, it is not
  straightforward to define the encoding size of our classifier
  $\classifier$. One could use notions such as the \term{VC-dimension}
  of $\classSpace$ or its \term{covering number}~\citep{BartlettBook}
  but it is unclear why size of the classifier is important in
  quantifying the complexity of \kIMAC\ search. Moreover, as we
  demonstrate in this paper, there are non-parametric families of
  classifiers for which \kIMAC\ search is polynomial in $\dims$
  alone.}  Second, we assume the adversary only has two initial points
$\dpt^- \in \xminus$ and $\xtarget \in \xplus$ (the original setting
required a third $\dpt^+ \in \xplus$): we restrict our setting to the
case of $\dpt^A \in \xplus$, yielding simpler search
procedures.\footnote{However, as is apparent in the algorithms we
  demonstrate, using $\dpt^+ = \dpt^A$ makes the attacker less covert
  since it is significantly easier to infer the attacker's intentions
  based on their queries.  (Covertness is not an explicit goal in
  \kIMAC\ search but it would be a requirement of many real-world
  attackers.) However, since our goal is not to design real attacks
  but rather analyze the best possible attack so as to understand our
  classifier's vulnerabilities, covertness can be ignored.}  Finally,
our algorithms do not reverse engineer the decision boundary, so
``ACRE'' would be a misnomer here.  Instead we refer to the overall
problem as \term{Near-Optimal Evasion} and replace \kACRE\ with the
following definition of \kSearchable.
\begin{quote}
  A family of classifiers \classSpace\ is \term{\kSearchable} under a
  family of cost functions \adCostSpace\ if for all $\classifier \in
  \classSpace$ and $\adCost \in \adCostSpace$, there is an algorithm
  that finds $\dpt \in \function{\kIMAC}{\classifier,\adCost}$ using
  polynomially many membership queries in $\dims$ and \steps. We will
  refer to such an algorithm as \term{efficient}.
\end{quote}

Unlike Lowd and Meek's approach, our algorithms construct queries to
provably find an \kIMAC\ without reverse engineering the classifier's
decision boundary.
Efficient query-based reverse engineering for
$\classifier \in \classSpace$ is sufficient for minimizing
$\adCost$ over the estimated negative space.  However, generally
reverse engineering (\term{active learning}) is an expensive approach
for near-optimal evasion, requiring query complexity that is exponential in
the feature space dimension for general convex
classes~\citep{LearningConvexIsHard},
while finding an \kIMAC\ need not be---the requirements for finding an
\kIMAC\ differ significantly from the objectives of reverse
engineering approaches such as active learning.  Both approaches use queries to
reduce the size of version space $\hat{\classSpace} \subset
\classSpace$, the set of classifiers consistent with the adversary's
membership queries. However reverse engineering approaches minimize
the expected number of disagreements between members of
$\hat{\classSpace}$.
In contrast, to find an \kIMAC, we only need to provide a single
instance $\dpt^\dagger \in \function{\kIMAC}{\classifier,\adCost}$ for
all $\classifier \in \hat{\classSpace}$, while leaving the classifier
largely unspecified; \ie
\[
\bigcap_{\classifier \in \hat{\classSpace}} \function{\kIMAC}{\classifier,\adCost} \neq \emptyset \enspace.
\]
This objective allows the classifier to be unspecified in much of
$\xspace$. We present algorithms for \kIMAC\ search on a family of
classifiers that generally cannot be efficiently reverse
engineered---the queries we construct necessarily elicit an \kIMAC\
only; the classifier itself will be underspecified in large regions of
$\xspace$ so our techniques do not reverse engineer the classifier.

\subsection{Multiplicative vs. Additive Optimality}

Additive and multiplicative optimality are intrinsically related by
the fact that the optimality condition for multiplicative optimality
$\maxCost[t] / \minCost[t] \le 1+\multGoal$ can be rewritten as
additive optimality condition $\log_2\maxCost[t] - \log_2\minCost[t]
\le \log_2(1+\multGoal)$. From this equilence we can take $\addGoal =
\log_2(1+\multGoal)$ and use the additive optimality criterion on the
logarithm of the cost. However, this equivalence also leads to two
differnces between these notions of optimality.

First, multiplicative optimality only makes sense when 
\minCost\ is strictly positive (we will need this assumption for our
algorithms) whereas additive optimality can still be achieved if
$\minCost = 0$. In this special case, $\xtarget$ is on the boundary of
\xplus\ and there is no $\kIMAC[\multGoal]^{(\ast)}$ for any
$\multGoal > 0$. Practically speaking though, this is a minor
hinderance---as we demonstrate in Section~\ref{sec:specialPosCases},
there is an algorithm that can efficiently establish any lower bound
\minCost\ if such a lower bound exists.

Second, the additive optimality criterion is not \term{scale
  invariant} (\ie\ any instance $\dpt^\dagger$ that satisfies the
optimality criterion for cost \adCost\ also satisfies it for
$\function{\adCost^\prime}{\dpt} = s \cdot \adCostFunc{\dpt}$ for any
$s > 0$) whereas multiplictative optimality is scale
invariant. Additive optimility is, however, \term{shift
invariant} (\ie\ any instance $\dpt^\dagger$ that satisfies the
optimality criterion for cost \adCost\ also satisfies it for
$\function{\adCost^\prime}{\dpt} = s + \adCostFunc{\dpt}$ for any $s \ge 0$)
whereas multiplicative optimality is not. Scale invariance is
typically more salient because if the cost function is also scale
invariant (all proper norms are) then the optimality condition is
invariant to a rescaling of the underlying feature space; \eg\ a
change in units for all features. Thus, multiplicative optimality is
a unitless notion of optimality whereas additive optimality is not.
The following result is a consequence of additive optimality's lack of scale
invariance.

\begin{theorem}
  \label{thm:addNoEff}
  If for some hypothesis space $\classSpace$, cost function \adCost,
  and any initial bounds $0 < \minCost < \maxCost$ on the
  $\function{\MAC}{\classifier,\adCost}$ for some $\classifier
  \in \classSpace$, there exists some $\bar{\multGoal} > 0$ such that
  no efficient query-based algorithm can find an
  $\kIMAC[\multGoal]^{(\ast)}$ for any $0 < \multGoal \le
  \bar{\multGoal}$, then there is no efficient query-based algorithm
  that can find a $\kIMAC[\addGoal]^{(+)}$ for any $0 < \addGoal \le
  \bar{\multGoal} \cdot \maxCost$.
\end{theorem}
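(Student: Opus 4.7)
The plan is to prove Theorem~\ref{thm:addNoEff} by contrapositive: assuming an efficient query-based algorithm $A$ exists for some additive tolerance in the prescribed range, I will exhibit an efficient query-based algorithm for some multiplicative tolerance $\multGoal \le \bar{\multGoal}$, contradicting the hypothesis. The algebraic engine driving the reduction is the observation that for any $\dpt \in \xminus$,
\begin{equation*}
\adCostFunc{\dpt} \le \function{\MAC}{\classifier,\adCost} + \addGoal
\;\Longleftrightarrow\;
\adCostFunc{\dpt} \le \left(1 + \frac{\addGoal}{\function{\MAC}{\classifier,\adCost}}\right)\function{\MAC}{\classifier,\adCost},
\end{equation*}
and, using the initial lower bound $\function{\MAC}{\classifier,\adCost} \ge \minCost > 0$, the right-hand side is at most $(1+\addGoal/\minCost)\function{\MAC}{\classifier,\adCost}$. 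Thus any $\addGoal$-additively optimal instance is automatically $(\addGoal/\minCost)$-multiplicatively optimal, reducing the task of building a multiplicative solver to that of calling an additive solver with a judiciously chosen tolerance.

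The construction is then direct. Given any target multiplicative tolerance $\multGoal \in (0,\bar{\multGoal}]$, I would set $\addGoal := \multGoal \cdot \minCost$ and invoke $A$ with this parameter. The choice is admissible because
\[
\multGoal \cdot \minCost \;\le\; \bar{\multGoal} \cdot \minCost \;<\; \bar{\multGoal} \cdot \maxCost,
\]
using the strict inequality $\minCost < \maxCost$ built into the initial bounds. The output $\dpt^\dagger \in \xminus$ then satisfies $\adCostFunc{\dpt^\dagger} \le \function{\MAC}{\classifier,\adCost} + \multGoal\cdot\minCost \le (1+\multGoal)\function{\MAC}{\classifier,\adCost}$, placing $\dpt^\dagger \in \function{\kIMAC[\multGoal]^{(\ast)}}{\classifier,\adCost}$. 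Because the derived procedure makes exactly one call to $A$, its query complexity is that of $A$, which is polynomial in $\dims$ and $\addSteps$; by the mutually logarithmic relationship between $\addSteps$ and $\multSteps$ in~\eqref{eq:Ladd} and~\eqref{eq:Lmult}, it is also polynomial in $\dims$ and $\multSteps$, meeting the definition of an \emph{efficient} algorithm for \kSearchable. This contradicts the hypothesis and completes the proof.

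The step demanding the most care is the admissibility of the chosen tolerance: the entire reduction hinges on the fact that $\minCost < \maxCost$ keeps $\multGoal \cdot \minCost$ strictly inside $(0,\bar{\multGoal}\cdot\maxCost]$ for every $\multGoal \in (0,\bar{\multGoal}]$. The argument also relies essentially on $\minCost > 0$; if $\xtarget$ lies on the boundary of $\xplus$ then the conversion factor $\addGoal/\minCost$ blows up, matching exactly the degenerate regime discussed before the theorem in which no $\kIMAC[\multGoal]^{(\ast)}$ with $\multGoal > 0$ can exist and explaining why additive optimality can persist as a meaningful objective even where multiplicative optimality collapses.
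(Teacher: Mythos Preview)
Your contrapositive setup is right, but there is a quantifier slip in the reduction. Negating the conclusion hands you an efficient algorithm $A$ for \emph{one specific} additive tolerance $\addGoal_0 \in (0,\bar{\multGoal}\cdot\maxCost]$; it does not give you a family of algorithms indexed by $\addGoal$. So when you write ``given any target $\multGoal \in (0,\bar{\multGoal}]$, set $\addGoal := \multGoal\cdot\minCost$ and invoke $A$ with this parameter,'' you are calling $A$ at a tolerance it was never promised to handle. Your admissibility check only confirms that $\multGoal\cdot\minCost$ lands in the interval $(0,\bar{\multGoal}\cdot\maxCost]$, not that it coincides with the one $\addGoal_0$ for which $A$ actually works.

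The paper runs the reduction in the opposite direction: it fixes the given $\addGoal$, runs the additive algorithm once, and then \emph{defines} $\multGoal := \addGoal/\function{\MAC}{\classifier,\adCost}$, so that the output is automatically in $\kIMAC[\multGoal]^{(\ast)}$; the upper bound $\function{\MAC}{\classifier,\adCost} \le \maxCost$---not your lower bound $\minCost$---is then what is invoked to tie $\multGoal$ back to $\bar{\multGoal}$. Your route through $\minCost$ is a genuinely different conversion, and even with the quantifiers repaired (taking $\multGoal_0 := \addGoal_0/\minCost$ for the fixed $\addGoal_0$) you would only secure $\multGoal_0 \le \bar{\multGoal}$ under the stronger premise $\addGoal_0 \le \bar{\multGoal}\cdot\minCost$, yielding at best the weaker statement with $\minCost$ in place of $\maxCost$.
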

\begin{proof}
  We will proceed by contraposition. If there is an efficient
  query-based algorithm that can find a $\dpt \in
  \kIMAC[\addGoal]^{(+)}$ for some $0 < \addGoal \le \bar{\multGoal}
  \cdot \maxCost$, then, by definition of $\kIMAC[\addGoal]^{(+)}$,
  $\adCostFunc{\dpt} \le \addGoal +
  \function{\MAC}{\classifier,\adCost}$. Taking $\addGoal = \multGoal
  \cdot \function{\MAC}{\classifier,\adCost}$ for some $\multGoal >
  0$, we have equivalently achieved $\adCostFunc{\dpt} \le (1 +
  \multGoal) \function{\MAC}{\classifier,\adCost}$; \ie\ $\dpt \in
  \kIMAC[\multGoal]^{(\ast)}$. Moreover, since
  $\function{\MAC}{\classifier,\adCost} \le \maxCost$, this efficient
  algorithm is able to find a $\kIMAC[\multGoal]^{(\ast)}$ for some
  $\multGoal \le \bar{\multGoal}$.
\end{proof}

\begin{corollary}
  If for some hypothesis space $\classSpace$, cost function \adCost,
  there exists some $\bar{\multGoal} > 0$ such that no efficient
  query-based algorithm can find an $\kIMAC[\multGoal]^{(\ast)}$ for
  any $0 < \multGoal \le \bar{\multGoal}$, then there is no efficient
  query-based algorithm that can find a $\kIMAC[\addGoal]^{(+)}$ for
  any $\addGoal$.
\end{corollary}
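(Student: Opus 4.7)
The plan is to deduce the corollary as a direct consequence of Theorem~\ref{thm:addNoEff} by exploiting the freedom to choose the initial bounds. Given an arbitrary target additive tolerance $\addGoal > 0$, I will produce an instance of the theorem whose conclusion rules out efficient $\kIMAC[\addGoal]^{(+)}$ algorithms, and then invoke the theorem.

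First, I would fix an arbitrary $\addGoal > 0$ and aim to show that no efficient query-based algorithm finds an element of $\kIMAC[\addGoal]^{(+)}$. By the corollary's hypothesis there is some fixed $\bar{\multGoal}>0$ such that no efficient algorithm finds a $\kIMAC[\multGoal]^{(\ast)}$ for any $0<\multGoal\le\bar{\multGoal}$; crucially, since this hardness is a property of the pair $(\classSpace,\adCost)$ and multiplicative optimality is scale invariant, the hypothesis holds uniformly over initial bounds $0<\minCost<\maxCost$ on $\function{\MAC}{\classifier,\adCost}$. Next I would select initial bounds with $\maxCost \ge \addGoal/\bar{\multGoal}$, so that $\bar{\multGoal}\cdot\maxCost \ge \addGoal$; since the adversary's MAC can in principle be arbitrarily large, such valid bounds always exist.

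With these bounds fixed, the premise of Theorem~\ref{thm:addNoEff} is satisfied, and its conclusion asserts that no efficient query-based algorithm can find a $\kIMAC[\addGoal']^{(+)}$ for any $0<\addGoal'\le\bar{\multGoal}\cdot\maxCost$. Because we arranged $\addGoal\le\bar{\multGoal}\cdot\maxCost$, our chosen $\addGoal$ falls inside this range, and we conclude that no efficient algorithm finds a $\kIMAC[\addGoal]^{(+)}$. Since $\addGoal$ was arbitrary, this completes the argument.

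The only delicate point is justifying that the multiplicative hardness in the corollary's hypothesis transfers to the "for any initial bounds" phrasing required by Theorem~\ref{thm:addNoEff}. This is not really an obstacle, since $\kIMAC[\multGoal]^{(\ast)}$ is defined purely in terms of $\classifier$ and $\adCost$ (not in terms of $\minCost,\maxCost$), and scale invariance of multiplicative optimality means that rescaling the bounds does not change what the algorithm must achieve. Once this is noted, the corollary is essentially a restatement of the theorem in which $\maxCost$ is driven to infinity to cover every $\addGoal$.
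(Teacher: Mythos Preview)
Your proposal is correct and follows essentially the same approach as the paper: apply Theorem~\ref{thm:addNoEff} and exploit that $\maxCost$ may be taken arbitrarily large (you make this explicit by choosing $\maxCost \ge \addGoal/\bar{\multGoal}$), so that the range $(0,\bar{\multGoal}\cdot\maxCost]$ covers any given $\addGoal$. The paper's one-line proof simply asserts this; your version spells out the quantifier game and the scale-invariance justification more carefully, but the underlying argument is the same.
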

\begin{proof}
  This follows from Theorem~\ref{thm:addNoEff} since $\maxCost$ may be
  arbitrarily large and $\bar{\multGoal} > 0$.
\end{proof}

This corollary demonstrates that the lack of scale invariance
in the additive optimality condition allows for the feature space to
be arbitrarily rescaled until any fixed level of additive
optimality can no longer be achieved; \ie\ the units of the cost
determine whether a particular level of additive accuracy can be
achieved whereas multiplicative costs are unitless.
\section{Evasion of Convex Classes}
\label{sec:conv-evasion}

\begin{figure}[t]
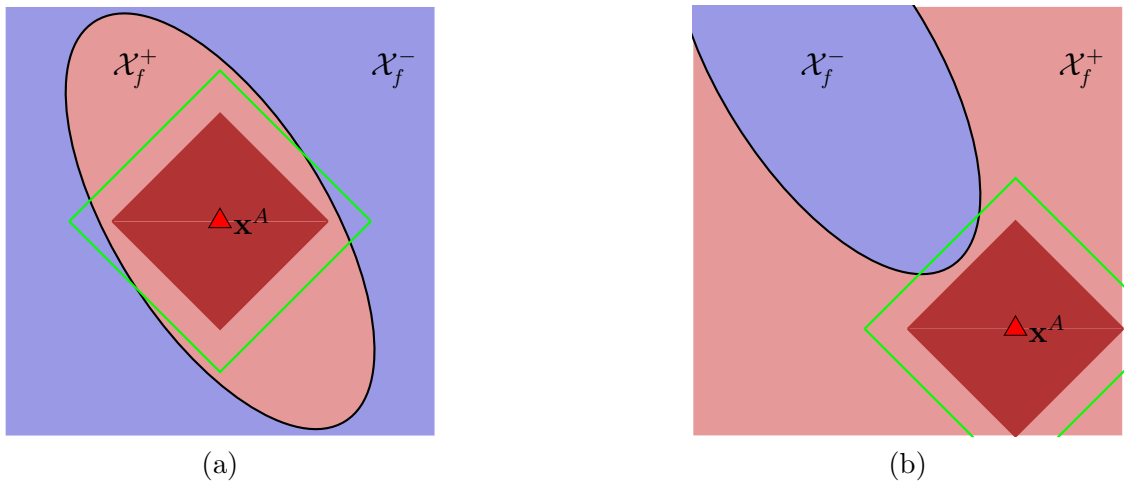

  \begin{minipage}{.4\linewidth}
    \begin{center}
      \renewcommand{\picwidth}{0.47\linewidth}
      \psset{unit=\picwidth}
      \pspicture*(-1,-1)(1,1)
      \psframe[linestyle=none,fillstyle=solid,fillcolor=negColor](-1,-1)(1,1)
      \pscustom[linecolor=boundColor,fillstyle=solid,fillcolor=posColor]{
        \rotate{32}
        \psellipse(0,0)(.5,1.1)
      }
      \lpcontourplot[linecolor=innerBallCol,fillstyle=solid,fillcolor=innerBallCol]{1}{0.5}
      \lpcontourplot[linecolor=green]{1}{0.7}
      \placeTarget
      \rput[lt]{*0}(-0.5,0.8){\large $\xplus$}
      \rput[lt]{*0}(0.7,0.8){\large $\xminus$}
      \endpspicture \\
      \refstepcounter{subfig}
      \thesubfig
      \label{fig:convexPos}
    \end{center}
  \end{minipage} \hfill
  \begin{minipage}{.4\linewidth}
    \begin{center}
      \renewcommand{\picwidth}{0.47\linewidth}
      \psset{unit=\picwidth}
      \pspicture*(-1.5,-.5)(.5,1.5)
      \psframe[linestyle=none,fillstyle=solid,fillcolor=posColor](-1.5,-.5)(.5,1.5)
      \pscustom[linecolor=boundColor,fillstyle=solid,fillcolor=negColor]{
        \rotate{32}
        \psellipse(-0.1,1.5)(.5,1.1)
      }
      \lpcontourplot[linecolor=innerBallCol,fillstyle=solid,fillcolor=innerBallCol]{1}{0.5}
      \lpcontourplot[linecolor=green]{1}{0.7}
      \placeTarget
      \rput[lt]{*0}(-1.0,1.3){\large $\xminus$}
      \rput[lt]{*0}(0.2,1.3){\large $\xplus$}
      \endpspicture \\
      \refstepcounter{subfig}
      \thesubfig
      \label{fig:convexNeg}
    \end{center}
  \end{minipage}
  \caption{Geometry of convex sets and \LP[1] balls. %
    \ref{fig:convexPos} If the positive set \xplus\ is convex, finding an \LP[1] ball
    contained within \xplus establishes a lower bound on the cost, otherwise
    at least one of the \LP[1] ball's corners witnesses an upper bound.
    \ref{fig:convexNeg} If the negative set \xminus\ is convex, we can establish upper and
    lower bounds on the cost by determining whether or not an \LP[1] ball
    intersects with \xminus, but this intersection need not include
    any corner of the ball.}
  \label{fig:convexClassifiers}
\end{figure}

We generalize \kSearchability\ to the family of \term{\convexClass}
$\classSpace^\mathrm{convex}$ that partition the
feature space $\xspace$ into a positive and negative class, one of
which is convex.  The \convexClass\ include the linear classifiers
studied by \citet{lowd05adversarial},
anomaly detectors using bounded PCA~\citep{LCD04} and that
use hyper-sphere boundaries~\citep{PRML},
one-class classifiers that predict anomalies by thresholding the
log-likelihood of a log-concave (or uni-modal) density function, and
quadratic classifiers of the form $\dpt^\top \mat{A} \dpt +
\vec{b}^\top \dpt + c \ge 0$ if $\mat{A}$ is semidefinite.  The
\convexClass\ also include complicated bodies such as any
intersections of a countable number of halfspaces, cones, or balls.

Restricting $\classSpace$ to be the family of \convexClass\ simplifies
\kIMAC\ search. When the negative class $\xminus$ is convex, the
problem reduces to minimizing a (convex) function $\adCost$
constrained to a convex set---if $\xminus$ were known to the
adversary, this simply corresponds to solving a convex program.
When the positive class $\xplus$ is convex, however, our task is to
minimize the (convex) function $\adCost$ outside of a convex set; this
is generally a hard problem (\cf\ Section~\ref{sec:MLS-L2} where we show
that minimizing \LP[2] cost can require exponential query complexity). 
Nonetheless for certain cost functions \adCost, it
is easy to determine whether a particular cost ball
$\ball{C}{\adCost}$ is completely contained within a convex set. This
leads to efficient approximation algorithms.

We construct efficient algorithms for query-based optimization of the
\LP[1] cost of Eq.~\eqref{eq:weightedL1} for the \convexClass.  There
appears to be an asymmetry depending on whether the positive or
negative class is convex as illustrated in
Figure~\ref{fig:convexClassifiers}. When the positive set is convex,
determining whether an \LP[1] ball
$\ballNoStretch{C}{\lpCost[\featCost]{1}} \subset \xplus$ only
requires querying the vertices of the ball as depicted in
Figure~\subref{fig:convexClassifiers}{fig:convexPos}. When the
negative set is convex, determining whether or not
$\ballNoStretch{C}{\lpCost[\featCost]{1}} \cap \xminus = \emptyset$ is
non-trivial since the intersection need not occur at a vertex as
depicted in Figure~\subref{fig:convexClassifiers}{fig:convexNeg}. We
present an efficient algorithm for the optimizing a \LP[1] cost when
$\xplus$ is convex and a polynomial random algorithm for optimizing
any convex cost when $\xminus$ is convex.

The algorithms we present achieve multiplicative optimality via binary
search. We use Eq.~\eqref{eq:Lmult} to define \steps\ as the number of
phases required by our binary search to reduce the multiplicative gap
to less than $1+\epsilon$. We also use $\maxCost =
\lpCostFunc[\featCost]{1}{\dpt^-}$ as an initial upper bound on the
\MAC\ and assume there is some $\minCost > 0$ that lower bounds the
\MAC\ (\ie\ $\xtarget$ is in the interior of $\xplus$).  This condition
eliminates the case where $\xtarget$
is on the boundary of $\xplus$ where
$\function{\MAC}{\classifier,\adCost}=0$ and
$\function{\kIMAC}{\classifier,\adCost}=\emptyset$---in this
degenerate case, no algorithm can find an \kIMAC\ since there are
negative instances arbitrarily close to $\xtarget$.

\subsection{\kIMAC\ Search for a Convex $\xplus$}
\label{sec:pos-convex}

Solving the \kIMAC\ Search problem when $\xplus$ is hard in
the general case of convex cost $\adCostFunc{\cdot}$. We demonstrate
algorithms for the \LP[1] cost of Eq.~\eqref{eq:weightedL1}
that solve the problem as a binary search. Namely, given initial costs
\minCost\ and \maxCost\ that bound the \MAC, our algorithm can
efficiently determine whether $\ball{C}{\lpCost{1}} \subset \xplus$
for any intermediate cost $\minCost[t] < C_t < \maxCost[t]$. If the
\LP[1] ball is contained in $\xplus$, then $C_t$ becomes the new lower
bound $\minCost[t+1]$. Otherwise $C_t$ becomes the new upper bound
$\maxCost[t+1]$.  Since our objective Eq.~\eqref{eq:kIMAC} is to
obtain multiplicative optimality, our steps will be
$C_t=\sqrt{\minCost[t]\cdot\maxCost[t]}$. We now explain how we
exploit the properties of the \LP[1] ball and convexity of
$\xplus$ to efficiently determine whether $\ball{C}{\lpCost{1}}
\subset \xplus$ for any $C$. We also discuss practical aspects of our
algorithm and extensions to other \LP\ cost functions.

The existence of an efficient query algorithm relies on three facts:
(1) $\xtarget \in \xplus$; (2) every \LP[1] cost $C$-ball
centered at $\xtarget$ intersects with $\xminus$ only if at least one
of its vertices is in $\xminus$; and (3) $C$-balls of \LP[1]
costs only have $2\cdot \dims$ vertices. The vertices of the 
\LP[1] ball $\ball{C}{\lpCost{1}}$ are axis-aligned instances
differing from $\xtarget$ in exactly one feature (\eg\ the
\nth{d}{th} feature) and can be expressed in the form
\begin{equation}
  \xtarget \pm \frac{C}{\ithCost{d}} \coordinVect{d} \enspace,
  \label{eq:optAxisVect}
\end{equation}
which belongs to the $C$-ball of our \LP[1] cost (the
coefficient $\tfrac{C}{\ithCost{d}}$ normalizes for the weight
$\ithCost{d}$ on the \nth{d}{th} feature). We now formalize the second
fact as follows.

\begin{lemma}
  \label{thm:optAxis}
  For all $C > 0$, if there exists some $\dpt \in \xminus$ that
  achieves a cost of $C=\lpCostFunc[\featCost]{1}{\dpt}$, then there
  is some feature $d$ such that a vertex of the form of
  Eq.~\eqref{eq:optAxisVect} is in $\xminus$ (and also achieves cost
  $C$ by Eq.~\ref{eq:weightedL1}).
\end{lemma}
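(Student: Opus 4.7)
The plan is to exploit the fact that the weighted $\LP[1]$ ball $\ball{C}{\lpCost[\featCost]{1}}$ is a convex polytope whose $2\dims$ vertices are precisely the axis-aligned points of the form Eq.~\eqref{eq:optAxisVect}, and then use convexity of $\xplus$ to transfer the presence of a negative point inside the ball to the presence of a negative point at a vertex.

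First, I would verify the geometric fact: the sublevel set $\ball{C}{\lpCost[\featCost]{1}}$ is a (scaled) cross-polytope centered at $\xtarget$, and every point $\dpt$ with $\lpCostFunc[\featCost]{1}{\dpt} \le C$ can be written as a convex combination of the vertices $\xtarget \pm \tfrac{C}{\ithCost{d}}\coordinVect{d}$ for $d = 1, \ldots, \dims$. Concretely, writing $\dpt - \xtarget = \sum_d (\dptc_d - \xtargetc_d)\coordinVect{d}$ and letting $\sigma_d \in \{-1, +1\}$ denote the sign of $\dptc_d - \xtargetc_d$, the coefficients $\lambda_d = \ithCost{d}|\dptc_d - \xtargetc_d|/C$ are nonnegative and sum to at most $1$; assigning weight $\lambda_d$ to the vertex $\xtarget + \sigma_d\tfrac{C}{\ithCost{d}}\coordinVect{d}$ and the remaining slack $1 - \sum_d \lambda_d$ to $\xtarget$ itself exhibits $\dpt$ as a convex combination of $\xtarget$ and the $2\dims$ vertices.

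Now suppose for contradiction that every vertex $\xtarget \pm \tfrac{C}{\ithCost{d}}\coordinVect{d}$ lies in $\xplus$. Since $\xtarget \in \xplus$ by assumption and $\xplus$ is convex, the above convex combination forces $\dpt \in \xplus$, contradicting $\dpt \in \xminus$. Therefore at least one vertex $\xtarget \pm \tfrac{C}{\ithCost{d}}\coordinVect{d}$ must lie in $\xminus$. Finally, a direct computation from Eq.~\eqref{eq:weightedL1} shows this vertex has cost $\ithCost{d}\cdot \tfrac{C}{\ithCost{d}} = C$, as claimed.

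I do not anticipate a serious obstacle here. The only subtlety is carefully handling the weighted case: one must normalize by $\ithCost{d}$ so that the $2\dims$ axis vertices truly lie on the cost-$C$ level set, and one must remember to include $\xtarget$ as an additional vertex of the convex combination when $\dpt$ lies strictly inside the ball (though the hypothesis that $\dpt$ achieves cost exactly $C$ means $\sum_d \lambda_d = 1$ and this slack term vanishes).
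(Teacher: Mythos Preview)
Your proposal is correct and follows essentially the same approach as the paper: write $\dpt$ as a convex combination of the axis-aligned vertices of the weighted $\LP[1]$ ball (the paper restricts to the $M$ vertices whose signs match the nonzero coordinates of $\dpt-\xtarget$, while you additionally allow $\xtarget$ as a slack vertex), and then invoke convexity of $\xplus$ to force $\dpt\in\xplus$ if all those vertices are positive. Your explicit computation of the coefficients $\lambda_d$ is in fact a bit more careful than the paper's version, which simply asserts that $\dpt$ lies on the relevant simplex without writing out the weights.
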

\begin{proof}
  Suppose not; then there is some $\dpt \in \xminus$ such that
  $\lpCostFunc[\featCost]{1}{\dpt}=C$ and $\dpt$ has $M\ge2$ features
  that differ from $\xtarget$ (if $\dpt$ only differs in $1$ feature
  it would be of the form of Eq.~\ref{eq:optAxisVect}). Let
  $\set{d_1,\ldots,d_M}$ be the differing features and let $b_{d_i} =
  \sign\left(\dptc_{d_i} - \xtargetc_{d_i}\right)$ be the sign of the
  difference between $\dpt$ and $\xtarget$ along the $d_i$-th feature.
  For each $d_i$, let $\vec{e}_{d_i} = \xtarget +
  \frac{C}{\ithCost{d_i}} \cdot b_{d_i} \cdot \coordinVect{d_i}$ be a
  vertex of the form of Eq.~\eqref{eq:optAxisVect} which has a cost
  $C$ (from Eq.~\ref{eq:weightedL1}).  The $M$ vertices
  $\vec{e}_{d_i}$ form an $M$-dimensional equi-cost simplex of cost
  $C$ on which $\dpt$ lies; \ie\ $\dpt = \sum_{i=1}^{M}{\alpha_i
    \vec{e}_{d_i}}$ for some $0 \le \alpha_i \le 1$. If all
  $\vec{e_{d_i}} \in \xplus$, then the convexity of $\xplus$ implies
  that all points in their simplex are in $\xplus$ and so $\dpt \in
  \xplus$ which violates our premise. Thus, if any instance in
  $\xminus$ achieves cost $C$, there is always a vertex of the form
  Eq.~\eqref{eq:optAxisVect} in $\xminus$ that also achieves cost $C$.
\end{proof}

As a consequence, if all such vertices of any $C$ ball
$\ball{C}{\lpCost{1}}$ are positive, then all $\dpt$ with
$\lpCost[\featCost]{1}{\dpt} \le C$ are positive thus establishing $C$
as a lower bound on the \MAC. Conversely, if any of the vertices of
$\ball{C}{\lpCost{1}}$ are negative, then $C$ is an upper bound on
\MAC. Thus, by simultaneously querying all $2\cdot \dims$ equi-cost
vertices of $\ball{C}{\lpCost{1}}$, we either establish $C$ as a new
lower or upper bound on the \MAC. By performing a binary search on $C$
we iteratively halve the multiplicative gap between our bounds until
it is within a factor of $1+\multGoal$. This yields an \kIMAC\ of the
form of Eq.~\eqref{eq:optAxisVect}.

\begin{figure}[t]
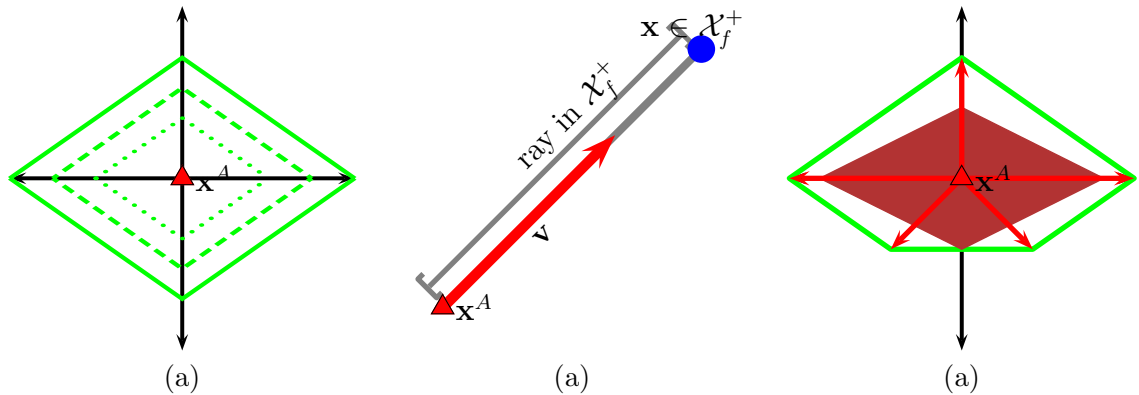

\begin{minipage}{.32\linewidth}
\begin{center}
\psset{unit=0.47\linewidth}
\pspicture(-1,-1)(1,1)
  \psaxes[ticks=none,labels=none,linewidth=1.5pt]{<->}(0,0)(-1,-1)(1,1)
  \lpcontourplot[linecolor=green,linewidth=2pt,yweight=0.7]{1}{1.0}
  \lpcontourplot[linecolor=green,linestyle=dashed,linewidth=2pt,yweight=0.7]{1}{0.75}
  \lpcontourplot[linecolor=green,linestyle=dotted,linewidth=2pt,yweight=0.7]{1}{0.5}
  \placeTarget
\endpspicture \\
\refstepcounter{subfig}
\thesubfig
\label{fig:LPballs}
\end{center}
\end{minipage} \hfill
\begin{minipage}{.32\linewidth}
\begin{center}
\psset{unit=0.47\linewidth}
\pspicture*(-.25,-.25)(1.75,1.75)
  \psline[linecolor=gray,linewidth=2pt]{[-]}(-.1,.1)(1.4,1.6)
  \rput[rb]{*45}(1.1,1.3){\large ray in $\xplus$}
  \psline[linecolor=gray,linewidth=3pt]{}(0,0)(1.5,1.5)
  \pscircle*[linecolor=blue](1.5,1.5){.08}
  \rput[rb]{*0}(1.75,1.5){\large $\dpt\in\xplus$}
  \psline[linecolor=red,linewidth=4pt]{->}(0,0)(1,1)
  \uput{5pt}[-45]{45}(.5,.5){\large $\vec{v}$}
  \placeTarget
\endpspicture \\
\thesubfig
\label{fig:searchRay}
\end{center}
\end{minipage} \hfill
\begin{minipage}{.32\linewidth}
\begin{center}
\psset{unit=0.47\linewidth}
\pspicture(-1,-1)(1,1)
  \psaxes[ticks=none,labels=none,linewidth=1.5pt]{<->}(0,0)(-1,-1)(1,1)
  \pspolygon[linecolor=green,linewidth=2pt](1,0)(0,.7)(-1,0)(-.4118,-.4118)(.4118,-.4118)
  \lpcontourplot[linecolor=innerBallCol,fillstyle=solid,fillcolor=innerBallCol,yweight=0.5]{1}{0.8236}
  \psline[linecolor=red,linewidth=2pt]{->}(0,0)(1,0)
  \psline[linecolor=red,linewidth=2pt]{->}(0,0)(-1,0)
  \psline[linecolor=red,linewidth=2pt]{->}(0,0)(0,0.7)
  \psline[linecolor=red,linewidth=2pt]{->}(0,0)(-.4118,-.4118)
  \psline[linecolor=red,linewidth=2pt]{->}(0,0)(.4118,-.4118)
  \placeTarget
\endpspicture \\
\thesubfig
\label{fig:boundedBall}
\end{center}
\end{minipage}
  \caption{The geometry of search. \ref{fig:LPballs} Weighted \LP[1] balls are centered around the target $\xtarget$ and have $2^\dims$ vertices; \ref{fig:searchRay} Search directions in multi-line search radiate from $\xtarget$ to probe specific costs; \ref{fig:boundedBall} In general, we leverage convexity of the cost function when searching to evade. By probing all search directions at a specific cost, the convex hull of the positive queries bounds the \LP[1] cost ball contained within it.}
\label{fig:mls}
\end{figure}

A general form of this multiline search procedure is presented as
Algorithm~\ref{alg:mls} and depicted in Figure~\ref{fig:mls}. \MLS\ simultaneously searches along the
directions in a set $\aset{W}$ of search directions that radiate from
their origin at $\xtarget$ and that are unit vectors for their cost;
\ie\ $\adCostFunc{\vec{w}} = 1$ for any $\vec{w} \in \aset{W}$. (We
transform a given set of non-normalized search vectors $\set{\vec{v}}$
into unit search vectors by simply applying a normalization constant
of $\adCostFunc{\vec{v}}^{-1}$ to each vector.)  At each step of \MLS,
at most $|\aset{W}|$ queries are issued in order to construct a
bounding shell (\ie\ the convex hull of these queries will either form
an upper or lower bound on the \MAC) to determine whether
$\ball{C}{\adCost} \subset \xplus$. Once a negative instance is found
at cost $C$, we cease further queries at cost $C$ since a single
negative instance is sufficient to establish a lower bound. We call
this policy \term{lazy querying}\footnote{We could continue querying
  at any distance $B^-$ where there is a known negative instance as it
  may allow us to prune other search directions quickly. However, once
  the classifier reveals a negative instance at distance $B^-$, the
  classifier would be foolish to subsequently reveal that another
  direction has a \posLbl\ at the same distance since it freely allows
  the adversary to prune a search direction. Hence, a \malcal\ will
  always respond with \negLbl\ for any cost where a negative instance
  has already been revealed. Thus, our algorithm uses lazy querying
  and only queries at costs below our upper bound $\maxCost[t]$ on the
  \MAC.}.  Further, when an upper bound is established for a cost $C$
(a negative vertex is found), our algorithm prunes all directions that
were positive at cost $C$. This pruning is sound; by the convexity
assumption these pruned directions are positive for all costs less
than the new upper bound $C$ on the \MAC. Finally, by performing a
binary search on the cost, \MLS\ finds a \kIMAC\ with no more than
$|\aset{W}| \cdot \steps$ queries but at least $|\aset{W}|+\steps$
queries.  Thus, this algorithm is $\bigO{|\aset{W}| \cdot \steps}$ for
\LP[1] costs.

It is worth noting that, in its present form, \MLS\ has two implicit
assumptions. First, we assume all search directions radiate from a
common origin, \xtarget, and $\adCostFunc{\xtarget} = 0$. Without this
assumption, the ray-constrained cost function $\adCostFunc{\xtarget +
  s \cdot \vec{w}}$ is still convex in $s \ge 0$ but not necessarily
monotonic as required for binary search.  Second, we assume the cost
function \adCost\ is a \term{positive homogeneous function} along an
ray from \xtarget; \ie\ $\adCostFunc{\xtarget + s \cdot \vec{w}} = |s|
\cdot \adCostFunc{\xtarget + \vec{w}}$. This assumption allows \MLS\
to scale its unit search vectors to achieve the same scaling of their
cost. Although the algorithm could be adapted to eliminate these
assumptions, the cost functions in Eq.~\eqref{eq:weightedL1} satisfy
both assumptions since they are norms centered at \xtarget.

Algorithm~\ref{alg:convexPlus} uses \MLS\ for
\LP[1] costs by making $\aset{W}$ be the vertices of the unit-cost
\LP[1] ball centered at $\xtarget$. In this case, the search issues at
most $2\cdot\dims$ queries to determine whether $\ball{C}{\lpCost{1}}
\subset \xplus$ and so Algorithm~\ref{alg:convexPlus} is
$\bigO{\steps\cdot\dims}$. However, \MLS\ does not rely on its
directions being vertices of the \LP[1] ball although those vertices
are sufficient to span the \LP[1] ball. Generally, \MLS\ is agnostic
to the configuration of its search directions and can be adapted for
any set of directions that can provide a bound on the cost using the
convexity of \xplus. However, as we show in Section~\ref{sec:LPCosts},
the number of search directions required to bound an \LP[p] for $p>1$
can be exponential in $\dims$.

\begin{figure*}[t]
\begin{minipage}{0.45\linewidth}
  \begin{Algorithm}{Multi-line Search}{alg:mls}
    \STATE $\function{MLS}{\aset{W},\xtarget,\dpt^-,C^+_0,C^-_0,\multGoal}$
    \STATE $\dpt^{\ast} \gets \dpt^{-}$
    \STATE $t \gets 0$
    \WHILE{$C^-_t / C^+_t > 1+\multGoal$}
      \STATE $C_t \gets \sqrt{C^+_t \cdot C^-_t}$
      \FORALL{$\vec{e} \in \aset{W}$}
        \STATE \textbf{Query}: $f_\vec{e}^t \gets \prediction{\xtarget + C_t \cdot \vec{e}}$
        \IF{$f_\vec{e}^t = \negLbl$}
          \STATE $\dpt^{\ast} \gets \xtarget + C_t \cdot \vec{e}$
          \STATE Prune $\vec{i}$ from $\aset{W}$ if $f_\vec{i}^t = \posLbl$
          \STATE \textbf{break for-loop}
        \ENDIF
      \ENDFOR
      \STATE $C^+_{t+1} \gets C^+_t$ and $C^-_{t+1} \gets C^-_t$
      \STATE \textbf{if} $\forall \vec{e} \in \aset{W} \; f_\vec{e}^t = \posLbl$ \textbf{then} $C^+_{t+1} \gets C_t$
      \STATE \textbf{else} $C^-_{t+1} \gets C_t$
      \STATE $t \gets t+1$
    \ENDWHILE
    \STATE \textbf{return:} $\dpt^{\ast}$
  \end{Algorithm}
\end{minipage}
\hspace{2em}
\begin{minipage}{0.45\linewidth}
  \begin{minipage}{\linewidth}
  \begin{Algorithm}{Convex $\xplus$ Set Search}{alg:convexPlus}
    \STATE $\function{ConvexSearch}{\aset{W},\xtarget,\dpt^-,\multGoal,C^+}$
    \STATE $C^- \gets \adCostFunc{\dpt^-}$
    \STATE $\aset{W} \gets \emptyset$
    \FORALL{$i \in 1\ldots\dims$}
      \STATE $\vec{e}^{i} \gets \frac{1}{\ithCost{i}}  \cdot \coordinVect{i}$
      \STATE $\aset{W} \gets \aset{W} \cup \set{\pm \vec{e}^i}$
    \ENDFOR
    \STATE \textbf{return:} $\function{MLS}{\aset{W},\xtarget,\dpt^-,C^+,C^-,\multGoal}$ 
  \end{Algorithm}
  \end{minipage} \\ \vspace{2em}

  \begin{minipage}{\linewidth}
  \begin{Algorithm}{Linear $\xplus$ Set Search}{alg:linear}
    \STATE $\function{LinearSearch}{\aset{W},\xtarget,\dpt^-,\multGoal,C^+}$
    \STATE $C^- \gets \adCostFunc{\dpt^-}$
    \STATE $\aset{W} \gets \emptyset$
    \FORALL{$i \in 1\ldots\dims$}
      \STATE $\vec{e}^{i} \gets \frac{1}{\ithCost{i}} \cdot \coordinVect{i}$
      \STATE $b_i \gets \sign\left(\dptc^-_i - \xtargetc_i\right)$
      
      \STATE \textbf{if} $b_i = 0$ \textbf{then} $\aset{W} \gets \aset{W} \cup \set{b_i \vec{e}^{i}}$
      \STATE \textbf{else} $\aset{W} \gets \aset{W} \cup \set{\pm \vec{e}^i}$
    \ENDFOR
    \STATE \textbf{return:} $\function{MLS}{\aset{W},\xtarget,\dpt^-,C^+,C^-,\multGoal}$ 
  \end{Algorithm}
  \end{minipage}%
\end{minipage}
\end{figure*}

\subsubsection{$K$-step Multi-Line Search}

Here we present a variant of the multi-line search algorithm that
better exploits pruning to reduce the query complexity of
Algorithm~\ref{alg:mls}---we call this variant \KMLS.  The \MLS\
algorithm is $2\cdot |\aset{W}|$ simultaneous binary searches
(breadth-first). This strategy prunes directions most effectively when
the convex body is assymetrically elongated relative to \xtarget\ but
fails to prune for symmetrically rounded bodies. Instead we could
search each direction sequentially (depth-first) and still obtain a
worst case of $\bigO{\steps\cdot\dims}$ queries. In contrast, this
strategy reduces queries used to shrink the cost gap on symmetrically
rounded bodies but is unable to do so for assymetrically elongated
bodies.
We therefore propose an algorithm that mixes these strategies.

At each phase, the \KMLS\ (Algorithm~\ref{alg:kmls}) chooses a single
direction $\vec{e}$ and queries it for $K$ steps to generate candidate
bounds $B^-$ and $B^+$ on the \MAC. The algorithm makes substantial
progress towards reducing $G_t$ without querying other directions
(depth-first). It then iteratively queries all remaining directions at
the candidate lower bound $B^+$ (breadth-first).  Again we use lazy
querying and stop as soon as a negative instance is found since $B^+$
is then no longer a viable lower bound. In this case, although the
candidate bound is invalidated, we can still prune all directions that
were positive at $B^+$. Thus, in every iteration, either the gap is
decreased or at least one search direction is pruned. We show that for
$K=\lceil \sqrt{\steps} \rceil$, the algorithm achieves a delicate balance
between breadth-first and depth-first approaches to attain a better
worst-case complexity than either.

\begin{figure}[t]
\begin{center}
\begin{minipage}{0.65\linewidth}
  \begin{Algorithm}{$K$-Step Multi-line Search}{alg:kmls}
    \STATE $\function{KMLS}{\aset{W},\xtarget,\dpt^-,C^+_0,C^-_0,\multGoal,K}$
    \STATE $\dpt^{\ast} \gets \dpt^{-}$
    \STATE $t \gets 0$
    \WHILE{$C^-_t / C^+_t > 1+\multGoal$}
      \STATE Choose a direction $\vec{e} \in \aset{W}$
      \STATE $B^+ \gets C^+_t$
      \STATE $B^- \gets C^-_t$
      \FOR{$K$ steps}
        \STATE $B \gets \sqrt{B^+ \cdot B^-}$
        \STATE \textbf{Query}: $f_\vec{e} \gets \prediction{\xtarget + B \cdot \vec{e}}$
        \STATE \textbf{if} $f_\vec{e} = \posLbl$ \textbf{then} $B^+ \gets B$
        \STATE \textbf{else} $B^- \gets B$ \textbf{and} $\dpt^{\ast} \gets \xtarget + B \cdot \vec{e}$
      \ENDFOR

      \FORALL{$\vec{i} \neq \vec{e} \in \aset{W}$}
        \STATE \textbf{Query}: $f_\vec{i}^t \gets \prediction{\xtarget + (B^+) \cdot \vec{i}}$
        \IF{$f_\vec{i}^t = \negLbl$}
          \STATE $\dpt^{\ast} \gets \xtarget + (B^+) \cdot \vec{i}$
          \STATE Prune \vec{k} from $\aset{W}$ if $f_\vec{k}^t = \posLbl$
          \STATE \textbf{break for-loop}
        \ENDIF
      \ENDFOR

      \STATE $C^-_{t+1} \gets B^-$
      \STATE \textbf{if} $\forall \vec{i} \in \aset{W} \; f_\vec{i}^t = \posLbl$ \textbf{then} $C^+_{t+1} \gets B^+$
      \STATE \textbf{else} $C^-_{t+1} \gets B^+$
      \STATE $t \gets t+1$
    \ENDWHILE
    \STATE \textbf{return:} $\dpt^{\ast}$
  \end{Algorithm}
\end{minipage}
\end{center}
\end{figure}

\begin{theorem}
  \label{thm:sqrtL}
  Algorithm~\ref{alg:kmls} will find an \kIMAC\ with at most
  $\bigO{\steps + \sqrt{\steps} |\aset{W}|}$ queries when $K = \lceil \sqrt{\steps} \rceil$.
\end{theorem}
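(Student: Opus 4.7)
The plan is to amortize queries across two accounts---gap reduction and direction pruning---by classifying each phase (iteration of the outer while loop of \KMLS) as either \emph{gap-reducing} or \emph{pruning}. Correctness (returning an \kIMAC) follows from the invariant $C^+_t \le \function{\MAC}{\classifier,\adCost} \le C^-_t$, which holds at the start of each phase: by the convexity of \xplus\ (and Lemma~\ref{thm:optAxis} when $\aset{W}$ consists of \LP[1] vertices), all of \aset{W} testing positive at $B^+$ forces the $B^+$-ball into \xplus, so the update $C^+_{t+1}\gets B^+$ is valid, while a \negLbl\ at $B^+$ directly witnesses $C^-_{t+1}\gets B^+$ and prunes every direction found positive at $B^+$. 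Upon exit, $\dpt^\ast\in\xminus$ with $\adCostFunc{\dpt^\ast}\le C^-_T\le(1+\multGoal)C^+_T\le(1+\multGoal)\function{\MAC}{\classifier,\adCost}$, so $\dpt^\ast\in\kIMAC$.

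For the query count, write $T_g$ and $T_p$ for the counts of gap-reducing and pruning phases. I will argue four bounds in turn: (i) $T_g\le\lceil\steps/K\rceil$, because each gap-reducing phase is preceded by $K$ geometric binary-search steps that shrink the log-gap $\log(C^-_t/C^+_t)$ by a factor of $2^K$, and pruning phases are monotone in the gap, so \steps\ from Eq.~\eqref{eq:Lmult} caps the total progress needed; (ii) $T_p\le|\aset{W}|$, because each pruning phase removes at least one direction from \aset{W}; (iii) each gap-reducing phase issues at most $|\aset{W}|-1$ breadth-first queries, contributing at most $T_g\cdot|\aset{W}|$ in total; and (iv) the \textbf{break} terminating the pruning for-loop at the first \negLbl\ ensures that the total breadth-first queries across pruning phases equal one terminating query per phase plus one per pruning, for at most $T_p+|\aset{W}|\le 2|\aset{W}|$. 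Summing the four contributions gives $K(T_g+T_p)+T_g|\aset{W}|+2|\aset{W}|\le\steps+|\aset{W}|\bigl(K+\steps/K+2\bigr)$, and substituting $K=\lceil\sqrt{\steps}\rceil$ yields the claimed $\bigO{\steps+\sqrt{\steps}|\aset{W}|}$.

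The main obstacle I anticipate is step (i): a pruning phase may reduce the multiplicative gap by only a vanishingly small factor---if the depth-first search along $\vec{e}$ pushes $B^+$ almost up to $C^-_t$ before any other direction turns out to be negative at $B^+$---so pruning phases cannot be folded into the $\steps/K$ bound on $T_g$. The resolution is the separate accounting above, which exploits two distinct gains from a pruning phase: it eliminates at least one direction (bounding $T_p$), and the early break keeps its breadth-first cost proportional to the number of prunings that phase produces, so the aggregate pruning breadth-first cost scales as $|\aset{W}|$ rather than $T_p\cdot|\aset{W}|$. Balancing the $K|\aset{W}|$ term (coming from $K T_p$ with $T_p\le|\aset{W}|$) against the $(\steps/K)|\aset{W}|$ term (from $T_g|\aset{W}|$ with $T_g\le\steps/K$) is exactly what identifies $K=\lceil\sqrt{\steps}\rceil$ as optimal.
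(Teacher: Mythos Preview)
Your proposal is correct and follows essentially the same approach as the paper: both split iterations into ``gap-reducing'' (paper's Case~1, all directions positive at $B^+$) and ``pruning'' (paper's Case~2, some direction negative at $B^+$), bound the former by $\lceil\steps/K\rceil$ via the $2^{-K}$ exponent shrinkage of the log-gap, bound the latter by $|\aset{W}|$ via direction elimination, and balance $K|\aset{W}|$ against $(\steps/K)|\aset{W}|$ to get $K=\lceil\sqrt{\steps}\rceil$. Your accounting in~(iv)---charging each breadth-first query in a pruning phase either to the terminating \negLbl\ or to a pruned direction---is a cleaner variant of the paper's ``$K+E_t$ queries with $\sum_t E_t\le|\aset{W}|-1$'' bookkeeping, and your explicit correctness invariant is a useful addition the paper's proof omits.
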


The proof of this theorem appears in Appendix~\ref{app:MLS-proofs}.
As a consequence of Theorem~\ref{thm:sqrtL}, finding a \kIMAC\ with
Algorithm~\ref{alg:kmls} for a \LP[1] cost requires
$\bigO{\steps + \sqrt{\steps}\dims}$ queries. Further, both
Algorithms~\ref{alg:convexPlus} and~\ref{alg:linear} can incorporate
\KMLS\ directly by replacing their function
call to $\mathrm{MLS}$ to $\mathrm{KLMS}$ and using $K =
\lceil\sqrt{\steps}\rceil$.

\subsubsection{Lower Bound}
\label{sec:lowBounds}

Here we find lower bounds on the number of queries required by any
algorithm to find an \kIMAC\ when $\xplus$ is convex for any convex
cost function (\eg\ Eq.~\ref{eq:weightedL1} for $p \ge 1$).  Below we
present two theorems, one for both additive and multiplicative
optimality.  Notably, since an \kIMAC\ uses multiplicative optimality,
we incorporate a lower bound $\minCost > 0$ on the \MAC\ into our
statement.

\begin{theorem}
  \label{thm:alower}
  For any $\dims>0$, any positive convex function $\adCost:
  \reals^\dims \to \realpos$, any initial bounds $0 \le \minCost <
  \maxCost$ on the \MAC, and $0 < \addGoal < \maxCost - \minCost$, all
  algorithms must submit at least $\max\{\dims,\addSteps\}$ membership
  queries in the worst case to be \addGoal-additive optimal on
  $\classSpace^\mathrm{convex,\posLbl}$.
\end{theorem}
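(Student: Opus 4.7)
My plan is to prove the two terms of $\max\{\dims, \addSteps\}$ separately by adversarial arguments restricted to halfspace classifiers, which sit inside $\classSpace^\mathrm{convex,\posLbl}$ whenever $\xtarget$ lies on their positive side.

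For the $\addSteps$ lower bound I would pick any unit direction $\vec{u}$ and reduce to the one-parameter sub-family $H_s = \{\dpt : \vec{u}^\top(\dpt - \xtarget) < s\}$. Since $\adCost$ is positive and convex, its \MAC\ on $H_s$ is continuous and strictly monotonic in $s$, so there is an interval of $s$-values on which the \MAC\ sweeps out exactly $[\minCost, \maxCost]$. Each query $\vec{q}$ partitions the live sub-interval at $s = \vec{u}^\top(\vec{q} - \xtarget)$, and a classical binary-search adversary keeps whichever half is larger, shrinking the live MAC-range by at most a factor of two per query. Any $\addGoal$-additive optimal output $\dpt^\dagger$ must satisfy $\MAC \le \adCost(\dpt^\dagger) \le \MAC + \addGoal$ for every classifier still in the version space---the first inequality, else $\dpt^\dagger \in \xplus$ under some consistent classifier, and the second to satisfy the cost bound. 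Hence the live MAC-range at termination must be at most $\addGoal$, requiring $\addSteps = \lceil \log_2(\gap{0}^{(+)}/\addGoal) \rceil$ queries.

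For the $\dims$ lower bound I would translate so that $\xtarget = \vec{0}$ and use an adversary that answers \posLbl\ to every query. After $k < \dims$ queries $\vec{q}_1, \ldots, \vec{q}_k$ the orthogonal complement $V$ of $\mathrm{span}\{\vec{q}_i\}$ is non-trivial, so within $V$ I can select a unit normal $\vec{w}$ with $\vec{w}^\top \dpt^- > 0$ and an offset $c \in (0, \vec{w}^\top \dpt^-)$; the resulting halfspace $\{\dpt : \vec{w}^\top \dpt < c\}$ is consistent with every \posLbl\ response, with $\dpt^- \in \xminus$, and with \MAC\ driven down to $\minCost$ by an appropriate choice of $c$. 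If the algorithm outputs $\dpt^\dagger = \dpt^-$, its cost $\maxCost$ already exceeds $\minCost + \addGoal$ by the hypothesis $\addGoal < \maxCost - \minCost$, so additive optimality fails. If it outputs any other $\dpt^\dagger$, I additionally require $\vec{w}^\top \dpt^\dagger < c$, placing $\dpt^\dagger \in \xplus$ and disqualifying it as a negative witness altogether.

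The main obstacle is the feasibility of the strict linear system defining the adversary's halfspace in the second case: the augmented constraint is solvable whenever $\dpt^- \notin \mathrm{conv}\{\vec{q}_i, \dpt^\dagger\}$, which is generic for $k + 1 \le \dims$. In the degenerate situation where the algorithm arranges its queries and output so that $\dpt^-$ lies in this convex hull, the adversary must answer \negLbl\ on at least one query to stay consistent with $\dpt^- \in \xminus$, but doing so amounts to forcing the algorithm to spend an extra query on an independent direction and the argument iterates. A clean formalisation recasts this as a dimension-counting argument: each membership query constrains the halfspace normal to a subspace of codimension at most one, so $k < \dims$ queries always leave a positive-dimensional cone of feasible $\vec{w}$ within which the adversary can simultaneously enforce all three invariants.
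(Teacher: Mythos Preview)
Your $\addSteps$ argument is sound and takes a different route from the paper: you reduce to a one-parameter family of halfspaces $H_s$, whereas the paper uses the family of cost-balls $\{\adCost < r\}$ indexed by radius. Both collapse to a standard binary-search adversary on a scalar, and yours needs only a little extra care to check that $\MAC(H_s)$ is continuous in $s$ and sweeps through $[\minCost,\maxCost]$.

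Your $\dims$ argument, however, has a genuine gap. The ``always \posLbl'' adversary need not be consistent with the side information $\dpt^-\in\xminus$: the algorithm can query, say, $\dpt^- + t\coordinVect{1}$ and $\dpt^- - t\coordinVect{1}$, and if both receive \posLbl\ then convexity of $\xplus$ forces $\dpt^-\in\xplus$. More generally the algorithm can place $\dpt^-$ inside $\mathrm{conv}\{\vec{q}_i\}$ with just two queries, so your separating-halfspace construction can be made infeasible regardless of $k<\dims$. Your final paragraph does not repair this: membership queries impose halfspace constraints on $(\vec{w},c)$, not linear ones, so the ``codimension one per query'' count is wrong; and even when a separating halfspace exists you have not shown that its \MAC\ can be pushed down to $\minCost$ while $c$ remains above $\vec{w}^\top\dpt^\dagger$.

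The paper sidesteps all of this with a different adversary and a non-halfspace alternative. The adversary simply simulates the fixed classifier $\classifier$ with $\xplus=\{\dpt:\adCost(\dpt)<\maxCost\}$, which is automatically consistent with $\xtarget\in\xplus$, $\dpt^-\in\xminus$, and has $\MAC(\classifier)=\maxCost$. After $N\le\dims$ queries (of which $M$ are positive) the alternative classifier $\funcName{g}$ has positive set $\mathrm{conv}\bigl(\{\dpt^1,\ldots,\dpt^M\}\cup\ball{\minCost}{\adCost}\bigr)$. Since $M\le\dims$, the hull of the positive queries has empty interior, so a point of cost $\minCost$ lies on $\partial\xplus[\funcName{g}]$ and $\MAC(\funcName{g})=\minCost$. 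Because $\addGoal<\maxCost-\minCost$, the sets $\function{\addIMAC}{\classifier,\adCost}$ and $\function{\addIMAC}{\funcName{g},\adCost}$ are disjoint, so no output works for both. Two points worth noting: the paper does not restrict to halfspaces (the alternative $\funcName{g}$ is a general convex body), and it does not tailor the construction to the algorithm's output $\dpt^\dagger$---exhibiting two consistent classifiers with disjoint $\addIMAC$ sets is enough.
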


\begin{theorem}
  \label{thm:mlower}
  For any $\dims>0$, any positive convex function $\adCost: \reals^\dims
  \to \realpos$, any initial bounds $0 < \minCost < \maxCost$ on the \MAC, and $0 < \multGoal < \frac{\maxCost}{\minCost} - 1$, all
  algorithms must submit at least $\max\{\dims,\multSteps\}$
  membership queries in the worst case to be
  \multGoal-multiplicatively optimal on
  $\classSpace^\mathrm{convex,\posLbl}$.
\end{theorem}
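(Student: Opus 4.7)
The plan is to establish the two claimed lower bounds, $\multSteps$ and $\dims$, by separate adversarial constructions; the maximum then follows immediately since each bound applies independently to every algorithm.

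For the $\multSteps$ bound, I would restrict the adversary to a one-parameter sub-family of halfspaces in $\classSpace^\mathrm{convex,\posLbl}$: classifiers $\classifier_c$ whose $\xplus$ is a halfspace positioned so that $\function{\MAC}{\classifier_c,\adCost} = c$ for each $c \in [\minCost,\maxCost]$. Such a family exists for any positive convex $\adCost$ by continuity of the minimum-cost-over-halfspace map; the halfspaces can be oriented perpendicular to a fixed direction along which $\adCost$ increases (for $\LP[p]$-type costs, $\coordinVect{1}$ works). Each membership query yields a single bit indicating on which side of the boundary the query lies. Achieving $\multGoal$-multiplicative optimality therefore reduces to localizing $c$ to within a factor of $1+\multGoal$, which via the change of variables $s=\log_2 c$ becomes an additive bisection problem with initial gap $\log_2(\maxCost/\minCost)$ and tolerance $\log_2(1+\multGoal)$. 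Binary search is worst-case optimal for this univariate problem, attaining the bound of Eq.~\eqref{eq:Lmult}; an adversary playing the worst-case $c$ after each query therefore forces $\multSteps$ queries.

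For the $\dims$ bound, I would use an adversarial argument based on slab classifiers. For each unit vector $\vec{u}\in\reals^\dims$ and width $\delta>0$ chosen so the induced MAC lies in $[\minCost,\maxCost]$, define $\classifier_{\vec{u},\delta}\in\classSpace^\mathrm{convex,\posLbl}$ with $\xplus=\{\dpt: |\vec{u}^\top(\dpt-\xtarget)| \le \delta\}$. The slab $\xplus$ is convex, contains $\xtarget$, and its MAC is attained only near the directions $\pm\vec{u}$ from $\xtarget$. The adversary responds \posLbl\ to every query and defers its choice of $(\vec{u},\delta)$ until after the algorithm terminates. Suppose the algorithm makes $k<\dims$ queries and returns $\dpt^\ast$. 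The affine hull of $\{\xtarget,\vec{q}_1,\ldots,\vec{q}_k\}$ has dimension at most $k$, so its orthogonal complement has dimension at least $\dims-k\ge 1$; the adversary picks $\vec{u}$ from this complement, which makes $\vec{u}^\top(\vec{q}_i-\xtarget)=0$ for every $i$, rendering every \posLbl\ answer consistent with $\classifier_{\vec{u},\delta}$ for any admissible $\delta$.

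Within this orthogonal complement the adversary still has enough freedom to choose $\vec{u}$ so that $\dpt^\ast$ either lies in $\xplus$ (invalidating it as an evading instance) or has $\adCostFunc{\dpt^\ast}$ far exceeding $(1+\multGoal)\cdot\function{\MAC}{\classifier_{\vec{u},\delta},\adCost}$, violating $\multGoal$-optimality. The main obstacle is the boundary case $k=\dims-1$, where the orthogonal complement is only one-dimensional and so the algorithm may effectively determine $\vec{u}$ up to sign. I would close this case by exploiting the remaining sign freedom together with the adversary's freedom to choose $\delta\in[\minCost,\maxCost]$; alternatively, a packing argument on the sphere $S^{\dims-1}$ of slab normals shows that any $k<\dims$ binary responses cannot pinpoint $\vec{u}$ to the angular precision required for $\multGoal$-optimality when the optimum is isolated to a vanishing cap. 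Combining these two independent lower bounds yields $\max\{\dims,\multSteps\}$.
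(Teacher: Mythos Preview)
Your $\multSteps$ argument is essentially the paper's: both reduce to the worst-case optimality of bisection for localizing a one-dimensional parameter (the \MAC) to within a multiplicative factor of $1+\multGoal$. The paper implements the adversary with nested cost balls rather than a one-parameter family of halfspaces, but the content is the same.

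Your $\dims$ argument takes a genuinely different route, and that route has gaps you have not closed. The paper never tries to defeat the algorithm's specific output $\dpt^\ast$ with a single well-chosen classifier; instead it exhibits \emph{two} classifiers, both consistent with every response, whose $\kIMAC$ sets are provably disjoint. The adversary answers \posLbl\ iff $\adCostFunc{\dpt}<\maxCost$, which is consistent with $\classifier$ whose $\xplus$ is the open $\maxCost$-cost ball, giving $\function{\MAC}{\classifier,\adCost}=\maxCost$. The second classifier $\funcName{g}$ takes $\xplus[\funcName{g}]=\function{conv}{\aset{G}\cup\ball{\minCost}{\adCost}}$, where $\aset{G}$ is the convex hull of the positive queries. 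With fewer than $\dims+1$ positive queries, $\aset{G}$ lies in a proper affine subspace, so $\interior{\aset{G}}=\emptyset$ and some boundary point of $\ball{\minCost}{\adCost}$ remains on the boundary of $\xplus[\funcName{g}]$, forcing $\function{\MAC}{\funcName{g},\adCost}=\minCost$. Because $\multGoal<\maxCost/\minCost-1$, the two $\kIMAC$ sets are disjoint, and the proof ends without ever inspecting $\dpt^\ast$.

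Your slab construction, by contrast, must reason about $\dpt^\ast$ directly, and two problems arise. First, answering \posLbl\ to every query and then choosing $\vec{u}$ orthogonal to all $\vec{q}_i-\xtarget$ does not guarantee that the given $\dpt^-$ lands in $\xminus$: if $\dpt^--\xtarget$ lies in the span of the query directions (for instance, if the algorithm simply queries $\dpt^-$ first), every admissible $\vec{u}$ has $\vec{u}^\top(\dpt^--\xtarget)=0$ and your slab places $\dpt^-\in\xplus$, contradicting the problem setup. Second, the $k=\dims-1$ case you flag is a real obstruction: with $\vec{u}$ pinned up to sign, the algorithm can place $\dpt^\ast$ along $\pm\vec{u}$ at a cost calibrated to whatever \MAC\ your $\delta$ induces, and neither ``sign freedom together with $\delta$'' nor an unspecified packing argument on $S^{\dims-1}$ shows this fails. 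The paper's two-classifier device sidesteps both issues entirely; that is the missing idea.
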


The proof of both of these theorems is in
Appendix~\ref{app:lowerBounds}. In these theorems, we restrict
\addGoal\ and \multGoal\ to the intervals $\left(0,\maxCost -
  \minCost\right)$ and $\left(0,\frac{\maxCost}{\minCost} - 1\right)$
respectively. In fact, outside of these intervals the query strategies
are trivial. For either $\addGoal=0$ or $\multGoal=0$ no approximation
algorithm will terminate and for $\addGoal \ge \maxCost - \minCost$ or
$\multGoal \ge \frac{\maxCost}{\minCost} - 1$, $\dpt^-$ is an \IMAC,
so no queries are required.

Theorem~\ref{thm:alower} and~\ref{thm:mlower} show that one needs that
\addGoal-additive and \multGoal-multiplicative optimality require
$\bigOmegaNoStretch{\addSteps + \dims}$ and
$\bigOmegaNoStretch{\multSteps + \dims}$ queries respectively. Thus,
we see that our \KMLS\ algorithm (Algorithm~\ref{alg:kmls}) has close
to the optimal query complexity for \LP[1]-costs with its
$\bigONoStretch{\steps+\sqrt{\steps}\dims}$ queries. These results
also hold for arbitrary \LP[p] ($p \ge 1$) costs but we show lower
bounds in Section~\ref{sec:LPCosts} for $p > 1$ that substantially
exceed these results.

\subsubsection{Special Cases}
\label{sec:specialPosCases}

Here we present a number of special cases that require minor
modifications to Algorithms~\ref{alg:mls} and~\ref{alg:kmls} primarily
as preprocessing steps.

\paragraph{Revisiting Linear Classifiers}

Lowd and Meek originally developed a method for reverse engineering
linear classifiers for a \LP[1] cost. First their method isolates a
sequence of points from $\dpt^-$ to $\xtarget$ that cross the
classifier's boundary and then it estimates the hyperplane's
parameters using $\dims$ line searches. However, as a consequence of
the ability to efficiently minimize our objective when $\xplus$ is
convex, we immediately have an alternative method for linear
classifiers (\ie\ half-spaces). In fact, for this special case, as
many as half of the search directions can be eliminated using the
initial orientation of the hyperplane separating $\xtarget$ and
$\dpt^-$. Intuitively, the minimizer in the negative halfspace can
only occur along one of the axes of the orthants that contain
$\dpt^-$. This algorithm is presented as Algorithm~\ref{alg:linear}.
Moreover, because linear classifiers are a special case of
\convexClass, our \KMLS\ algorithm improves on the reverse-engineering
technique's $\bigO{\steps\cdot\dims}$ queries and applies to a broader
family.

\paragraph{Extending \MLS\ algorithms to $\ithCost{d} = \infty$ or $\ithCost{d} = 0$}

In Algorthms~\ref{alg:convexPlus} and~\ref{alg:linear}, we reweighted
the \nth{d}{th} axis-aligned directions by a factor
$\frac{1}{\ithCost{d}}$ to make unit cost vectors but implictly
assuming $\ithCost{d} \in \left( 0, \infty \right)$. The case where
$\ithCost{d} = \infty$ (\eg\ immutable features) is dealt with simply
removing those features from the set of search directions \aset{W}
used in the \MLS. 
In the case when $\ithCost{d} = 0$ (\eg\ useless features), \MLS-like
algorithms no longer ensure near-optimality because they implicitly
assume that cost balls are bounded sets.  If $\ithCost{d} = 0$,
$\ball{0}{\adCost}$ is no longer bounded and a $0$-cost could be
achieved if $\xminus$ anywhere intersects the subspace spanned by the
$0$-cost features---this makes near-optimality unachievable unless a
negative $0$-cost instance can be found. In the worst case, such an
instance could be arbitrarily far in any direction within the $0$-cost
subspace making search for such an instance intractable. Nonetheless,
one possible search strategy is to assign all $0$-cost features a
non-zero weight that decays quickly toward $0$ (\eg\ $\ithCost{d} =
2^-t$ in the \nth{t}{th} iteration) as we repeatly rerun an \MLS\ on
the altered objective for $T$ iterations.  We will either find a
negative instance that only alters $0$-cost features (and hence is a
\kIMAC[0]), or we will terminate assuming no such instance exists.
This algorithm does not ensure near-optimality but may find a suitable
instance with only $T$ runs of a \MLS.

\paragraph{Lack of an Initial Lower Bound}

Thus far, to find a \kIMAC\ our algorithms have searched between
initial bounds \minCost\ and \maxCost, but, in general, \minCost\ may
not be known to a real-world adversary. We now present an algorithm we
call \algoName{SpiralSearch} that can efficiently establish a lower
bound on the \MAC\ if one exists. This algorithm performs a halving
search on the exponent along a single direction to find a positive
example, then queries the remaining directions at that cost. Either
the lower bound is verified or directions that were positive can be
pruned for the remainder of the search.

\begin{center}
\begin{minipage}{0.65\linewidth}
  \begin{Algorithm}{Spiral Search}{alg:spiral}
    \STATE $\function{spiral}{\aset{W},\xtarget,\dpt^-,\maxCost,\multGoal}$
    \STATE $t \gets 0$ and $\aset{V} \gets \emptyset$
    \REPEAT
      \STATE Choose a direction $\vec{e} \in \aset{W}$
      \STATE Remove $\vec{e}$ from $\aset{W}$ and $\aset{V} \gets \aset{V} \cup \set{\vec{e}}$
      \STATE \textbf{Query}: $f_\vec{e} \gets \prediction{\xtarget + (\maxCost) 2^{-2^{t}} \vec{e}}$
      \IF{$f_\vec{i} = \negLbl$}
        \STATE $\aset{W} \gets  \aset{W} \cup \set{\vec{e}}$ and $\aset{V} \gets \emptyset$
        \STATE $t \gets t+1$
      \ENDIF
    \UNTIL{$\aset{W} = \emptyset$}
    \STATE $\minCost \gets \maxCost \cdot 2^{-2^{t}}$
    \STATE \textbf{return:} ($\aset{V}$,\minCost,\maxCost)
  \end{Algorithm}
\end{minipage}
\end{center}

At the \nth{t}{th} iteration of \algoName{SpiralSearch} a direction is
selected and queried at the current lower bound of $(\maxCost)
2^{-2^{t}}$. If the query is positive, that direction is added to the
set $V$ of directions consistent with the lower bound. Otherwise, all
directions in $V$ are discarded and the lower bound is lowered with an
exponentially decreasing exponent. Thus, given that some lower bound
$\minCost>0$ does exist, one will be found in $\bigO{\steps + \dims}$
queries and this algorithm can be used as a precursor to any of the
previous searches\footnote{If no lower bound on the cost exists, no
  algorithm can find a \kIMAC. As presented, this algorithm would not
  terminate, but in practice the search would be terminated after
  sufficiently many iterations.}. Further, the search directions
pruned by \algoName{SprialSearch} are also invalid for the subsequent
\MLS\ so the set $\aset{V}$ returned by \algoName{SprialSearch} will
be used as the set $\aset{W}$ for the subsequent search.

\paragraph{Lack of a Negative Example}

Our algorithms can also naturally be adapted to the case when the
adversary has no negative example $\dpt^-$. This is accomplished by
querying \LP[1] balls of doubly exponentially increasing cost until a
negative instance is found. During the \nth{t}{th} iteration, we probe
along every search direction at a cost $(\minCost) 2^{2^t}$; either
all probes are positive (and we have a new lower bound) or at least
one is negative and we can terminate the search. Once a negative
example is located (having probed for $T$ iterations), we must have
$(\minCost) 2^{2^{T-1}} <\function{\MAC}{\classifier,\adCost} \le
(\minCost) 2^{2^T}$; thus, $T = \left\lceil \log_2 \log_2
  \frac{\function{\MAC}{\classifier,\adCost}}{\minCost} \right\rceil$.
We can subsequently perform \MLS\ with $\minCost = 2^{2^{T-1}}$ and
$\maxCost = 2^{2^{T}}$; \ie\ $\log_2 \gap{0} = 2^{T-1}$. This
precursor step requires at most $|\aset{W}| \cdot T$ queries to
initialize the \MLS\ algorithm with a gap such that $\steps =
\left\lceil (T-1) + \log_2\frac{1}{\log_2 (1+\multGoal)} \right\rceil$
according to Eq.~\eqref{eq:Lmult}.

If there is neither an initial upper bound or lower bound, we proceed
by probing each search direction at cost $1$ using an additional
$|\set{W}|$ queries---we will subsequently have either an upper or
lower bound and can proceed accordingly.

\subsection{\kIMAC\ Learning for a Convex $\xminus$}

In this section, we consider minimizing a convex cost function
$\adCost$ (we focus on weighted \LP[1] costs in
Eq.~\ref{eq:weightedL1}) when the feasible set $\xminus$ is convex.
Any convex function can be efficiently minimized within a known convex
set (\eg\ using the Ellipsoid Method and Interior Point
methods; see \citealt{ConvexOptimization}). However, in our problem the
convex set is only accessible via membership queries.  We use a
randomized polynomial algorithm of 
\citet{bertsimas04convexrw} to minimize the cost function $\adCost$
given an initial point $\dpt^- \in \xminus$. For any fixed cost $C^t$
we use their algorithm to determine (with high probability) whether
$\xminus$ intersects with $\ball{C^t}{\adCost}$; \ie\ whether $C^t$ is
a new lower or upper bound on the \MAC. With high probability, we find
an \kIMAC\ in no more than \steps\ repetitions using binary search. We
now focus only on weighted \LP[1] costs (Eq.~\ref{eq:weightedL1})
and return to more general cases in Section~\ref{sec:genLPNegative}.

\begin{figure*}[t]
\begin{minipage}{0.55\linewidth}
\begin{Algorithm}{Intersect Search}{alg:intersect}
  \STATE $\function{IntersectSearch} {\aset{P}^0, \aset{Q} = \set{\dpt^j \in \aset{P}^0}, C}$
  \FORALL{$s = 1 \ldots T$}
  \STATE (1) Generate $2N$ samples $\set{\dpt^j}_{j=1}^{2N}$
  \STATE \quad Choose $\dpt$ from $\aset{Q}$
  \STATE \quad $\dpt^j \gets \function{HitRun}{\aset{P}^{s-1}, \aset{Q}, \dpt^j}$
  \STATE (2) If any $\dpt^j$, $\adCostFunc{\dpt^j} \le C$ terminate the for-loop
  \STATE (3) Put samples into 2 sets of size $N$ 
  \STATE \quad $\aset{R} \gets \set{\dpt^j}_{j=1}^{N}$ and $\aset{S} \gets \set{\dpt^j}_{j=2N+1}^{2N}$
  \STATE (4) $\vec{z}^{s} \gets \frac{1}{N}\sum_{\dpt^j\in\aset{R}}{\dpt^j}$
  \STATE (5) Compute $\aset{H}_{\vec{z}^{s}}$ using Eq.~\eqref{eq:halfspace}
  \STATE (6) $\aset{P}^{s} \gets \aset{P}^{s-1} \cap \aset{H}_{\vec{z}^{s}}$
  \STATE (7) Keep samples in $\aset{P}^s$
  \STATE \quad $\aset{Q} \gets \set{\dpt\in \aset{S} \land \dpt \in \aset{P}^s}$
  \ENDFOR
  \STATE \textbf{Return:} the found $[\dpt_j, \aset{P}^s, \aset{Q}]$; or No Intersect 
\end{Algorithm}
\end{minipage}\quad
\begin{minipage}{0.42\linewidth}
\begin{Algorithm}{Hit-and-Run}{alg:hitandrun}
  \STATE $\function{HitRun} {\aset{P}, \set{\vec{y}^j}, \dpt^0}$
  \FORALL{$i = 1 \ldots K$}
  \STATE (1) Choose a random direction: 
  \STATE \quad $\nu_j \sim \gauss{0}{1}$
  \STATE \quad $\vec{v} \gets \sum_j{\nu_j \cdot \vec{y}^j}$
  \STATE (2) Sample uniformly along $\vec{v}$ using rejection sampling:
  \STATE  Choose $\Omega$ s.t. $\dpt^{i-1} + \Omega \cdot \vec{v} \notin \aset{P}$ %
  \REPEAT
    \STATE $\omega \sim \function{Unif}{0,\Omega}$
    \STATE $\dpt^i \gets \dpt^{i-1} + \omega \cdot \vec{v}$
    \STATE $\Omega \gets \omega$
  \UNTIL{$\dpt^i \in \aset{P}$}
  \ENDFOR
  \STATE \textbf{Return:} $\dpt^K$
\end{Algorithm}
\end{minipage}
\end{figure*}

\subsubsection{Intersection of Convex Sets}

We now outline Bertsimas and Vempala's query-based procedure for
determining whether two convex sets (e.g., $\xminus$ and
$\ball{C^t}{\lpCost{1}}$) intersect. Their \algoName{IntersectSearch}
procedure (which we present as Algorithm~\ref{alg:intersect}) is a
randomized Ellipsoid method for determining whether there is an
intersection between two bounded convex sets: $\aset{P}$ is only
accessible through membership queries and $\aset{B}$ provides a
separating hyperplane for any point outside it. They use efficient
query-based approaches to uniformly sample from $\aset{P}$ to obtain
sufficiently many samples such that cutting $\aset{P}$ through the
centroid of these samples with a separating hyperplane from $\aset{B}$
will significantly reduce the volume of $\aset{P}$ with high
probability.  Their technique thus constructs a sequence of
progressively smaller feasible sets $\aset{P}^s \subset
\aset{P}^{s-1}$ until either the algorithm finds a point in $\aset{P}
\cap \aset{Q}$ or it is highly unlikely that the intersection is
non-empty.

Our problem reduces to finding the intersection between $\xminus$ and
$\ball{C^t}{\lpCost{1}}$. Though $\xminus$ may be unbounded, we are
minimizing a cost with bounded equi-cost balls, so we can instead use
the set $\aset{P}^0 = \xminus \cap \ball{2R}{\lpCost{1}}$ (where $R =
\adCostFunc{\dpt^-} > C^t$) is a (convex) bounded subset of $\xminus$
that envelops all of $\ball{C^t}{\lpCost{1}}$ and thus the
intersection $\xminus \cap \ball{C^t}{\lpCost{1}}$ if it exists. We
also assume that there is some $r>0$ such that there is an $r$-ball
contained in the convex set $\xminus$; \ie\ there exists
$\vec{y}\in\xminus$ such that $\ball{r}{\lpCost{1};\vec{y}} \subset
\xminus$.  We now detail this \algoName{IntersectSearch} procedure
(Algorithm~\ref{alg:intersect}).

The backbone of the algorithm is the capability to sample uniformly
from an unknown but bounded convex body by means of the
\algoName{hit-and-run} random walk technique introduced by
\citet{smith-96-hnr} (Algorithm~\ref{alg:hitandrun}). Given an
instance $\dpt^j \in \aset{P}^{s-1}$, \algoName{hit-and-run} selects a
random direction $\vec{v}$ through $\dpt^j$ (we return to the
selection of $\vec{v}$ in Section~\ref{sec:sampling}). Since
$\aset{P}^{s-1}$ is a bounded convex set, the set $\Omega =
\set[\dpt^j + \omega \vec{v} \in \aset{P}^{s-1}]{\omega > 0}$ is a
bounded interval indexing all feasible points along direction
$\vec{v}$ through $\dpt^j$. Sampling $\omega$ uniformly from $\Omega$
(using rejection sampling) yields the next step of the random walk;
$\dpt^j + \omega \vec{v}$.  Under the appropriate conditions (see
Section~\ref{sec:sampling}), the \algoName{hit-and-run} random walk
generates a sample uniformly from the convex body after
$\bigOstar{\dims^3}$ steps\footnote{$\bigOstar{\cdot}$ denotes the
  standard complexity notation $\bigO{\cdot}$ without logarithmic
  terms.}~\citep{lovasz04hnrfast}.

\paragraph{Randomized Ellipsoid Algorithm:} 

We use \algoName{hit-and-run} to obtain $2N$ samples $\set{\dpt^j}$
from $\aset{P}^{s-1} \subset \xminus$ for a single phase of the
randomized ellipsoid algorithm. If any sample $\dpt^j$ satisfies
$\lpCostFunc{1}{\dpt^j} \le C^t$, then $\dpt^j$ is in the intersection
of $\xminus$ and $\ball{C^t}{\lpCost{1}}$ and the procedure is
complete.  Otherwise, we want to significantly reduce the size of
$\aset{P}^{s-1}$ without excluding any of $\ball{C^t}{\lpCost{1}}$ so
that sampling concentrates toward the intersection (if it
exists)---for this we need a separating hyperplane for
$\ball{C^t}{\lpCost{1}}$. For any point $\vec{y} \notin
\ball{C^t}{\lpCost{1}}$, the (sub)gradient of the \LP[1] cost given by
\begin{equation}
  \label{eq:gradient}
  h_d^\vec{y} = \ithCost{d} \sign\left(y_d - \xtargetc_d\right) \enspace,
\end{equation}
and is a separating hyperplane for $\vec{y}$ and
$\ball{C^t}{\lpCost{1}}$.

To achieve efficiency, we choose a point $\vec{z} \in \aset{P}^{s-1}$
so that cutting $\aset{P}^{s-1}$ through $\vec{z}$ with the hyperplane
$\vec{h}^\vec{z}$ eliminates a significant fraction of
$\aset{P}^{s-1}$. To do so, $\vec{z}$ must be centrally located within
$\aset{P}^{s-1}$.  We use the
empirical centroid of the half of our samples in $\aset{R}$: $\vec{z}
= N^{-1}\sum_{\dpt \in \aset{R}}{\dpt}$ (the other half we will
be used in Section~\ref{sec:sampling}). We cut $\aset{P}^{s-1}$ with
the hyperplane $\vec{h}^\vec{z}$ through $\vec{z}$; \ie\ 
$\aset{P}^s = \aset{P}^{s-1} \cap \aset{H}_{\vec{z}}$ where
$\aset{H}_{\vec{z}}$ is the halfspace
\begin{equation}
\label{eq:halfspace}
\aset{H}_\vec{z} =
\set[\dpt^\top\vec{h}^\vec{z} \le \vec{z}^\top\vec{h}^\vec{z}]{\dpt}
\enspace .
\end{equation}
As shown by Bertsimas and Vempala, this cut achieves
$\function{vol}{\aset{P}^{s}} \le \frac{2}{3}
\function{vol}{\aset{P}^{s-1}}$ with high probability if $N =
\bigOstar{\dims}$ and $\aset{P}^{s-1}$ is near-isotropic (see
Section~\ref{sec:sampling}).
Since the ratio of
volumes between the initial circumscribing and inscribing balls of the
feasible set is $\left(R/r\right)^\dims$, the algorithm can
terminate after $T = \bigO{\dims \log(R/r)}$ unsuccessful iterations
with a high probability that the intersection is empty.

Because every
iteration in Algorithm~\ref{alg:intersect} requires $N=\bigOstar{D}$
samples, each of which need $K=\bigOstar{D^3}$ random walk steps, and
there are $\bigOstar{D}$ iterations, the total number of membership
queries required by Algorithm~\ref{alg:intersect} is $\bigOstar{D^5}$.

\subsubsection{Sampling from a Queriable Convex Body}
\label{sec:sampling}

In the randomized Ellipsoid algorithm, random samples are used for two
purposes: estimating the convex body's centroid and maintaining the
conditions required for the \algoName{hit-and-run} sampler to
efficiently generate points uniformly from a sequence of shrinking
convex bodies.  Until this point, we assumed the
\algoName{hit-and-run} random walk efficiently produces uniformly
random samples from any bounded convex body $\aset{P}$ accessible
through membership queries.  However, if the body is severely
elongated, randomly selected directions will rarely align with the
long axis of the body and our random walk will take small steps
(relative to the long axis) and mix slowly. For the sampler to mix
effectively, we need the convex body $\aset{P}$ to be sufficiently
round, or more formally \term{near-isotropic}; \ie\ for any unit
vector $\vec{v}$, $\expect{\dpt \sim
  \aset{P}}{\left(\vec{v}^\top\left(\dpt - \expect{\dpt \sim
        \aset{P}}{\dpt}\right)\right)^2}$ is bounded between $1/2$ and
$3/2$ of $\function{vol}{\aset{P}}$.

If the body is not near-isotropic, we must rescale $\xspace$ with an
appropriate affine transformation $\mat{T}$ so the resulting body
$\aset{P}^\prime$ is near-isotropic. With sufficiently many samples
from $\aset{P}$ we can estimate $\mat{T}$ as their empirical
covariance matrix. Instead, we rescale $\xspace$ implicitly using a
technique described by \citet{bertsimas04convexrw}. We maintain a set
$\aset{Q}$ of sufficiently many uniform samples from the body
$\aset{P}^s$ and in the \algoName{hit-and-run} algorithm
(Algorithm~\ref{alg:hitandrun}) we sample the direction $\vec{v}$
based on this set.  Intuitively, because the samples in $\aset{Q}$ are
distributed uniformly in $\aset{P}^s$, the directions we sample based
on the points in $\aset{Q}$ implicitly reflect the covariance
structure of $\aset{P}^s$. This is equivalent to sampling the
direction $\vec{v}$ from a normal distribution with zero mean the
covariance of $\aset{P}$.

We must ensure $\aset{Q}$ is a set of sufficiently many samples from
$\aset{P}^s$ after each cut: $\aset{P}^{s} \gets \aset{P}^{s-1} \cap
\aset{H}_{\vec{z}^{s}}$. To do so, we initially resample $2N$ points
from $\aset{P}^{s-1}$ using \algoName{hit-and-run}---half of these,
$\aset{R}$, are used to estimate the centroid $\vec{z}^s$ for the cut
and the other half, $\aset{S}$, are used to repopulate $\aset{Q}$
after the cut. Because $\aset{S}$ contains independent uniform samples
from $\aset{P}^{s-1}$, those in $\aset{P}^s$ after the cut constitute
independent uniform samples from $\aset{P}^s$ (\ie\ rejection
sampling).
By choosing $N$ sufficiently large, our cut will be sufficiently
deep and we will have sufficiently many points to resample $\aset{P}^s$
after the cut.

Finally, we also need an initial set $\aset{Q}$ of uniform samples
from $\aset{P}^0$ but, in our problem, we only have a single point
$\dpt^- \in \xminus$.  Fortunately, there is an iterative procedure
for putting the initial convex set $\aset{P}^0$ into a near-isotropic
position from which we obtain $\aset{Q}$.  The \algoName{RoundingBody}
algorithm described by \citet{sa_volume} uses $\bigOstar{D^4}$
membership queries to transforms the convex body into a near-isotropic
position. We use this as a preprocessing step for
Algorithms~\ref{alg:intersect} and~\ref{alg:setsearch}; that is, given
$\xminus$ and $\dpt^- \in \xminus$ we make $\aset{P}^0 = \xminus \cap
\ball{2R}{\lpCost{1};\dpt^-}$ and then use the \algoName{RoundingBody}
algorithm to produce an initial uniform sample $\aset{Q} = \set{\dpt^j
  \in \aset{P}^0}$. These sets are then the inputs to our search
algorithms.

\subsubsection{Optimization over \LP[1] Balls}
\label{sec:ballOpt}

We now revisit the outermost optimization loop (for searching the
minimum feasible cost) of the algorithm and suggest improvements.
First, since $\xtarget$, $\dpt^-$ and $\aset{Q}$ are the same for
every iteration of the optimization procedure, we only need to run the
\algoName{RoundingBody} procedure once as a preprocessing step rather
than running it as a preprocessing step every time
\algoName{IntersectSearch} is invoked. The set of samples $\set{\dpt^j
  \in \aset{P}^0}$ produced by \algoName{RoundingBody} are sufficient
to initialize the \algoName{IntersectSearch} at each stage of the
binary search over $C^t$. Second, the separating hyperplane
$\vec{h}_f^\vec{y}$ given by Eq.~\eqref{eq:gradient} does not depend
on the target cost $C^t$ but only on $\xtarget$, the common center of
all the \LP[1] balls. In fact, the separating hyperplane at point
$\vec{y}$ is valid for all \LP[1]-balls of cost $C <
\adCostFunc{\vec{y}}$. Further, if $C < C^t$, we have
$\ball{C}{\lpCost{1}} \subset \ball{C^t}{\lpCost{1}}$. Thus, the final
state from a successful call to \algoName{IntersectSearch} for the
$C^t$-ball as the starting state for any subsequent call to
\algoName{IntersectSearch} for all $C < C^t$.  These improvements are
reflected in our final procedure \algoName{SetSearch} in
Algorithm~\ref{alg:setsearch}---the total number of queries required
is also $\bigOstar{D^5}$. %

\begin{figure}
\begin{center}
  \begin{minipage}{0.7\linewidth}
  \begin{Algorithm}{Convex $\xminus$ Set Search}{alg:setsearch}
      \STATE $\function{SetSearch}{\aset{P}, \aset{Q}=\set{\dpt^j \in \aset{P}}, \maxCost, \minCost, \multGoal}$
      \STATE $\dpt^\ast \gets \dpt^-$ and $t \gets 0$
      \WHILE{$\maxCost[t] / \minCost[t] > 1+\multGoal$}
        \STATE $C_t \gets \sqrt{\maxCost[t] \cdot \minCost[t]}$
        \STATE $[\dpt^\ast, \aset{P}^\prime, \aset{Q}^\prime] \gets \function{IntersectSearch}{\aset{P},\aset{Q},C}$
        \IF{intersection found}
          \STATE $\maxCost[t+1] \gets \adCostFunc{\dpt^\ast}$ and $\minCost[t+1] \gets \minCost[t]$
          \STATE $\aset{P} \gets \aset{P}^\prime$ and $\aset{Q} \gets \aset{Q}^\prime$
        \ELSE
        \STATE $\maxCost[t+1] \gets \maxCost[t]$ and $\minCost[t+1] \gets C_t$
        \ENDIF
        \STATE $t \gets t+1$
      \ENDWHILE
      \STATE \textbf{Return:} $\dpt^\ast$
  \end{Algorithm}
  \end{minipage}
\end{center}
\end{figure}

\newcommand{\entropy}[1]{\function{H}{#1}}

\section{General \LP\ Costs}
\label{sec:LPCosts}

Here we further extend \kSearchability\ over the family of \convexClass\ to the full family of \LP[p] costs for any $0 < p
< \infty$. As we demonstrate in this section,
many \LP[p] costs are not generally \kSearchable\ for all $\multGoal >
0$ over the family of \convexClass (\ie\ we show that finding
an \kIMAC\ for this family can require exponentially many queries in
$\dims$ and \multGoal). In fact, only the weighted \LP[1] costs are
known to have (randomized) polynomial query strategies when either the
positive or negative set is convex.

\subsection{Convex Positive Set}
\label{sec:genLPPositive}

\newcommand{\simplexMin}{\dpt^\ast}

Here we explore the ability of \MLS\ and \KMLS\ algorithms presented
in Section~\ref{sec:pos-convex} to find solutions to the near-optimal
evasion problem for \LP[p] cost functions with $p \neq 1$.
Particularly for $p>1$ we will be exploring the consequences of using
the \MLS\ algorithms using more search directions than just the $2
\cdot \dims$ axis-aligned directions.  Figure~\ref{fig:queryBounds}
demonstrates how queries can be used to construct upper and lower
bounds on general \LP[p] costs. The following Lemma also summarizes
well known bounds on general \LP[p] costs based on an \LP[1] cost.

\begin{lemma}
  \label{lma:L1-bound}
  The largest \LP[p] ($p>1$) ball enclosed within an \LP[1] ball has a
  radius (cost) of $\dims^{\frac{1-p}{p}}$ and for $p=\infty$ the
  radius is $\dims^{-1}$.
\end{lemma}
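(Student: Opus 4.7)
The plan is to prove this by identifying the largest $r>0$ such that the $L_p$ ball of radius $r$ centered at $\xtarget$ is contained in the unit $L_1$ ball centered at $\xtarget$. By translation invariance we may center both balls at the origin, and by scale invariance it suffices to take the outer $L_1$ ball to have unit radius. Containment then reduces to the requirement $\|\dpt\|_p \le r \Rightarrow \|\dpt\|_1 \le 1$, which is equivalent to
\[
r \;=\; \frac{1}{\displaystyle\max_{\|\dpt\|_p \le 1}\|\dpt\|_1}\enspace.
\]

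The key inequality I would invoke is the standard norm comparison $\|\dpt\|_1 \le \dims^{1-1/p}\,\|\dpt\|_p$, which follows from H\"older's inequality applied to $|\dptc_d|\cdot 1$ with conjugate exponents $p$ and $p/(p-1)$. Equality is attained by the uniform vector $\dpt = \dims^{-1/p}(1,1,\ldots,1)$, since then $\|\dpt\|_p = 1$ and $\|\dpt\|_1 = \dims \cdot \dims^{-1/p} = \dims^{1-1/p}$. Hence $\max_{\|\dpt\|_p \le 1}\|\dpt\|_1 = \dims^{(p-1)/p}$ and so the optimal radius is $r = \dims^{-(p-1)/p} = \dims^{(1-p)/p}$, as claimed.

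For the $p=\infty$ case I would argue directly: the $L_\infty$ ball of radius $r$ is the cube $[-r,r]^\dims$, and its vertex $(r,r,\ldots,r)$ has $L_1$ norm $\dims\cdot r$. Containment in the unit $L_1$ ball is equivalent to $\dims\cdot r \le 1$, giving $r = \dims^{-1}$, consistent with the limit of the formula $\dims^{(1-p)/p}$ as $p\to\infty$.

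No step here is a real obstacle; the only thing worth being careful about is justifying equality in H\"older (hence tightness of the bound), which is why I would exhibit the uniform extremizer explicitly rather than merely quoting the inequality. This witness also serves double duty as the point showing that the claimed $r$ cannot be improved, since scaling it by any factor $>1$ produces a point of $L_p$ norm $>1$ whose $L_1$ norm exceeds the threshold.
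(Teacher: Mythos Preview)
Your argument is correct. The paper does not actually prove this lemma; it introduces it as a summary of ``well known bounds'' and states it without proof. Your reduction to computing $\max_{\|\dpt\|_p\le 1}\|\dpt\|_1$ via H\"older's inequality, together with the explicit uniform extremizer, is the standard way to justify the claimed radius, and your direct treatment of the $p=\infty$ case via the cube vertex is likewise sound.
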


\subsubsection{Bounding \LP[p] Balls}
\label{sec:boundingLPBalls}

\begin{figure}[t]
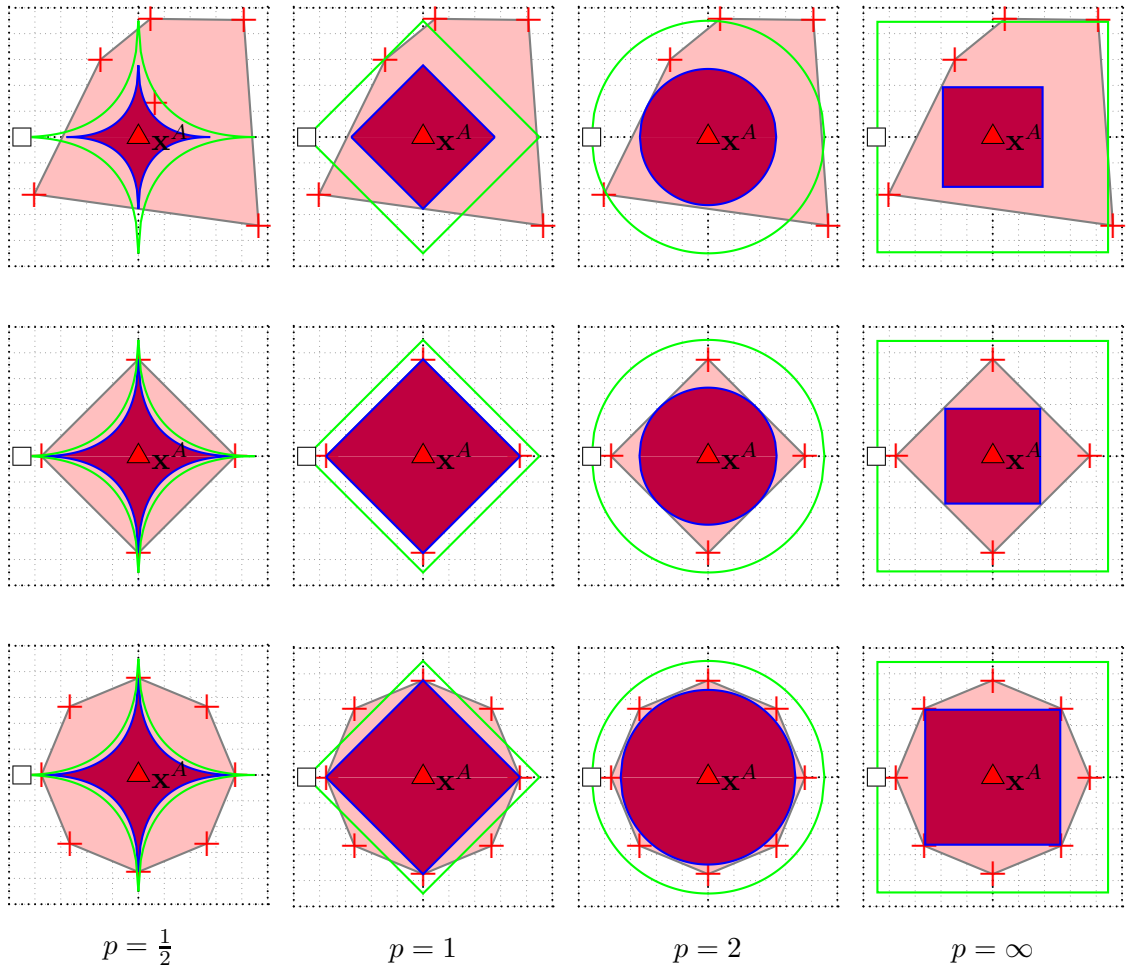


\begin{minipage}{.24\linewidth}
\begin{center}
\renewcommand{\picwidth}{0.47\linewidth}
\psset{unit=\picwidth}
\pspicture(-1,-1)(1,1)
  \psgrid[griddots=20,subgriddots=5,gridlabels=0pt](0,0)(-1,-1)(1,1)
  \savedata{\probes}[{{0.8147,0.9058},{0.1270,0.2647},{-0.8049,-0.4430},{0.0938,0.9150},{0.9298,-0.6848},{-0.292,0.6006}}]
  \savedata{\hull}[{{0.8147,0.9058},{0.0938,0.9150},{-0.292,0.6006},{-0.8049,-0.4430},{0.9298,-0.6848}}]
  \dataplot[fillstyle=solid,fillcolor=pink,plotstyle=polygon,linecolor=gray]{\hull}
  \dataplot[plotstyle=dots,showpoints=true,dotstyle=+,dotsize=8pt,linecolor=red]{\probes}
  \lpcontourplot[linecolor=blue,fillstyle=solid,fillcolor=purple]{0.5}{0.555195}
  \lpcontourplot[linecolor=green]{0.5}{0.9}
  \placeTarget
  \psdots[dotstyle=square,dotsize=8pt,linecolor=black](-0.9,0.0)
\endpspicture
\end{center}
\end{minipage}
\begin{minipage}{.24\linewidth}
\begin{center}
\renewcommand{\picwidth}{0.47\linewidth}
\psset{unit=\picwidth}
\pspicture(-1,-1)(1,1)
  \psgrid[griddots=20,subgriddots=5,gridlabels=0pt](0,0)(-1,-1)(1,1)
  \savedata{\probes}[{{0.8147,0.9058},{0.1270,0.2647},{-0.8049,-0.4430},{0.0938,0.9150},{0.9298,-0.6848},{-0.292,0.6006}}]
  \savedata{\hull}[{{0.8147,0.9058},{0.0938,0.9150},{-0.292,0.6006},{-0.8049,-0.4430},{0.9298,-0.6848}}]
  \dataplot[fillstyle=solid,fillcolor=pink,plotstyle=polygon,linecolor=gray]{\hull}
  \dataplot[plotstyle=dots,showpoints=true,dotstyle=+,dotsize=8pt,linecolor=red]{\probes}
  \lpcontourplot[linecolor=blue,fillstyle=solid,fillcolor=purple]{1.0}{0.555195}
  \lpcontourplot[linecolor=green]{1.0}{0.9}
  \placeTarget
  \psdots[dotstyle=square,dotsize=8pt,linecolor=black](-0.9,0.0)
\endpspicture
\end{center}
\end{minipage}
\begin{minipage}{.24\linewidth}
\begin{center}
\renewcommand{\picwidth}{0.47\linewidth}
\psset{unit=\picwidth}
\pspicture(-1,-1)(1,1)
  \psgrid[griddots=20,subgriddots=5,gridlabels=0pt](0,0)(-1,-1)(1,1)
  \savedata{\probes}[{{0.8147,0.9058},{0.1270,0.2647},{-0.8049,-0.4430},{0.0938,0.9150},{0.9298,-0.6848},{-0.292,0.6006}}]
  \savedata{\hull}[{{0.8147,0.9058},{0.0938,0.9150},{-0.292,0.6006},{-0.8049,-0.4430},{0.9298,-0.6848}}]
  \dataplot[fillstyle=solid,fillcolor=pink,plotstyle=polygon,linecolor=gray]{\hull}
  \dataplot[plotstyle=dots,showpoints=true,dotstyle=+,dotsize=8pt,linecolor=red]{\probes}
  \lpcontourplot[linecolor=blue,fillstyle=solid,fillcolor=purple]{2.0}{0.526973}
  \lpcontourplot[linecolor=green]{2.0}{0.9}
  \placeTarget
  \psdots[dotstyle=square,dotsize=8pt,linecolor=black](-0.9,0.0)
\endpspicture
\end{center}
\end{minipage}
\begin{minipage}{.24\linewidth}
\begin{center}
\renewcommand{\picwidth}{0.47\linewidth}
\psset{unit=\picwidth}
\pspicture(-1,-1)(1,1)
  \psgrid[griddots=20,subgriddots=5,gridlabels=0pt](0,0)(-1,-1)(1,1)
  \savedata{\probes}[{{0.8147,0.9058},{0.1270,0.2647},{-0.8049,-0.4430},{0.0938,0.9150},{0.9298,-0.6848},{-0.292,0.6006}}]
  \savedata{\hull}[{{0.8147,0.9058},{0.0938,0.9150},{-0.292,0.6006},{-0.8049,-0.4430},{0.9298,-0.6848}}]
  \dataplot[fillstyle=solid,fillcolor=pink,plotstyle=polygon,linecolor=gray]{\hull}
  \dataplot[plotstyle=dots,showpoints=true,dotstyle=+,dotsize=8pt,linecolor=red]{\probes}
  \lpcontourplot[linecolor=blue,fillstyle=solid,fillcolor=purple]{infty}{0.39369}
  \lpcontourplot[linecolor=green]{infty}{0.9}
  \placeTarget
  \psdots[dotstyle=square,dotsize=8pt,linecolor=black](-0.9,0.0)
\endpspicture
\end{center}
\end{minipage}
\vspace{2em}

\begin{minipage}{.24\linewidth}
\begin{center}
\renewcommand{\picwidth}{0.47\linewidth}
\psset{unit=\picwidth}
\pspicture(-1,-1)(1,1)
  \psgrid[griddots=20,subgriddots=5,gridlabels=0pt](0,0)(-1,-1)(1,1)
  \circleQuery[fillstyle=solid,fillcolor=pink,plotstyle=polygon,linecolor=gray]{4}{.75}
  \circleQuery[plotstyle=dots,showpoints=true,dotstyle=+,dotsize=8pt,linecolor=red]{4}{.75}
  \lpcontourplot[linecolor=blue,fillstyle=solid,fillcolor=purple]{0.5}{0.75}
  \lpcontourplot[linecolor=green]{0.5}{0.9}
  \placeTarget
  \psdots[dotstyle=square,dotsize=8pt,linecolor=black](-0.9,0.0)
\endpspicture
\end{center}
\end{minipage}
\begin{minipage}{.24\linewidth}
\begin{center}
\renewcommand{\picwidth}{0.47\linewidth}
\psset{unit=\picwidth}
\pspicture(-1,-1)(1,1)
  \psgrid[griddots=20,subgriddots=5,gridlabels=0pt](0,0)(-1,-1)(1,1)
  \circleQuery[fillstyle=solid,fillcolor=pink,plotstyle=polygon,linecolor=gray]{4}{.75}
  \circleQuery[plotstyle=dots,showpoints=true,dotstyle=+,dotsize=8pt,linecolor=red]{4}{.75}
  \lpcontourplot[linecolor=blue,fillstyle=solid,fillcolor=purple]{1.0}{0.75}
  \lpcontourplot[linecolor=green]{1.0}{0.9}
  \placeTarget
  \psdots[dotstyle=square,dotsize=8pt,linecolor=black](-0.9,0.0)
\endpspicture
\end{center}
\end{minipage}
\begin{minipage}{.24\linewidth}
\begin{center}
\renewcommand{\picwidth}{0.47\linewidth}
\psset{unit=\picwidth}
\pspicture(-1,-1)(1,1)
  \psgrid[griddots=20,subgriddots=5,gridlabels=0pt](0,0)(-1,-1)(1,1)
  \circleQuery[fillstyle=solid,fillcolor=pink,plotstyle=polygon,linecolor=gray]{4}{.75}
  \circleQuery[plotstyle=dots,showpoints=true,dotstyle=+,dotsize=8pt,linecolor=red]{4}{.75}
  \lpcontourplot[linecolor=blue,fillstyle=solid,fillcolor=purple]{2.0}{0.53033}
  \lpcontourplot[linecolor=green]{2.0}{0.9}
  \placeTarget
  \psdots[dotstyle=square,dotsize=8pt,linecolor=black](-0.9,0.0)
\endpspicture
\end{center}
\end{minipage}
\begin{minipage}{.24\linewidth}
\begin{center}
\renewcommand{\picwidth}{0.47\linewidth}
\psset{unit=\picwidth}
\pspicture(-1,-1)(1,1)
  \psgrid[griddots=20,subgriddots=5,gridlabels=0pt](0,0)(-1,-1)(1,1)
  \circleQuery[fillstyle=solid,fillcolor=pink,plotstyle=polygon,linecolor=gray]{4}{.75}
  \circleQuery[plotstyle=dots,showpoints=true,dotstyle=+,dotsize=8pt,linecolor=red]{4}{.75}
  \lpcontourplot[linecolor=blue,fillstyle=solid,fillcolor=purple]{infty}{0.375}
  \lpcontourplot[linecolor=green]{infty}{0.9}
  \placeTarget
  \psdots[dotstyle=square,dotsize=8pt,linecolor=black](-0.9,0.0)
\endpspicture
\end{center}
\end{minipage}
\vspace{2em}

\begin{minipage}{.24\linewidth}
\begin{center}
\renewcommand{\picwidth}{0.47\linewidth}
\psset{unit=\picwidth}
\pspicture(-1,-1)(1,1)
  \psgrid[griddots=20,subgriddots=5,gridlabels=0pt](0,0)(-1,-1)(1,1)
  \circleQuery[fillstyle=solid,fillcolor=pink,plotstyle=polygon,linecolor=gray]{8}{.75}
  \circleQuery[plotstyle=dots,showpoints=true,dotstyle=+,dotsize=8pt,linecolor=red]{8}{.75}
  \lpcontourplot[linecolor=blue,fillstyle=solid,fillcolor=purple]{0.5}{0.75}
  \lpcontourplot[linecolor=green]{0.5}{0.9}
  \placeTarget
  \psdots[dotstyle=square,dotsize=8pt,linecolor=black](-0.9,0.0)
\endpspicture\\[0.5em]
$p = \frac{1}{2}$
\end{center}
\end{minipage}
\begin{minipage}{.24\linewidth}
\begin{center}
\renewcommand{\picwidth}{0.47\linewidth}
\psset{unit=\picwidth}
\pspicture(-1,-1)(1,1)
  \psgrid[griddots=20,subgriddots=5,gridlabels=0pt](0,0)(-1,-1)(1,1)
  \circleQuery[fillstyle=solid,fillcolor=pink,plotstyle=polygon,linecolor=gray]{8}{.75}
  \circleQuery[plotstyle=dots,showpoints=true,dotstyle=+,dotsize=8pt,linecolor=red]{8}{.75}
  \lpcontourplot[linecolor=blue,fillstyle=solid,fillcolor=purple]{1.0}{0.75}
  \lpcontourplot[linecolor=green]{1.0}{0.9}
  \placeTarget
  \psdots[dotstyle=square,dotsize=8pt,linecolor=black](-0.9,0.0)
\endpspicture\\[0.5em]
$p = 1$
\end{center}
\end{minipage}
\begin{minipage}{.24\linewidth}
\begin{center}
\renewcommand{\picwidth}{0.47\linewidth}
\psset{unit=\picwidth}
\pspicture(-1,-1)(1,1)
  \psgrid[griddots=20,subgriddots=5,gridlabels=0pt](0,0)(-1,-1)(1,1)
  \circleQuery[fillstyle=solid,fillcolor=pink,plotstyle=polygon,linecolor=gray]{8}{.75}
  \circleQuery[plotstyle=dots,showpoints=true,dotstyle=+,dotsize=8pt,linecolor=red]{8}{.75}
  \lpcontourplot[linecolor=blue,fillstyle=solid,fillcolor=purple]{2.0}{0.6757}
  \lpcontourplot[linecolor=green]{2.0}{0.9}
  \placeTarget
  \psdots[dotstyle=square,dotsize=8pt,linecolor=black](-0.9,0.0)
\endpspicture\\[0.5em]
$p = 2$
\end{center}
\end{minipage}
\begin{minipage}{.24\linewidth}
\begin{center}
\renewcommand{\picwidth}{0.47\linewidth}
\psset{unit=\picwidth}
\pspicture(-1,-1)(1,1)
  \psgrid[griddots=20,subgriddots=5,gridlabels=0pt](0,0)(-1,-1)(1,1)
  \circleQuery[fillstyle=solid,fillcolor=pink,plotstyle=polygon,linecolor=gray]{8}{.75}
  \circleQuery[plotstyle=dots,showpoints=true,dotstyle=+,dotsize=8pt,linecolor=red]{8}{.75}
  \lpcontourplot[linecolor=blue,fillstyle=solid,fillcolor=purple]{infty}{0.53033}
  \lpcontourplot[linecolor=green]{infty}{0.9}
  \placeTarget
  \psdots[dotstyle=square,dotsize=8pt,linecolor=black](-0.9,0.0)
\endpspicture\\[0.5em]
$p = \infty$
\end{center}
\end{minipage}
\caption{Convex hull for a set of queries and the resulting bounding
  balls for several \LP\ costs. Each row
  represents a unique set of positive (red \posLbl\ points) and
  negative (green \negLbl\ points) queries and each column shows the
  implied upper bound (in green) and lower bound (in blue) for a
  different \LP[p] cost. In the first row, the body is defined by a
  random set of $7$ queries, in the second, the queries are along the
  coordinate axes, and in the third, the queries are around a circle.}
\label{fig:queryBounds}
\end{figure}

In general, suppose we probe along some set of $M$ unit directions and
at some point we have at least one negative point supporting an upper
bound of \maxCost\ and $M$ positive points supporting at a cost of
\minCost. However, the lower bound provided by those $M$ positive
points is the cost of the largest \LP[p] cost ball that fits entirely
within their convex hull; let's say this cost is $C^\dagger <
\minCost$. In order to achieve \multGoal-multiplicative optimality, we
need
\[
  \frac{\maxCost}{C^\dagger} \le 1 + \multGoal \enspace.
\]
Expanding this, we need
\[
  \left(\frac{\maxCost}{\minCost}\right) \left(\frac{\minCost}{C^\dagger}\right) \le 1 + \multGoal \enspace.
\]
This allows us to break the problem into two parts. The first factor
$\maxCost/\minCost$ is only in terms of parameters controlled
by the multiline search algorithm whereas the second factor
$\minCost/C^\dagger$ depends only on the shape of the \LP[p]
ball as it captures how well the ball is approximated by the convex
hull of the search directions. These two factors separate our task
into choosing $M$ and \steps\ sufficiently so that their product is less
than $1 + \multGoal$. First we choose factors $\alpha \ge 0$ and
$\beta \ge 0$ so that $(1+\alpha)(1+\beta) \le 1+\multGoal$. Then we
chose $M$ so that
\[
  \frac{\minCost}{C^\dagger} = 1 + \beta
\]
and a parameter $\multGoal^\prime = \alpha$ so that multiline search
with $M$ directions will achieve
\[
  \frac{\maxCost}{\minCost} = 1 + \alpha \enspace .
\]
In doing so, we create a generalized multiline search that is able to
achieve \multGoal-multiplicative optimality.

For example in the case of $p = 1$, we previously saw that choosing $M
= 2\cdot\dims$ allows us to exactly reconstruct the \LP[1] ball so
that $\minCost/C^\dagger = 1$ (\ie\ $\beta = 0$). Thus we can
just make $\alpha = \multGoal$ and we recover our original multiline
search method exactly.

\paragraph{Objective:}

Below we present a number of results that deal with cases when $\beta
> 0$. In this case, what we want to show is that a ratio of
$\frac{\minCost}{C^\dagger} = 1 + \beta$ can be achieved with a
polynomial number of search directions when $\beta \le \multGoal$;
otherwise, $(1+\alpha)(1+\beta) > 1 + \epsilon$. Thus, we will be
trying to find how many search directions are required for to achieve
\[
  \frac{\minCost}{C^\dagger} \le 1 + \epsilon \enspace,
\]
since this is the highest we can allow this ratio to be. Moreover,
since this problem scales linearly with $\minCost$ we will simply
examine the values of $C^\dagger$ that can be achieved for the unit
cost ball (\ie\ w.l.o.g. we make $\minCost = 1$ and rescale). Thus we
will be looking at how many points are required to achieve:
\begin{equation}
  C^\dagger \ge \frac{1}{1+\multGoal} \label{eq:innerCostGoal} \enspace.
\end{equation}
We will try to show that only polynomially many are required for at
least some values of \multGoal.

\begin{lemma}
  \label{lma:enclosedBound}
  If there exists a configuration of $M$ unit search directions with a
  convex hull that yields a bound $C^\dagger$ for the cost function
  $\adCost$ then multi-line search algorithms can use those search
  directions to achieve \multGoal-multiplicative optimality with a
  query complexity that is polynomial in $M$ and \multSteps\ for any
  \[
    \multGoal > \frac{1}{C^\dagger} - 1 \enspace.
  \]
\end{lemma}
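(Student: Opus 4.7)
The plan is to feed the hypothesised $M$ unit-cost directions $\aset{W}=\{\vec{w}_1,\ldots,\vec{w}_M\}$ (each with $\adCostFunc{\vec{w}_i}=1$) into Algorithm~\ref{alg:mls} and run the standard geometric bisection on the probe scale $c$. Only two ingredients need to be verified. For the \emph{geometric step}, if every queried vertex $\xtarget+c\vec{w}_i$ is classified \posLbl\ at scale $c$, then convexity of \xplus\ makes the convex hull $\mathrm{conv}(\{\xtarget\}\cup\{\xtarget+c\vec{w}_i\})$ lie inside \xplus; by positive homogeneity of $\adCost$ along rays from \xtarget\ (already assumed for \MLS), the unit-scale inclusion $\ball{C^\dagger}{\adCost}\subseteq\mathrm{conv}(\{\xtarget\}\cup\{\xtarget+\vec{w}_i\})$ rescales to $\ball{c\cdot C^\dagger}{\adCost}\subseteq\xplus$, witnessing $\function{\MAC}{\classifier,\adCost}>c\cdot C^\dagger$. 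If instead some vertex is classified \negLbl, that vertex directly witnesses $\function{\MAC}{\classifier,\adCost}\le c$. Hence each round of $M$ queries establishes either a new lower or a new upper bound on the \MAC.

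For the \emph{search accounting}, bisecting $c$ over $[\minCost,\maxCost]$ with $M$ queries per round produces scales $c^+_t$ (all \posLbl) and $c^-_t$ (some \negLbl) obeying $c^-_t/c^+_t\le(\maxCost/\minCost)^{2^{-t}}$. The associated \MAC\ bounds are $\maxCost_t=c^-_t$ and $\minCost_t=c^+_t\cdot C^\dagger$, so the stopping criterion $\maxCost_t/\minCost_t\le 1+\multGoal$ reduces to $c^-_t/c^+_t\le C^\dagger(1+\multGoal)$. This is a nontrivial bisection target precisely when $C^\dagger(1+\multGoal)>1$, i.e.\ under the lemma's hypothesis. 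Solving for the smallest such $t$ yields
\[
t\;=\;\left\lceil\log_2\!\left[\frac{\log_2(\maxCost/\minCost)}{\log_2\bigl(C^\dagger(1+\multGoal)\bigr)}\right]\right\rceil,
\]
of the same form as \multSteps\ in Eq.~\eqref{eq:Lmult} with $\log_2(1+\multGoal)$ replaced by $\log_2\bigl(C^\dagger(1+\multGoal)\bigr)$. Subtracting the two gives $t-\multSteps=\log_2\bigl(\log_2(1+\multGoal)/\log_2(C^\dagger(1+\multGoal))\bigr)$, a quantity that depends only on $\multGoal$ and $C^\dagger$ and not on $M$ or $\maxCost/\minCost$, so the total query count $M\cdot t$ is polynomial (in fact linear) in $M$ and \multSteps.

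The main obstacle, and the reason for the strict inequality in the hypothesis, is the boundary behaviour as $\multGoal\downarrow 1/C^\dagger-1$: the denominator $\log_2\bigl(C^\dagger(1+\multGoal)\bigr)$ tends to $0^+$ and $t$ diverges, since the positive witnesses at scale $c^+_t$ never certify more than $\function{\MAC}{\classifier,\adCost}>c^+_t C^\dagger$, and no amount of bisection can drive $c^-_t/c^+_t$ below a threshold that is $\le 1$. I would close the argument by noting that the running assumption $\minCost>0$ from the start of Section~\ref{sec:conv-evasion} keeps $\log_2(\maxCost/\minCost)$ finite, so for every $\multGoal$ strictly exceeding $1/C^\dagger-1$ the bisection actually terminates within the stated $M\cdot t$ query budget.
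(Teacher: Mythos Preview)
Your proposal is correct and follows essentially the same approach as the paper: the paper does not give a standalone proof of this lemma but rather states it as a summary of the preceding discussion in Section~\ref{sec:boundingLPBalls}, which factors the multiplicative-gap requirement as $(1+\alpha)(1+\beta)\le 1+\multGoal$ with $1+\beta=1/C^\dagger$ and then runs \MLS\ to the modified accuracy $1+\alpha=C^\dagger(1+\multGoal)$---exactly your stopping criterion $c^-_t/c^+_t\le C^\dagger(1+\multGoal)$. Your explicit computation of $t$ and the observation that $t-\multSteps$ depends only on $\multGoal$ and $C^\dagger$ simply fleshes out details the paper leaves implicit.
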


\begin{corollary}
  \label{cor:perfectBound}
  If there exists a configuration of $M$ unit search directions with a
  convex hull that yields a bound $C^\dagger = 1$ for the cost
  function $\adCost$ then multi-line search algorithms then multi-line
  search algorithms can use those search directions to achieve
  \multGoal-multiplicative optimality with a query complexity that is
  polynomial in $M$ and \multSteps\ for any $\multGoal > 0$.
\end{corollary}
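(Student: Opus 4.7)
The plan is to derive this directly from Lemma~\ref{lma:enclosedBound} by specializing $C^\dagger=1$. The Lemma guarantees polynomial query complexity in $M$ and $\multSteps$ whenever $\multGoal > \tfrac{1}{C^\dagger} - 1$, so I simply need to check that substituting $C^\dagger=1$ reduces the hypothesis on $\multGoal$ to the vacuous $\multGoal > 0$.

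More concretely, I would first invoke the premise of the corollary: there exists a configuration of $M$ unit search directions whose convex hull certifies $C^\dagger = 1$ for $\adCost$. By Lemma~\ref{lma:enclosedBound}, for any $\multGoal$ satisfying $\multGoal > \tfrac{1}{C^\dagger} - 1$, a multi-line search using these $M$ directions attains \multGoal-multiplicative optimality with query complexity polynomial in $M$ and \multSteps. Substituting $C^\dagger = 1$ yields $\tfrac{1}{C^\dagger} - 1 = 0$, so the admissible range becomes $\multGoal > 0$, matching the conclusion of the corollary. This directly produces the claimed algorithm and complexity bound.

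There is essentially no obstacle: the corollary is a one-line specialization of the Lemma and does not require any additional geometric or algorithmic argument. The only thing worth noting briefly in the write-up is that $C^\dagger = 1$ is the best possible bound attainable from inscribed convex hulls when the cost has been normalized so that the unit ball is the reference (as done in Eq.~\eqref{eq:innerCostGoal}), so the factor $\beta$ in the decomposition $(1+\alpha)(1+\beta)\le 1+\multGoal$ can be driven to zero and the entire budget $\multGoal$ can be allocated to the binary-search factor $\alpha$; this observation justifies why no extra overhead in \multSteps\ is introduced beyond what the Lemma already accounts for.
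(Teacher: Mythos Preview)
Your proposal is correct and matches the paper's approach: the corollary is stated immediately after Lemma~\ref{lma:enclosedBound} with no separate proof, since it follows by the one-line substitution $C^\dagger=1$ that you describe. Your additional remark about the $(1+\alpha)(1+\beta)$ decomposition with $\beta=0$ is a helpful elaboration but not required.
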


As this corollary reaffirms, for $p=1$ using the $M = 2 \cdot \dims$
coordinate directions allows multi-line search algorithms to achieve
\multGoal-multiplicative optimality for any $\multGoal > 0$ with a
query complexity that is polynomial in $M$ and \multSteps.

\subsubsection{Multiline Search for $p<1$}

A simple result holds here. Namely, since the unit \LP[1] ball bounds
any unit \LP[p] balls with $p < 1$ we can achieve
$\minCost/C^\dagger = 1$ using only the $2\cdot\dims$ corners
of the hyperoctahedron as search directions. Thus we can efficiently
search for $p < 1$ for any value of $\multGoal > 0$. Whether or not
the \LP[p] ($p < 1$) cost functions can be efficiently searched with
fewer search directions is an open question.

\subsubsection{Multiline Search for $p>1$}

For this case, we can trivially use the \LP[1] bound on \LP[p] balls as summarized by the following corollary:
\begin{corollary}
  \label{thm:LP-polynomial}
  For $1 < p < \infty$ and $\multGoal \in
  \left(\dims^{\frac{p-1}{p}}-1,\infty\right)$ any multi-line search
  algorithm can achieve \multGoal-multiplicative optimality on
  $\lpCost{p}$ using $M = 2\cdot\dims$ search directions. Similarly
  for $p=\infty$ and $\multGoal \in \left(\dims-1,\infty\right)$ any
  multi-line search algorithm can achieve \multGoal-multiplicative
  optimality on $\lpCost{\infty}$.
\end{corollary}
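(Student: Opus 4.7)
The plan is to reduce this directly to the two earlier results: Lemma~\ref{lma:L1-bound}, which tells us the size of the largest \LP[p] ball inscribed in a unit \LP[1] ball, and Lemma~\ref{lma:enclosedBound}, which converts such an inscribed-ball bound into a multiplicative-optimality guarantee for multi-line search. So the entire corollary should fall out by choosing the right $2\cdot\dims$ search directions and tracking what $C^\dagger$ comes out to.

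First I would take the $M = 2\cdot\dims$ axis-aligned directions $\pm \coordinVect{d}$ as the search set $\aset{W}$. For any $p \in (1,\infty]$, each of these has $\lpCostFunc{p}{\pm\coordinVect{d}} = 1$, so they are already unit-cost vectors for \LP[p] and are valid search directions in the \MLS\ framework. Their convex hull is exactly the unit \LP[1] ball (the hyperoctahedron centered at $\xtarget$). By the convexity of $\xplus$, if every one of these $2\cdot\dims$ queries at common cost $C$ returns $\posLbl$, then the entire hyperoctahedron of radius $C$ lies in $\xplus$, and in particular the largest \LP[p] ball inscribed in that hyperoctahedron does as well.

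Next I would invoke Lemma~\ref{lma:L1-bound} to quantify this inscribed radius: it equals $\dims^{(1-p)/p}$ for $1 < p < \infty$ and $\dims^{-1}$ for $p = \infty$. Thus whenever the axis queries at \LP[p]-cost $\minCost$ all return $\posLbl$, the implied lower bound on \MAC\ is $C^\dagger = \minCost \cdot \dims^{(1-p)/p}$ (respectively $\minCost \cdot \dims^{-1}$). Plugging these values into Lemma~\ref{lma:enclosedBound}, multi-line search achieves \multGoal-multiplicative optimality with query complexity polynomial in $M = 2\cdot\dims$ and \multSteps\ for every
\[
  \multGoal \;>\; \frac{1}{C^\dagger/\minCost} - 1 \;=\; \dims^{(p-1)/p} - 1
\]
in the finite-$p$ case, and for every $\multGoal > \dims - 1$ in the $p = \infty$ case, matching the stated ranges.

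The only subtlety worth checking is that the \MLS\ framework is really being applied correctly for $p \neq 1$: the search directions must be $\adCost$-unit vectors (they are, since $\lpCostFunc{p}{\coordinVect{d}} = 1$), the ray-restricted cost must be positively homogeneous in the step along each direction (it is, since $\lpCost[\featCost]{p}$ is a norm centered at \xtarget), and pruning a direction by convexity of $\xplus$ remains valid (unchanged by the choice of $p$). With those three checks in place the corollary is immediate, so the main work is really just the arithmetic of pushing $\dims^{(1-p)/p}$ through Lemma~\ref{lma:enclosedBound}; there is no substantial obstacle beyond verifying that the $2\cdot\dims$-vertex hyperoctahedron is still the relevant enclosing body and that the two lemmas compose as expected.
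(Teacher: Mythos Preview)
Your proposal is correct and follows essentially the same approach as the paper's own proof: invoke Lemma~\ref{lma:L1-bound} to obtain the inscribed \LP[p] radius $C^\dagger = \dims^{(1-p)/p}$ (or $\dims^{-1}$ for $p=\infty$) inside the unit \LP[1] hull of the $2\cdot\dims$ axis directions, then plug this into Lemma~\ref{lma:enclosedBound} to read off the range of admissible $\multGoal$. The additional sanity checks you include (unit-cost directions, positive homogeneity, validity of pruning) are sound but not strictly needed beyond what the two lemmas already assume.
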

\begin{proof}
  From Lemma~\ref{lma:L1-bound}, the largest co-centered \LP[p]
  ball contained within the unit \LP[1] ball has radius (cost)
  $\dims^{\frac{1-p}{p}}$ (or $\dims$ for $p=\infty$). The bounds on
  $\multGoal$ then follows from Lemma~\ref{lma:enclosedBound}.
\end{proof}

Unfortunately, this result only applies for a range of \multGoal\ that
grows with \dims, which is insufficient for \kSearchability. In fact,
for some fixed values of \multGoal, there is no query-based strategy
that can bound \LP[p] costs using polynomially-many queries in \dims
as the following result formalizes.

\begin{theorem}
  \label{thm:LP-exponential}
  For $p > 1$, $\dims>0$, any initial bounds $0 < \minCost < \maxCost$
  on the \MAC, and $0 < \multGoal < 2^{\frac{p-1}{p}} - 1$ (or $0 <
  \multGoal < 1$ for $p=\infty$), all algorithms must submit at least
  $\alpha_{p,\multGoal}^\dims$ membership queries (for some constant
  $\alpha_{p,\multGoal} > 1$) in the worst case to be
  \multGoal-multiplicatively optimal on
  $\classSpace^\mathrm{convex,\posLbl}$ for \LP[p] costs.
\end{theorem}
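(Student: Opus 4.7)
The plan is an adversarial lower bound via packing. Let $q=p/(p-1)$ be the conjugate exponent (with $q=1$ for $p=\infty$), so that $1/q=(p-1)/p$. For each sign vector $s\in\{\pm 1\}^\dims$ set $\vec{u}_s=\dims^{-1/q}s$, so that $\|\vec{u}_s\|_q=1$, and define
\[
  \xplus[\classifier_s]\;=\;\ball{\maxCost}{\lpCost{p}}\;\cap\;\bigl\{\dpt:\vec{u}_s^{\top}(\dpt-\xtarget)\le\tau\maxCost\bigr\},
\]
where $\tau=1/(1+\multGoal+\delta)$ and $\delta>0$ is small enough that $\tau>2^{-1/q}$; such $\delta$ exists precisely because $1+\multGoal<2^{(p-1)/p}$. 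Each $\classifier_s$ is the intersection of two convex sets and so lies in $\classSpace^\mathrm{convex,\posLbl}$. By Hölder duality the nearest negative instance to $\xtarget$ has $\LP[p]$-cost $\tau\maxCost/\|\vec{u}_s\|_q=\tau\maxCost$, so $\function{\MAC}{\classifier_s,\lpCost{p}}=\tau\maxCost$. I will also consider the null classifier $\classifier_0$ with $\xplus[\classifier_0]=\ball{\maxCost}{\lpCost{p}}$ and $\function{\MAC}{\classifier_0,\lpCost{p}}=\maxCost$.

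The key observation is that no output $\dpt^\dagger$ can be $\multGoal$-multiplicatively optimal against both $\classifier_0$ and $\classifier_s$: membership in $\xminus[\classifier_0]$ forces $\lpCostFunc{p}{\dpt^\dagger}\ge\maxCost$, whereas optimality against $\classifier_s$ demands $\lpCostFunc{p}{\dpt^\dagger}\le(1+\multGoal)\tau\maxCost<\maxCost$. Hence the algorithm must, for every $s$, issue a query that distinguishes $\classifier_0$ from $\classifier_s$, i.e.\ lands in the slab $\ball{\maxCost}{\lpCost{p}}\cap\{\dpt:\vec{u}_s^{\top}(\dpt-\xtarget)>\tau\maxCost\}$. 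Writing $\vec{v}=(\dpt-\xtarget)/\maxCost$ with $\|\vec{v}\|_p\le 1$, this condition becomes $s^{\top}\vec{v}>\tau\dims^{1/q}$. A Hoeffding bound on the random signs $s$, combined with the standard $\LP[p]$-to-$\LP[2]$ embedding $\|\vec{v}\|_2\le\dims^{\max(0,1/2-1/p)}$, then yields
\[
  \Pr_{s\sim U(\{\pm 1\}^\dims)}\!\left[s^{\top}\vec{v}>\tau\dims^{1/q}\right]\;\le\;\exp(-c_{p,\multGoal}\dims)
\]
for a constant $c_{p,\multGoal}>0$ whenever $\tau>2^{-1/q}$. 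Summing over the adversary's $2^\dims$ sign patterns, each query covers at most $2^\dims e^{-c_{p,\multGoal}\dims}$ of them, and so the algorithm needs at least $\alpha_{p,\multGoal}^\dims:=e^{c_{p,\multGoal}\dims}$ queries to rule out every $\classifier_s$.

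The hard part is making the Hoeffding exponent genuinely linear in $\dims$ uniformly over $p\in(1,\infty]$. The case $p\ge 2$ is clean: $\|\vec{v}\|_2^2\le\dims^{1-2/p}=\dims^{2/q-1}$ exactly cancels the $\dims^{2/q}$ factor in $T^2$, giving $\Omega(\dims)$. The case $1<p<2$ is more delicate because the naive bound $\|\vec{v}\|_2\le 1$ only yields exponent $\Omega(\dims^{2/q})$, which is sub-linear; here one should either replace the sign packing by a randomly rotated packing of the $\LP[q]$-sphere (exploiting that unit $\LP[q]$-vectors have small correlation with a fixed unit $\LP[p]$-vector) or invoke a polytope-approximation lower bound on the $\LP[p]$-ball. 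A further minor step is verifying that the adversary may also consistently answer queries outside $\ball{\maxCost}{\lpCost{p}}$ as negative (they are negative under every $\classifier_s$ and under $\classifier_0$), so the adversary's responses are globally consistent and the pigeonhole-style conclusion on query count goes through.
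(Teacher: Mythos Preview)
Your approach is genuinely different from the paper's and is attractive for $p\ge 2$, but it has a real gap for $1<p<2$ that you yourself flag and do not close.

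\textbf{Comparison.} Both arguments index adversarial alternatives by sign vectors $s\in\{\pm 1\}^{\dims}$, but in different ways. You fix a family $\{\classifier_s\}$ with $\xplus[\classifier_s]=\ball{\maxCost}{\lpCost{p}}\cap\{\vec{u}_s^{\top}(\dpt-\xtarget)\le\tau\maxCost\}$ and argue by Hoeffding plus a union bound that any single query inside the ball can ``kill'' at most an $e^{-c\dims}$ fraction of the $s$'s. The paper instead constructs a \emph{single} adaptive alternative: it takes the convex hull of the orthants (intersected with the $\maxCost$-ball) that contain the positive queries, and shows that unless those orthants form a $\lfloor\delta\dims\rfloor$-Hamming covering of $\{\pm 1\}^{\dims}$ (which needs $2^{\dims(1-H(\delta))}$ queries), some orthant is far from all of them and is cut off by a hyperplane $\vec{w}=(1,\ldots,1)$ at displacement $\maxCost(\dims-K)^{1/q}$. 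The minimum $\lpCost{p}$-cost on that halfspace is then $\maxCost((\dims-K)/\dims)^{1/q}$ via the dual norm, uniformly in $p>1$. The restriction $\multGoal<2^{(p-1)/p}-1$ in the statement is exactly what forces $\delta<1/2$ so that $1-H(\delta)>0$.

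\textbf{Where your argument breaks for $1<p<2$.} Your Hoeffding step bounds $\Pr_s[s^{\top}\vec{v}>\tau\dims^{1/q}]$ by $\exp\bigl(-\tau^2\dims^{2/q}/(2\|\vec{v}\|_2^2)\bigr)$. For $p\ge 2$ the embedding $\|\vec{v}\|_2^2\le \dims^{1-2/p}$ cancels perfectly to give exponent $\tau^2\dims/2$; indeed in this regime your argument works for every $\multGoal>0$, which is even stronger than the stated range. But for $1<p<2$ you only have $\|\vec{v}\|_2\le 1$, and the exponent $\tau^2\dims^{2/q}/2$ is sublinear since $q>2$. Neither of your proposed remedies is carried out: a ``randomly rotated packing of the $\LP[q]$-sphere'' would require you to redo the whole distinguishability argument with non-sign-vector normals (and it is not clear the slab classifiers remain convex-positive and consistent with the initial bounds), and ``invoke a polytope-approximation lower bound'' is essentially what the paper does and is not a consequence of your setup. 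A sharper large-deviation analysis that conditions on $\|\vec{v}\|_1\ge\tau\dims^{1/q}$ (forcing effective support $\ge\tau^q\dims$) might rescue the bound, but you have not supplied it. As written, your proof establishes the theorem only for $p\ge 2$; for $1<p<2$ the paper's orthant-covering argument is needed.
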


The proof of this theorem is in Appendix~\ref{sec:LP-exponential}.  A
consequence of this theorem is that there is no query-based algorithm
that can efficiently find an \kIMAC\ of any \LP[p] cost ($p>1$) for
any $0 < \multGoal < 2^{\frac{p-1}{p}}$ (or $0 < \multGoal < 1$ for
$p=\infty$) on the family \familyConvexPos. However, from
Theorem~\ref{thm:LP-polynomial} and Lemma~\ref{lma:enclosedBound},
multiline-search type algorithms efficiently find the \kIMAC\ of any
\LP[p] cost ($p>1$) for any $\multGoal \in
\left(\dims^{\frac{p-1}{p}}-1,\infty\right)$ (or $\dims-1 < \multGoal
< \infty$ for $p=\infty$). It is generally unclear if efficient
algorithms exist for any values of \multGoal\ between these intervals,
but in the following section we derive a stronger bound for the case
of $p=2$.

\subsubsection{Multiline Search for $p=2$}
\label{sec:MLS-L2}

\begin{theorem}
  \label{thm:L2-exponential}
  For any $\dims>1$, any initial bounds $0 < \minCost < \maxCost$ on
  the \MAC, and $0 < \multGoal < \frac{\maxCost}{\minCost} - 1$, all
  algorithms must submit at least
  $\alpha_\multGoal^{\frac{\dims-2}{2}}$ membership queries (where
  $\alpha_\multGoal = \frac{(1+\multGoal)^2}{(1+\multGoal)^2-1} > 1$)
  in the worst case to be \multGoal-multiplicatively optimal on
  $\classSpace^\mathrm{convex,\posLbl}$ for \LP[2] costs.
\end{theorem}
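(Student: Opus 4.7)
The plan is to run an adversarial covering argument analogous to the proof of Theorem~\ref{thm:LP-exponential}, but exploiting the rotational symmetry of $\lpCost{2}$ balls to obtain a bound that grows exponentially in $\dims$ for arbitrarily small $\multGoal$. As the adversarial classifier I would use a centered Euclidean ball: fix $R^* \in (\minCost, \maxCost)$ and let $\classifier$ be the member of $\classSpace^\mathrm{convex,\posLbl}$ whose positive set is $\xplus = \ball{R^*}{\lpCost{2}}$ (centered at $\xtarget$). Then $\function{\MAC}{\classifier,\lpCost{2}} = R^*$, and the adversary answers every membership query truthfully against this choice.

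Against this adversary, any algorithm claiming $\multGoal$-multiplicative optimality partitions its queries into a set $P$ of positive answers (each with $\lpCostFunc{2}{p} \le R^*$) and a necessarily nonempty set $N$ of negative answers (each with cost strictly greater than $R^*$). It must also deliver final bounds $C^-_f \le R^* \le C^+_f$ with $C^+_f/C^-_f \le 1+\multGoal$. Since the only information the algorithm has about the convex body $\xplus$ comes from $P$, the lower bound is certifiable only via the inclusion $\ball{C^-_f}{\lpCost{2}} \subseteq \mathrm{conv}(P)$; otherwise one can exhibit a different convex $\xplus$ consistent with the query history whose $\MAC$ falls below $C^-_f$, breaking correctness. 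Combined with $C^+_f > R^*$ coming from the negative witness, this forces $\mathrm{conv}(P) \supseteq \ball{R^*/(1+\multGoal)}{\lpCost{2}}$.

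The heart of the argument is then a geometric translation of this containment into a spherical covering condition. By the support-function characterization of convex containment, $\ball{r}{\lpCost{2}} \subseteq \mathrm{conv}(P)$ iff $\max_{p\in P} u^{\top}(p - \xtarget) \ge r$ for every unit $u \in S^{\dims-1}$; combined with $\lVert p - \xtarget\rVert_2 \le R^*$, this becomes $\cos\bigl(\angle(u, p-\xtarget)\bigr) \ge r/R^* = 1/(1+\multGoal)$ for at least one $p \in P$. Hence the unit directions $\{(p-\xtarget)/\lVert p-\xtarget\rVert_2\}_{p\in P}$ must cover $S^{\dims-1}$ by spherical caps of half-angle $\theta = \arccos\bigl(1/(1+\multGoal)\bigr)$, so $|P|$ is at least the corresponding spherical covering number.

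The remaining step is a standard volumetric lower bound on this covering number. Writing $\mathrm{vol}(\mathrm{cap}(\theta)) = \mathrm{vol}(S^{\dims-2})\int_0^\theta \sin^{\dims-2}\phi\,d\phi$ and comparing with $\mathrm{vol}(S^{\dims-1}) = \mathrm{vol}(S^{\dims-2})\int_0^\pi \sin^{\dims-2}\phi\,d\phi$, together with the identity $\sin^2\theta = 1 - (1+\multGoal)^{-2} = 1/\alpha_\multGoal$, one extracts $|P| \ge \alpha_\multGoal^{(\dims-2)/2}$, which proves the theorem. The main technical obstacle is extracting exactly the exponent $(\dims-2)/2$ with the stated leading constant: one must bound the numerator sphere-volume integral against the tail of the denominator cap integral so that the only $\dims$-dependent quantity in the exponential factor is $\sin^{\dims-2}\theta$, and this careful estimate is precisely where the Euclidean-specific improvement over Theorem~\ref{thm:LP-exponential} comes from.
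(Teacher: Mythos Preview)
Your proposal is correct and follows essentially the same route as the paper's proof: an adversarial Euclidean ball classifier, an alternative classifier built from the convex hull of the positive queries, the reduction of ``$\mathrm{conv}(P)$ must contain $\ball{R^*/(1+\multGoal)}{\lpCost{2}}$'' to a spherical-cap covering of $S^{\dims-1}$ with half-angle $\theta=\arccos\bigl((1+\multGoal)^{-1}\bigr)$, and a volumetric lower bound on that covering number yielding $(1/\sin\theta)^{\dims-2}=\alpha_\multGoal^{(\dims-2)/2}$. The paper packages the last step as a separate lemma due to \citet{Wyner65} and first projects the positive queries onto the $R^*$-sphere before invoking the covering argument, but these are cosmetic differences from your support-function formulation. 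One small framing caveat: the algorithm is only required to output a point in $\kIMAC$, not explicit bounds $C^-_f,C^+_f$, so your sentence about ``delivering final bounds'' should be rephrased directly in terms of the two indistinguishable classifiers having disjoint $\kIMAC$ sets (which is how the paper states it); the substance of your argument already does this.
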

The proof of this result is in Appendix~\ref{sec:L2-exponential}.

This result says that no algorithm can achieve
\multGoal-multiplicative optimality for \LP[2] costs for any fixed
$\multGoal > 0$ using only polynomially-many queries in
$\dims$. However, for a fixed $\dims$, the bound provided by
Theorem~\ref{thm:L2-exponential} suggests that reasonable
approximations may be achievable $\alpha_\multGoal \to 1$.

It may appear that Theorem~\ref{thm:L2-exponential} contradicts
Corollary~\ref{thm:LP-polynomial}. However, in
Corollary~\ref{thm:LP-polynomial} only applies for a range of
$\multGoal$ that depends on $\dims$; \ie\ $\multGoal >
\sqrt{\dims}-1$. Interestingly, substituting this lower bound on
$\multGoal$ into the bound given by Theorem~\ref{thm:L2-exponential},
we get that the number of required queries for $\multGoal >
\sqrt{\dims}-1$ need only be
\begin{equation*}
  M \quad \ge \quad \left(\frac{(1+\multGoal)^2}{(1+\multGoal)^2-1}\right)^{\frac{\dims-2}{2}}
  \quad = \quad \left(\frac{\dims}{\dims-1}\right)^{\frac{\dims-2}{2}}
\end{equation*}
which is a monotonically increasing function in $\dims$ that
asymptotes at $\sqrt{e} \approx 1.64$. Thus,
Theorem~\ref{thm:L2-exponential} and Corollary~\ref{thm:LP-polynomial}
are in agreement since for $\multGoal > \sqrt{\dims}-1$, the former
only requires that we need at least $2$ queries.

\subsection{Convex Negative Set}
\label{sec:genLPNegative}

Algorithm~\ref{alg:setsearch} generalizes immediately to all weighted
\LP[p] costs ($p \ge 1$) centered at $\xtarget$ since these costs are
convex.  For these costs an equivalent separating hyperplane for
$\vec{y}$ can be used in place of Eq.~\eqref{eq:gradient}. These are
given by the equivalent (sub)-gradients for \LP[p] cost-balls:
\begin{eqnarray*}
  h_{p,d}^\vec{y} & = & \ithCost{d} \sign\left(y_d - \xtargetc_d\right) \cdot \left(\frac{|y_d - \xtargetc_d|}{\lpCostFunc[\featCost]{p}{\vec{y}}}\right)^{p-1} \\
  h_{\infty,d}^\vec{y} & = & \ithCost{d} \sign\left(y_d - \xtargetc_d\right) \cdot \ind{|y_d - \xtargetc_d| = \lpCostFunc[\featCost]{p}{\vec{y}}} \enspace.
\end{eqnarray*}
By only changing the cost function $\adCost$ and the separating
hyperplane $\vec{h}^\vec{y}$ used for the halfspace cut in
Algorithms~\ref{alg:intersect} and~\ref{alg:setsearch}, the randomize
ellipsoid search can be applied for any weighted \LP[p] cost
\lpCost[\featCost]{p}.

For more general convex
costs $\adCost$, we still have that the set of all points $\dpt$ with
$\adCostFunc{\dpt} \le C$ (the \term{sublevel set} of cost $C$) is a
subset of the sublevel set of cost $D$ for all $D > C$; thus, the
separating hyperplanes for the sublevel set at cost $D$ will also be
separating hyperplanes for the sublevel set at cost $C$. The
\algoName{SetSearch} procedure therefore is applicable for any convex
cost function $\adCost$ so long as we can compute the separating
hyperplanes of any sublevel set of $\adCost$ for any point $\vec{y}$
not in sublevel set\footnote{The sublevel set of any convex function
  is a convex set \citep[see][]{ConvexOptimization}
  so such a separating hyperplane always exists but may not be simple
  to compute.}.

For non-convex costs \adCost\ such as weighted \LP[p] costs with $p <
1$, minimizing on a convex set \xminus\ is generally a hard problem.
However, there may be special cases when minimizing such a cost can be
accomplished efficiently.
\section{Conclusions and Future Work}
\label{sec:conclusion}

In this paper we study \kSearchability\ of \convexClass. We
present membership query algorithms that efficiently accomplish
\kIMAC\ search on this family.  When the positive class is
convex we demonstrate very efficient techniques that outperform the
previous reverse-engineering approaches for linear classifiers. When
the negative class is convex, we apply a randomized Ellipsoid method
to achieve efficient \kIMAC\ search. If the adversary is unaware
of which set is convex, they can trivially run both searches to
discover an \kIMAC\ with a combined polynomial query complexity.
We also show our algorithms can be efficiently extended to
cope with a number of special circumstances.
Most importantly, we demonstrate that
these algorithms can succeed without reverse engineering the
classifier. Instead, these algorithms systematically eliminate
inconsistent hypotheses and progressively concentrate their efforts in
an ever-shrinking neighborhood of a \MAC\ instance. By doing so, these
algorithms only require polynomially-many queries in spite of the size
of the family of all \convexClass.

We also consider general \LP\ costs and show that
$\classSpace^{convex}$ is only \kSearchable\ for both positive and
negative convexity for any $\multGoal>0$ if $p=1$. For $0 < p < 1$,
the \MLS\ algorithms of Section~\ref{sec:pos-convex} achieve identical
results when the positive set is convex, but the non-convexity of
these \LP\ costs precludes the use of our randomized Ellipsoid method.
The Ellipsoid method does provide an efficient solution for convex
negative sets when $p>1$ (since these costs are convex). However, for
convex positive sets, our results show that for $p>1$ there is no
algorithm that can efficiently find an \kIMAC\ for all $\multGoal >
0$. Moreover, for $p=2$ we prove that there is no efficient algorithm
for finding an \kIMAC\ for any fixed value of \multGoal.

By studying \kIMAC\ searchability, 
we provide a broader picture of how machine learning techniques are 
vulnerable to query-based evasion attacks.
Exploring near-optimal evasion is important for understanding how an
adversary may circumvent learners in security-sensitive settings. In
such an environment, system developers are hesitant to trust
procedures that may create vulnerabilities. 
The algorithms we
demonstrate are invaluable tools not for an adversary to develop
better attacks but rather for analysts to better understand the
vulnerabilities of their filters.  Our algorithms may not necessarily
be easily used by an adversary since
various real-world obstacles would first need to be overcome. Queries
may only be partially observable or noisy and
the feature set may only be partially known. Moreover,
an adversary may not be able to query all $\dpt \in
\xspace$; instead their queries must be legitimate objects (such as email) that
are mapped into $\xspace$. A real-world adversary must invert the
feature-mapping---a generally difficult task. These limitations necessitate
further research on the impact of partial observability and
approximate querying on \kIMAC\ search, and to design more secure filters.
Broader open problems include: is \kIMAC\ search possible on
other classes of learners such as SVMs (linear in a large possibly
infinite feature space)? Is \kIMAC\ search feasible against an
online learner that adapts as it is queried?
Can learners be made resilient to these threats and how does this
impact learning performance?
\section*{Acknowledgements}

We would like to thank Peter Bartlett, Marius Kloft, and Peter Bodik
for their helpful feedback on this project.

We gratefully acknowledge the support of our sponsors. This work was
supported in part by TRUST (Team for Research in Ubiquitous Secure
Technology), which receives support from the National Science
Foundation (NSF award \#CCF-0424422) and AFOSR (\#FA9550-06-1-0244); RAD
Lab, which receives support from California state MICRO grants
(\#06-148 and \#07-012); DETERlab (cyber-DEfense Technology Experimental
Research laboratory), which receives support from DHS HSARPA (\#022412)
and AFOSR (\#FA9550-07-1-0501); NSF award \#DMS-0707060; the Siebel Scholars
Foundation; and the
following organizations: Amazon, BT, Cisco, DoCoMo USA Labs, EADS,
ESCHER, Facebook, Google, HP, IBM, iCAST, Intel, Microsoft, NetApp,
ORNL, Pirelli, Qualcomm, Sun, Symantec, TCS, Telecom Italia, United
Technologies, and VMware. The opinions expressed in this paper are
solely those of the authors and do not necessarily reflect the
opinions of any funding agency, the State of California, or the
U.S. government.

\bibliography{sources}

\appendix

\section{Proof of Theorems for \MLS\ Algorithms}
\label{app:MLS-proofs}

To analyze the worst case of \KMLS\ (Algorithm~\ref{alg:kmls}), we
consider a \term{\malcal} that maximizes the number of queries. We
refer to the agent that queries the classifier as the
\textit{adversary}.

\begin{proofof}{Theorem~\ref{thm:sqrtL}}
  At each each iteration of Algorithm~\ref{alg:kmls}, the adversary
  choses some direction, $\vec{e}$ not yet eliminated from $\aset{W}$.
  Every direction in $\aset{W}$ is feasible (\ie\ could yield an
  \kIMAC) and the \malcal, by definition, will make this
  choice as costly as possible.  During the $K$ steps of binary search
  along this direction, regardless of which direction $\vec{e}$ is
  selected or how the \malcal\ responds, the candidate multiplicative
  gap (see Section~\ref{sec:binSearch}) along $\vec{e}$ will shrink by
  an exponent of $2^{-K}$; \ie
  \begin{eqnarray}
    \frac{B^-}{B^+} & = & \left(\frac{C^-}{C^+}\right)^{2^{-K}} \\
    \log(G_{t+1}^\prime) & = & \log(G_t) \cdot 2^{-K} \label{eq:gap-shrink}
  \end{eqnarray}
  The primary decision for the \malcal\ occurs when the
  adversary begins querying other directions beside $\vec{e}$. At
  iteration $t$, the \malcal\ has 2 options:
  \begin{verse}
  Case 1 ($t \in \aset{C}_1$): Respond with \posLbl\ for all
    remaining directions. Here the bounds candidates $B^+$ and $B^-$
    are verified and thus the new gap
    is reduced by an exponent of $2^{-K}$; however, no directions are
    eliminated from the search.
  \end{verse}
  \begin{verse}
    Case 2 ($t \in \aset{C}_2$): Choose at least 1 direction to
    respond with \negLbl. Here since only the value of $C^-$ changes,
    the \malcal\ can chose to respond to the first $K$ queries so that
    the gap decreases by a neglibile amount (by always responding with
    \posLbl\ during the first $K$ queries along $\vec{e}$, the gap
    only decreases by an exponent of $(1-2^{-K})$).  However, the
    \malcal\ must chose some number $E_t \ge 1$ of directions that
    will be eliminated.
  \end{verse}
  We conservatively assume that the gap only decreases for case 1,
  which decouples the analysis of the queries for $\aset{C}_1$ and
  $\aset{C}_2$ and allows us to upper bound the total number of
  queries made by the algorithm.  By this assumption, if $t \in
  \aset{C}_1$ we have $G_t = G_{t-1}^{2^{-K}}$ whereas if $t \in
  \aset{C}_2$, we have $G_t = G_{t-1}$. By analyzing the gap before
  and after the final iteration $T$, it can be shown that
  \begin{equation}
    \label{eq:gap-tradeoff}
    |\aset{C}_1| = \left\lceil\frac{\steps}{K}\right\rceil
  \end{equation}
  since, for the algorithm to terminate, there must be a total of at
  least $\steps$ binary search steps made during the case 1 iterations
  and each case 1 iteration takes exactly $K$ steps.

  At every case 1 iteration, the adversary make exactly $K + |\aset{W}_t| - 1$
  queries where $\aset{W}_t$ is the set of feasible directions
  remaining at the \nth{t}{th} iteration. While $\aset{W}_t$ is
  controlled by the \malcal, we can apply the bound
  $|\aset{W}_t| \le |\aset{W}|$. Using this and the relation from
  Eq.~\eqref{eq:gap-tradeoff}, we can bound the number of queries
  $Q_1$ used in case 1 by
  \begin{eqnarray*}
    Q_1 & \le & \sum_{t \in C_1}{(K + |\aset{W}| - 1)} \\
        & = &  \left\lceil\frac{L}{K}\right\rceil \cdot \left(K + |\aset{W}|-1 \right) \\
        & \le & \left(\frac{L}{K}+1\right) \cdot K + \left\lceil\frac{L}{K}\right\rceil \cdot (|\aset{W}|-1) \\
        & = & L + K + \left\lceil\frac{L}{K}\right\rceil \cdot \left(|\aset{W}|-1\right) \enspace.
  \end{eqnarray*}
  
  For each case 2 iteration, we make exactly $K + E_t$ queries and
  this causes the elimination of $E_t \ge 1$ directions; hence,
  $|\aset{W}_{t+1}| = |\aset{W}_t| - E_t$.  A \malcal\ will always make
  $E_t=1$ whenever they use case 2 since that maximally limits how
  much the adversary gains. Nevertheless, since case 2 requires the
  elimination of at least 1 direction, we have $|\aset{C}_2| \le
  |\aset{W}| - 1$ and moreover, regardless of the choice of $E_t$ we
  have $\sum_{t \in \aset{C}_2}{E_t} \le |\aset{W}| - 1$ since each
  direction can be eliminated no more than once. Thus,
  \begin{eqnarray*}
    Q_2 & = & \sum_{i \in \aset{C}_2}{(K + E_t)} \\
        & \le & |\aset{C}_2| \cdot K + |\aset{W}|-1 \\
        & \le & \left(|\aset{W}|-1\right)\left(K + 1\right)
        \enspace .
  \end{eqnarray*}

  The total number of queries used by Algorithm~\ref{alg:kmls}
  \begin{eqnarray*}
    Q = Q_1+Q_2 & \le & L + K + \left\lceil\frac{L}{K}\right\rceil \cdot \left(|\aset{W}|-1\right) + \left(|\aset{W}|-1\right)\left(K + 1\right) \\
      & = & L + \left\lceil\frac{L}{K}\right\rceil \cdot |\aset{W}| + K\cdot|\aset{W}| + |\aset{W}| - \left\lceil\frac{L}{K}\right\rceil - 1 \\
      & = & L + \left(\left\lceil\frac{L}{K}\right\rceil + K + 1\right) |\aset{W}| \enspace.
  \end{eqnarray*}

  Finally, choosing $K=\lceil \sqrt{L} \rceil$ minimizes this
  expression and using $L/\lceil\sqrt{L}\rceil \le \sqrt{L}$ and
  substituting $K$ into $Q$'s bound, we have
  \[
    Q \le L + \left(2 \lceil\sqrt{L}\rceil + 1\right) |\aset{W}| \enspace.
  \]
\end{proofof}

\section{Proof of Lower Bounds}
\label{app:lowerBounds}

Here we give proofs for the lower bound theorems in
Section~\ref{sec:lowBounds} first giving the proof for the more
complictated multiplicative case followed by a similar proof sketch
for the additive case. For these lower bounds, $\dims$ is the
dimension of the space, $\adCost : \reals^\dims \to \realpos$ is
any positive convex function, $0 < \minCost < \maxCost$ are initial
upper and lower bounds on the \MAC, and
$\hat{\classSpace}^\mathrm{convex,\posLbl} \subset
\classSpace^\mathrm{convex,\posLbl}$ is the set of classifiers
consistent with the constraints on the \MAC; \ie\ for $\classifier
\in \hat{\classSpace}^\mathrm{convex,\posLbl}$ we have \xplus is
convex, $\ball{\minCost}{\adCost} \subset \xplus$, and
$\ball{\maxCost}{\adCost} \not\subset \xplus$.

\begin{proofof}{Theorems~\ref{thm:alower} and~\ref{thm:mlower}}
  Suppose a query-based algorithm submits $N < \dims + 1$ membership
  queries $\dpt^{1},\ldots,\dpt^{N}\in\reals^{\dims}$ to the
  classifier. For the algorithm to be \multGoal-optimal, these queries
  must constrain all consistent classifiers
  $\hat{\classSpace}^\mathrm{convex,\posLbl}$ to have a common point
  among their \kIMAC[\multGoal] sets.
  Suppose that the responses to the queries are consistent with the
  classifier $\classifier$ defined as:
  \begin{equation}
  \prediction{\dpt} = \begin{cases}
    +1\:, & \mbox{if } \adCostFunc{\dpt} < \maxCost \\
    -1\:, & \mbox{otherwise}
  \end{cases}\enspace.
  \label{eq:ballPredictor}
  \end{equation}
  For this classifier, $\xplus$ is convex since $\adCost$ is a convex
  function, $\ball{\minCost}{\adCost} \subset \xplus$ since
  $\minCost < \maxCost$, and $\ball{\maxCost}{\adCost} \not\subset
  \xplus$ since $\xplus$ is the open \maxCost-ball whereas
  $\ball{\maxCost}{\adCost}$ is the closed \maxCost-ball.
  Moreover, since $\xplus$ is the open \maxCost-ball,
  $\nexists \; \dpt \in \xminus \; \mathrm{s.t.} \; \adCostFunc{\dpt}
  < \maxCost$ therefore
  $\function{\MAC}{\classifier,\adCost}=\maxCost$, and any
  \multGoal-optimal points $\dpt^\prime \in
  \function{\multIMAC}{\classifier,\adCost}$ must
  satisfy $\maxCost \le \adCostFunc{\dpt^\prime} \le (1 + \multGoal)
  \maxCost$. Similarly, any \addGoal-optimal points $\dpt^\prime \in
  \function{\addIMAC}{\classifier,\adCost}$ must satisfy
  $\maxCost \le \adCostFunc{\dpt^\prime} \le \maxCost + \addGoal$.

  Consider an alternative classifier $\funcName{g}$ that responds
  identically to $\classifier$ for $\dpt^{1},\ldots,\dpt^{N}$ but has
  a different convex positive set $\xplus[\funcName{g}]$. Without loss
  of generality, suppose the first $M \le N$ queries are positive and
  the remaining are negative. Let $\aset{G} =
  \function{conv}{\dpt^1,\ldots,\dpt^M}$; that is, the convex hull of
  the $M$ positive queries. Now let $\xplus[\funcName{g}]$ be the
  convex hull of $\aset{G}$ and the $\minCost$-ball of $\adCost$:
  $\xplus[\funcName{g}] = \function{conv}{\aset{G} \cup
    \ball{\minCost}{\adCost}}$.  Since $\aset{G}$ contains all
  positive queries and $\minCost < \maxCost$, the convex set
  $\xplus[\funcName{g}]$ is consistent with the observed responses,
  $\ball{\minCost}{\adCost} \subset \xplus[\funcName{g}]$ by
  definition, and $\ball{\maxCost}{\adCost} \not\subset
  \xplus[\funcName{g}]$ since the positive queries are all inside the
  open \maxCost-sublevel set. Further, since $M \le N < \dims + 1$,
  $\aset{G}$ is contained in a proper linear subspace of
  $\reals^\dims$ and hence $\interior{\aset{G}} = \emptyset$. Hence,
  there is always some point from $\ball{\minCost}{\adCost}$ that is
  on the boundary of $\xplus[\funcName{g}]$; \ie\ 
  $\ball{\minCost}{\adCost} \not\subset \interior{\aset{G}}$ because
  $\interior{\aset{G}} = \emptyset$ and
    $\ball{\minCost}{\adCost} \neq \emptyset$. Hence, there must be
  at least one point from $\ball{\minCost}{\adCost}$ on the boundary
  of the convex hull of $\ball{\minCost}{\adCost}$ and $\aset{G}$. 
  Hence, $\function{\MAC}{\funcName{g},\adCost}
  = \inf_{\dpt \in \xminus[\funcName{g}]}\left[ \adCostFunc{\dpt}
  \right] = \minCost$.  Since the accuracy $\multGoal <
  \frac{\maxCost}{\minCost} - 1$, any $\dpt \in
  \function{\multIMAC}{\funcName{g},\adCost}$ must have
  \[
    \adCostFunc{\dpt} \le (1+\multGoal)\minCost <
      \frac{\maxCost}{\minCost} \minCost = \maxCost \enspace, 
  \]
  whereas any $\vec{y} \in \function{\multIMAC}{\classifier,\adCost}$
  must have $\adCostFunc{\vec{y}} \ge \maxCost$.  Thus,
  $\function{\multIMAC}{\classifier,\adCost} \cap
  \function{\multIMAC}{\funcName{g},\adCost} = \emptyset$ and we
  have constructed two \convexClass\ $\classifier$ and $\funcName{g}$
  both consistent with the query responses with no common $\multIMAC$. Similarly, since $\addGoal < \maxCost - \minCost$, any $\dpt \in
  \function{\addIMAC}{\funcName{g},\adCost}$ must have
  \[
    \adCostFunc{\dpt} \le \addGoal + \minCost 
    < \maxCost - \minCost + \minCost = \maxCost \enspace,
  \]
  whereas any $\vec{y} \in \function{\addIMAC}{\classifier,\adCost}$
  must have $\adCostFunc{\vec{y}} \ge \maxCost$.  Thus,
  $\function{\addIMAC}{\classifier,\adCost} \cap
  \function{\addIMAC}{\funcName{g},\adCost} = \emptyset$ and so the two
  \convexClass\ $\classifier$ and $\funcName{g}$ also have no common
  $\addIMAC$.

  Suppose instead that a query-based algorithm submits $N <
  \multSteps$ membership queries (or $N < \addSteps$ for the additive case). 
  Recall our definitions: $\maxCost$ is the initial upper bound on the
  \MAC, $\minCost$ is the initial lower bound on the \MAC, and
  $G_t^{(\ast)} = \maxCost[t] / \minCost[t]$ is the gap between the
  upper bound and lower bound at iteration $t$ ($G_t^{(+)} =
  \maxCost[t] - \minCost[t]$ for the additive case). Here, the
  \malcal\ $\classifier$ responds with
  \begin{equation}
    \prediction{\dpt^t} = \begin{cases}
      +1\:, & \mbox{if }\adCostFunc{\dpt^t} \le \sqrt{\maxCost[t-1]\cdot\minCost[t-1]}\\
      -1\:, & \mbox{otherwise}\end{cases}\enspace.
    \label{eq:halvingPredictor}
  \end{equation}
  When the classifier responds with \posLbl, $\minCost[t]$ increases
  to no more than $\sqrt{\maxCost[t-1]\cdot\minCost[t-1]}$ and so $G_t
  \ge \sqrt{G_{t-1}}$. Similarly when this classifier responds with
  \negLbl, $\maxCost[t]$ decreases to no less than
  $\sqrt{\maxCost[t-1]\cdot\minCost[t-1]}$ and so again $G_t \ge
  \sqrt{G_{t-1}}$.  Thus, these responses ensure that at each
  iteration $G_t \ge \sqrt{G_{t-1}}$ and since the algorithm can not
  terminate until $G_N \le 1+\multGoal$, we have $N \ge \multSteps$
  from Eq.~\eqref{eq:Lmult} (or in the additive case $N \ge \addSteps$
  from Eq.~\ref{eq:Ladd}). Again we have constructed two
  \convexClass\ with consistent query responses but with no common
  \kIMAC. The first classifier's positive set is the smallest
  cost-ball enclosing all positive queries, while the second
  classifier's positive set is the largest cost-ball enclosing all
  positive queries but no negatives. The \MAC\ values of these sets
  differ by more than a factor of $(1+\multGoal)$ if $N < \multSteps$
  (or, for the additive case, by a difference of more than \addGoal\
  if $N < \addSteps$), so they have no common \kIMAC.
\end{proofof}

\section{Proof of Theorem~\ref{thm:LP-exponential}}
\label{sec:LP-exponential}

First we introduce the following lemma for the $\dims$-dimensional
\term{hypercube graphs}---a collection of $2^\dims$ nodes of the form
$(\pm 1, \pm 1, \ldots, \pm 1)$ where each node has an edge to every
other node that is Hamming distance $1$ from it.

\begin{lemma}
  \label{lma:hypercubeCover}
  For any $0 < \delta < 1/2$, to cover a $\dims$-dimensional hypercube
  graph so that every vertex has a Hamming distance of at most
  $\lfloor\delta\dims\rfloor$ to some vertex in the covering, the
  number of vertices in the covering must be
  \[
    \function{Q}{\dims,h} \ge 2^{\dims\left(1-\entropy{\delta}\right)} \enspace,
  \]
  where $\entropy{\delta} = -\delta \log_2 \delta -
  (1-\delta)\log(1-\delta)$ is the \term{entropy} of $\delta$.
\end{lemma}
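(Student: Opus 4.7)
The plan is to apply the classical sphere-covering (volume) bound from coding theory. Each covering vertex can ``cover'' only those vertices of the hypercube within Hamming distance $h = \lfloor \delta \dims \rfloor$ of it, and the size of a Hamming ball of radius $h$ in $\set{\pm 1}^\dims$ is exactly $V(\dims,h) = \sum_{i=0}^{h}\binom{\dims}{i}$. Since the covering must hit all $2^\dims$ vertices, any valid covering $\aset{C}$ satisfies
\[
|\aset{C}| \cdot V(\dims,h) \;\ge\; 2^\dims,
\]
which gives the initial bound $\function{Q}{\dims,h} \ge 2^\dims / V(\dims,h)$. This is the easy half of the proof.

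The real work is to upper bound $V(\dims, \lfloor \delta \dims \rfloor)$ by $2^{\dims \entropy{\delta}}$, which is where the hypothesis $\delta < 1/2$ enters. I would use the standard entropy inequality: expand $1 = (\delta + (1-\delta))^\dims$ via the binomial theorem, restrict to the terms $i \le \lfloor \delta \dims \rfloor$, and observe that for $\delta < 1/2$ the ratio $\delta/(1-\delta) < 1$, so $\delta^i (1-\delta)^{\dims-i}$ is minimized over $i \in \{0,\dots,\lfloor \delta \dims\rfloor\}$ at $i = \lfloor \delta \dims \rfloor$. This yields
\[
1 \;\ge\; \sum_{i=0}^{\lfloor \delta \dims \rfloor} \binom{\dims}{i} \delta^i (1-\delta)^{\dims - i}
\;\ge\; \delta^{\delta \dims}(1-\delta)^{(1-\delta)\dims} \sum_{i=0}^{\lfloor \delta \dims \rfloor} \binom{\dims}{i},
\]
and recognizing $\delta^{\delta \dims}(1-\delta)^{(1-\delta)\dims} = 2^{-\dims \entropy{\delta}}$ gives $V(\dims,h) \le 2^{\dims \entropy{\delta}}$.

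Combining the two bounds produces
\[
\function{Q}{\dims,h} \;\ge\; \frac{2^\dims}{2^{\dims \entropy{\delta}}} \;=\; 2^{\dims(1 - \entropy{\delta})},
\]
which is the claimed inequality. The only mildly technical point is handling the floor in $\lfloor \delta \dims \rfloor$ cleanly in the entropy estimate; since the monotonicity argument uses only $i \le \delta \dims$, it goes through without change, and the condition $\delta < 1/2$ is exactly what makes $\delta^i(1-\delta)^{\dims-i}$ decreasing in $i$ on the relevant range. I do not anticipate any serious obstacle beyond being careful with this monotonicity direction.
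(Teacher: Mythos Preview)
Your proposal is correct and follows exactly the same approach as the paper: the volume (sphere-covering) bound $|\aset{C}| \ge 2^\dims / \sum_{k=0}^{h}\binom{\dims}{k}$ followed by the entropy upper bound $\sum_{k=0}^{\lfloor\delta\dims\rfloor}\binom{\dims}{k} \le 2^{\dims\entropy{\delta}}$ for $\delta < 1/2$. The only difference is that the paper simply cites the entropy inequality as a known fact, whereas you supply its standard proof via $(\delta + (1-\delta))^\dims = 1$ and monotonicity of $\delta^i(1-\delta)^{\dims-i}$; your handling of the floor is fine.
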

\begin{proof}
  There are $2^\dims$ vertices in the $\dims$-dimensional hypercube
  graph. Each vertex in the covering is within a Hamming distance of
  at most $h$ for exactly $\sum_{k=0}^{h}{\binom{\dims}{k}}$
  vertices. Thus, one needs at least
  $2^{\dims}/\left(\sum_{k=0}^{h}{\binom{\dims}{k}}\right)$ to cover the
  hypercube graph.  Now we apply the bound
  \[
  \sum_{k=0}^{\lfloor\delta\dims\rfloor}{\binom{\dims}{k}} \le
  2^{\entropy{\delta}\dims}
  \]
  to the denominator, which is valid for any $0 < \delta < 1/2$.
\end{proof}

\newcommand{\halfspace}[2]{\ensuremath{\aset{H}_{#1,#2}}}
\newcommand{\lagrangian}[1]{\function{\mathcal{L}}{#1}}
\newcommand{\displacement}{\ensuremath{d}}

\begin{lemma}
  \label{thm:hyperplane-lp-minimizer}
  The minimizer of the \LP[p] cost function \lpCost{p} to any target
  \xtarget\ on the halfspace $\halfspace{\vec{w}}{\vec{b}} =
  \set[\dpt^\top\vec{w} \ge \vec{b}^\top\vec{w}]{\dpt}$ can be
  expressed in terms of the equilavent hyperplane $\dpt^\top\vec{w}
  \ge d$ parameterized by a normal vector $\vec{w}$ and displacement
  $\displacement = \left(\vec{b}-\xtarget\right)^\top\vec{w}$ as
  \begin{equation}
    \begin{cases}
      \displacement \cdot \|\vec{w}\|_{\frac{p}{p-1}}^{-1}  \:, & \mbox{if } \displacement > 0 \\
      0 \:, & \mbox{otherwise}
    \end{cases}
    \label{eq:hyperPMinimizer}
  \end{equation}
  for all $1 < p < \infty$ and is
  \begin{equation}
    \begin{cases}
      \displacement \cdot \|\vec{w}\|_{1}^{-1}  \:, & \mbox{if } \displacement > 0 \\
      0 \:, & \mbox{otherwise}
    \end{cases}
    \label{eq:hyperInfMinimizer}
  \end{equation}
  for $p = \infty$.
\end{lemma}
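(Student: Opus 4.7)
The plan is to reduce the constrained minimization to a standard application of Hölder's inequality. First I would translate coordinates by setting $y = \dpt - \xtarget$, so the cost becomes $\|y\|_p$ (or the weighted analog, which the same argument handles once weights are absorbed) and the halfspace constraint $\dpt^\top\vec{w} \ge \vec{b}^\top\vec{w}$ rewrites as $y^\top\vec{w} \ge \displacement$ with $\displacement = (\vec{b}-\xtarget)^\top\vec{w}$. The problem is then: minimize $\|y\|_p$ subject to $y^\top \vec{w} \ge \displacement$.

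Next I would split on the sign of $\displacement$. If $\displacement \le 0$, then $y = \vec{0}$ is feasible and trivially optimal, giving the ``0'' branch of both displayed formulas (so only the case $\displacement > 0$ is substantive, and in that case the minimizer must lie on the boundary hyperplane $y^\top\vec{w} = \displacement$, since $\|y\|_p$ is strictly increasing as we scale $y$ toward the origin but leaving the boundary strictly increases slack without reducing cost). For $\displacement > 0$, Hölder's inequality with conjugate exponent $q$ (where $\frac{1}{p}+\frac{1}{q}=1$, so $q = \frac{p}{p-1}$ for $1<p<\infty$ and $q=1$ for $p=\infty$) gives
\begin{equation*}
\displacement \;=\; y^\top \vec{w} \;\le\; \|y\|_p \, \|\vec{w}\|_q,
\end{equation*}
so $\|y\|_p \ge \displacement / \|\vec{w}\|_q$, which is exactly the claimed value in Eq.~\eqref{eq:hyperPMinimizer} and Eq.~\eqref{eq:hyperInfMinimizer}.

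To finish, I would exhibit a feasible $y$ achieving equality. For $1 < p < \infty$, the standard Hölder equality case applies: take $y_d = \lambda \cdot \sign(w_d) \, |w_d|^{q-1}$ with $\lambda$ chosen so that $y^\top\vec{w} = \displacement$, i.e.\ $\lambda = \displacement / \|\vec{w}\|_q^{q}$; a direct computation verifies $\|y\|_p = \displacement / \|\vec{w}\|_q$. For $p=\infty$, Hölder equality with $q=1$ is achieved by placing all mass of $y$ along the coordinates where $|w_d|$ is maximal (or even by a single coordinate choice $y_{d^\ast} = \displacement / w_{d^\ast}$ at an index $d^\ast \in \arg\max_d |w_d|$), giving $\|y\|_\infty = \displacement / \|\vec{w}\|_1$ after uniform spreading, which matches Eq.~\eqref{eq:hyperInfMinimizer}.

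There is no real obstacle; the only subtlety is handling the $p=1$ endpoint (the lemma excludes it, and indeed Hölder would require $q=\infty$, which is fine but makes the minimizer non-unique, so stating it separately is cleanest) and noting that when $\vec{w}=\vec{0}$ the halfspace is trivial and the case $\displacement > 0$ cannot occur. The minor bookkeeping step I would take care to state explicitly is that scaling the defining pair $(\vec{w},\vec{b})$ leaves both the halfspace and the displayed formula invariant (since $\displacement$ scales with $\vec{w}$ while $\|\vec{w}\|_q$ does too), so the result is well-defined as a statement about the halfspace $\halfspace{\vec{w}}{\vec{b}}$ rather than its particular parameterization.
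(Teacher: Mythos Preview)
Your approach is correct and genuinely different from the paper's. The paper proves the $1<p<\infty$ case by writing down the Lagrangian, differentiating in $\dpt$ and then in $\lambda$, and explicitly solving for the optimal primal point before computing its cost; for $p=\infty$ it switches to an ad~hoc geometric argument about corners of the hypercube. Your route via H\"older's inequality is more unified: a single inequality gives the lower bound $\|y\|_p \ge \displacement/\|\vec{w}\|_q$ for every $p\in(1,\infty]$ at once, and the standard H\"older equality case furnishes the minimizer. This avoids the calculus entirely and also makes transparent why the conjugate exponent $q=p/(p-1)$ appears, which in the Lagrangian computation only emerges after solving for $\lambda^\ast$ and simplifying.

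One small correction in your $p=\infty$ paragraph: the parenthetical ``single coordinate choice $y_{d^\ast}=\displacement/w_{d^\ast}$ at $d^\ast\in\arg\max_d|w_d|$'' does \emph{not} achieve the bound $\displacement/\|\vec{w}\|_1$; it gives $\|y\|_\infty=\displacement/\|\vec{w}\|_\infty$, which is generally larger. The H\"older equality condition for $(p,q)=(\infty,1)$ requires $|y_d|=\|y\|_\infty$ on the support of $\vec{w}$ with matching signs, i.e.\ the ``uniform spreading'' $y_d=(\displacement/\|\vec{w}\|_1)\,\sign(w_d)$ that you mention afterward. Drop the single-coordinate remark and the argument is clean.
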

\begin{proof}
  For $1 < p < \infty$, minimizing \lpCost{p} on the
  halfspace \halfspace{\vec{w}}{\vec{b}} is equivalent to finding a
  minimizer for
  \[
    \min_{\dpt}{\frac{1}{p}\sum_{i=1}^{\dims}{\left|\dptc_i\right|^p}} \quad \textrm{s.t.} \quad \dpt^\top\vec{w} \le \displacement \enspace.
  \]
  Clearly, if $\displacement \le 0$ then the vector $\vec{0}$ (corresponding to
  \xtarget\ in the transformed space) trivially satisfies the
  constraint and minimizes the cost function with cost $0$ which
  yields the second case of Eq.~\eqref{eq:hyperPMinimizer}. For the case
  $\displacement > 0$, we construct the
  Lagrangian
  \[
  \lagrangian{\dpt,\lambda} \defAs
  \frac{1}{p}\sum_{i=1}^{\dims}{\left|\dptc_i\right|^p} - \lambda
  \left( \dpt^\top\vec{w} - \displacement \right) \enspace.
  \]
  Differentiating this with respect to $\dpt$ and setting that partial
  derivative equal to zero yields
  \[
    \dptc^\ast_i = \sign(w_i)\left(\lambda|w_i|\right)^{\frac{1}{p-1}} \enspace.
  \]
  Plugging this back into the Lagrangian yields
  \[
    \lagrangian{\dpt^\ast,\lambda} = \frac{1-p}{p}\lambda^{\frac{p}{p-1}}\sum_{i=1}^{\dims}{\left|w_i\right|^{\frac{p}{p-1}}} + \lambda \displacement \enspace,
  \]
  which we now differentiate with respect to $\lambda$ and set the
  derivative equal to zero to yield
  \[
    \lambda^\ast = \left(\frac{\displacement}{\sum_{i=1}^{\dims}{\left|w_i\right|^{\frac{p}{p-1}}}}\right)^{p-1} \enspace.
  \]
  Plugging this solution into the formula for $\dpt^\ast$ yields the solution
  \[
    \dptc^\ast_i = \sign(w_i)  \left(\frac{\displacement}{\sum_{i=1}^{\dims}{\left|w_i\right|^{\frac{p}{p-1}}}}\right)  |w_i|^{\frac{1}{p-1}} \enspace.
  \]
  The \LP[p] cost of this optimal solution is given by
  \[
    \lpCostFunc{p}{\dpt^\ast} = \displacement \cdot \|\vec{w}\|_{\frac{p}{p-1}}^{-1}
    \enspace,
  \]
  which is the first case of Eq.~\eqref{eq:hyperPMinimizer}.

  For $p = \infty$, once again if $\displacement \le 0$ then the
  vector $\vec{0}$ trivially satisfies the constraint and minimizes
  the cost function with cost $0$ which yields the second case of
  Eq.~\eqref{eq:hyperInfMinimizer}. For the case $\displacement > 0$, we
  use the geometry of hypercubes (the equi-cost balls of a \LP[\infty]
  cost function) to derive the second case of
  Eq.~\eqref{eq:hyperInfMinimizer}. For any optimal solution must occur
  at a point where the hyperplane given by $\dpt^\top\vec{w} =
  \vec{b}^\top\vec{w}$ is tangent to a hypercube about \xtarget---this
  can either occur along a side (face) of the hypercube or at a
  corner. However, if the plane is tangent along a side (face) it is
  also tangent at a corner of the hypercube. Hence, there is always an
  optimal solution at some corner of optimal cost hypercube.

  At a corner of the hypercube, we have the following property:
  \[
    |\dptc_1^\ast| = |\dptc_2^\ast| = \ldots = |\dptc_\dims^\ast| \enspace;
  \]
  that is, the magnitude of all coordiates of this optimal solution is
  the same value. Further, the sign of the optimal solution's
  \nth{i}{th} coordinate must agree with the sign of the hyperplane's
  \nth{i}{th} coordinate, $\vec{w}_i$. These constraints, along with the hyperplane constraint, lead to the following formula for an optimal solution:
  \[
    \dptc_i = \displacement \cdot \sign(w_i) \|\vec{w}\|_1^{-1}
    \enspace .
  \]
  The \LP[\infty] cost of these solutions is simply
  \[
    \displacement \cdot \|\vec{w}\|_1^{-1} \enspace.
  \]
\end{proof}

For the proof of Theorem~\ref{thm:LP-exponential}, we use the
orthants (centered at \xtarget)---an \term{orthant} is the
$\dims$-dimensional generalization of a quadrant in $2$-dimensions.
There are $2^\dims$ orthants in a $\dims$-dimensional space. We
represent each orthant by it's \term{canonical representation} which
is a vector of $\dims$ positive or negative ones; \ie, the orthant
represented by $\vec{a} = (\pm 1, \pm 1, \ldots, \pm 1)$ contains the
point $\xtarget + \vec{a}$ and is the set
of all points $\dpt$ satisfying:
\[
  \dpt_i \in \begin{cases}
      [0,+\infty]\:, & \mbox{if } \vec{a} = +1 \\
      [-\infty,0]\:, & \mbox{if } \vec{a} = -1
    \end{cases}\enspace.
\]

\newcommand{\orthant}[1]{\function{orth}{#1}}

\begin{proofof}{Theorem~\ref{thm:LP-exponential}}\mbox{}
  Suppose a query-based algorithm submits $N$ membership queries
  $\dpt^{1},\ldots,\dpt^{N}\in\reals^{\dims}$ to the classifier.
  Again, for the algorithm to be \multGoal-optimal, these queries must
  constrain all consistent classifiers
  $\hat{\classSpace}^\mathrm{convex,\posLbl}$ to have a common point
  among their \kIMAC[\multGoal] sets. The responses described above
  are consistent with the classifier $\classifier$ defined as
  \begin{equation}
    \prediction{\dpt} = \begin{cases}
      +1\:, & \mbox{if } \lpCostFunc{p}{\dpt} < \maxCost \\
      -1\:, & \mbox{otherwise}
    \end{cases}\enspace;
    \label{eq:LinfPredictor}
  \end{equation}
  For this classifier, $\xplus$ is convex since $\lpCost{p}$ is a
  convex function for $p \ge 1$, $\ball{\minCost}{\lpCost{p}} \subset
  \xplus$ since $\minCost < \maxCost$, and
  $\ball{\maxCost}{\lpCost{p}} \not\subset \xplus$ since $\xplus$ is
  the open \maxCost-ball whereas $\ball{\maxCost}{\lpCost{p}}$ is the
  closed \maxCost-ball.  Moreover, since $\xplus$ is the open
  \maxCost-ball, $\nexists \; \dpt \in \xminus \; \mathrm{s.t.} \;
  \lpCostFunc{p}{\dpt} < \maxCost$ therefore
  $\function{\MAC}{\classifier,\lpCost{p}}=\maxCost$, and any
  \multGoal-optimal points $\dpt^\prime \in
  \function{\multIMAC}{\classifier,\lpCost{p}}$ must satisfy $\maxCost
  \le \lpCostFunc{p}{\dpt^\prime} \le (1 + \multGoal) \maxCost$.

  Now consider an alternative classifier $\funcName{g}$ that responds
  identically to $\classifier$ for $\dpt^{1},\ldots,\dpt^{N}$ but has
  a different convex positive set $\xplus[\funcName{g}]$. Without loss
  of generality suppose the first $M \le N$ queries are positive and
  the remaining are negative. Here we consider a set which is a convex
  hull of the orthants of all $M$ positive queries; that is,
  \[
  \aset{G}
  = \function{conv}{\orthant{\dpt^1}\cap\xplus,
    \orthant{\dpt^2}\cap\xplus, \ldots, \orthant{\dpt^M}\cap\xplus}
  \]
  where $\orthant{\dpt}$ is some orthant that $\dpt$ lies with in
  relative to \xtarget\ (a data point may lie within more than one
  orthant but we need only select any orthant that contains it in
  order to cover it). By intersecting each data point's orthant with
  the set \xplus\ and taking the convex hull of these regions,
  \aset{G} is convex , contains \xtarget\, and is a subset of \xplus\
  that is also consistent with all the query responses of \classifier;
  \ie, each of the $M$ positive queries are in \xplus[\funcName{g}]
  and all the negative queries are in \xminus[\funcName{g}]. Moreover,
  \aset{G} is a superset of the convex hull of the $M$ positive
  queries.
  Thus, by finding the largest enclosed \LP[p] ball within the
  \aset{G}, we upper bound $\function{\MAC}{\funcName{g},\lpCost{p}}$.

  We now represent each orthant as a vertex in a $D$-dimensional
  hypercube graph---the Hamming distance between any pair of orthants
  is the number of different coordinates in their canonical
  representations and two orthants are adjacent in the graph if and
  only if they have Hamming distance of 1. Using this notion of
  Hamming distance, we will seek a $K$-covering of the hypercube. We
  refer to the orthants used in \aset{G} to cover the $M$ positive
  queries as \term{covering orthants} and their corresponding vertices
  form a covering of the hypercube. Suppose the $M$ covering orthants
  are sufficient for a $K$ covering but not $K-1$ covering; then there
  must be at least one vertex not in the covering that has at least a
  $K$ Hamming distance to every vertex in the covering. This vertex
  corresponds to an empty orthant that differs from all covered
  orthants in at least $K$ coordinates of their canonical vertices.
  Without loss of generality, suppose this uncovered orthant has the
  canonical vertex of all postitive ones which we scale to $\maxCost
  (+1,+1,\ldots,+1)$. Consider the hyperplane with normal vector
  $\vec{w} = (+1,+1,\ldots,+1)$ and displacement
  \begin{eqnarray*}
  \displacement &=&
  \begin{cases} \maxCost (\dims - K)^{\frac{p-1}{p}}\: & \mbox{if } 1 < p < \infty
    \\ \maxCost (\dims - K)\: & \mbox{if } p = \infty \end{cases}
  \end{eqnarray*}
  that specifies
  the function $\function{s}{\dpt} = \dpt^\top\vec{w}-\displacement =
  \sum_{i=1}^{\dims}{\dpt_i} - \displacement$.  For this hyperplane, the vertex
  $\maxCost (+1,+1,\ldots,+1)$ yields
  \[
    \function{s}{\maxCost(+1,+1,\ldots,+1)} = \maxCost\dims - \displacement > 0\; .
  \]
  Also for any orthant $\vec{a}$ with Hamming distance at least $K$
  from this uncovered orthant, we have that for any $\dpt \in
  \orthant{\vec{a}} \cap \xplus$, by definition of the orthant and
  \xplus, the function \funcName{s} yields
  \begin{eqnarray*}
    \function{s}{\dpt} & = & \sum_{i=1}^{\dims}{\dptc_i} - \displacement \\
    & = & \sum_{\set[\vec{a_i}=+1]{i}}{\underbrace{\dptc_i}_{\ge 0}} + \sum_{\set[\vec{a_i}=-1]{i}}{\underbrace{\dptc_i}_{\le 0}} - \displacement \enspace.
  \end{eqnarray*}
  Since all the terms in the second summation are non-postive, the
  second sum is at most 0. Further, by maximizing the first summation,
  we upper bound $\function{s}{\dpt}$. The summation
  $\sum_{\set[\vec{a_i}=+1]{i}}{\dptc_i}$ (with the constraint that
  $\|\dpt\|_p < \maxCost$) has at most $D-K$ terms and is maximized by
  $\dptc_i = \maxCost(\dims-K)^{-1/p}$ (or $\dptc_i = \maxCost$ for
  $p=\infty$) for which the first summation is upper bounded by
  $\maxCost(\dims-K)^{\frac{p-1}{p}}$ or $\maxCost(\dims-K)$ for
  $p=\infty$; \ie\ it is upper bounded by $\displacement$. Thus we see
  that
  \[
  \function{s}{\dpt} \le 0 \enspace.
  \]
  Thus, this hyperplane seperates the scaled vertex
  $\maxCost(+1,+1,\ldots,+1)$ from each set $\orthant{\vec{a}} \cap
  \xplus$ where $\vec{a}$ is the canonical representation of any
  orthant with a Hamming distance of at least $K$. Thus, this
  hyperplane also seperates the scaled vertex from $\aset{G}$ by the
  properties of the convex hull.  Since the displacement $\maxCost
  (\dims - K) > 0$, by applying
  Lemma~\ref{thm:hyperplane-lp-minimizer}, this separating hyperplane
  upper bounds the cost of the largest \LP[p] ball enclosed in
  \aset{G} as
  \[
  \function{\MAC}{\funcName{g},\lpCost{p}} \le \maxCost (\dims - K)^{\frac{p-1}{p}} \cdot
  \|\vec{w}\|_{\frac{p}{p-1}}^{-1} =
  \maxCost \left(\frac{\dims - K}{\dims}\right)^{\frac{p-1}{p}}
  \]
  for $1 < p < \infty$ and
  \[
  \function{\MAC}{\funcName{g},\lpCost{p}} \le \maxCost (\dims - K) \cdot \|\vec{1}\|_{1}^{-1} = \maxCost \frac{\dims - K}{\dims}
  \]
  for $p = \infty$. Since we have an upper bound on the \MAC\ of \funcName{g} and the \MAC\ of \classifier\ is \maxCost, in order to have a common \kIMAC\ between these classifiers, we must have
  \[
    (1+\multGoal) \ge \begin{cases}
      \left(\frac{\dims}{\dims-K}\right)^{\frac{p-1}{p}}\:, & \mbox{if } 1 < p < \infty \\
      \frac{\dims}{\dims-K}\:, & \mbox{if } p = \infty
    \end{cases} \enspace.
  \]
  Solving for the value of $K$ required to achieve a desired accuracy
  of $1+\multGoal$ we have
  \[
    K \le \begin{cases}
      \frac{(1+\multGoal)^{\frac{p}{p-1}}-1}{(1+\multGoal)^{\frac{p}{p-1}}} \dims\:, & \mbox{if } 1 < p < \infty \\
      \frac{\multGoal}{1+\multGoal} \dims\:, & \mbox{if } p = \infty
    \end{cases} \enspace,
  \] 
  which bounds the size of the covering required to achieve the
  desired accuracy.

  For the case $1 < p < \infty$, by Lemma~\ref{lma:hypercubeCover},
  there must be
  \[
  M \ge \exp\left\{ \ln(2) \cdot \dims\left(1 -
      \entropy{\frac{(1+\multGoal)^{\frac{p}{p-1}}-1}{(1+\multGoal)^{\frac{p}{p-1}}}}\right) \right\}
  \]
  vertices of the hypercube in the covering to achieve any desired
  accuracy $0 < \multGoal < 2^\frac{p-1}{p} - 1$, for which
\begin{eqnarray*}  
\frac{(1+\multGoal)^{\frac{p}{p-1}}-1}{(1+\multGoal)^{\frac{p}{p-1}}}
  &<& \frac{1}{2}
\end{eqnarray*}
  as required by the Lemma. Moreover, since $0 <
  \entropy{\delta} < 1$ for any $0 < \delta < 1$,
\begin{eqnarray*}
  \alpha_{p,\multGoal} &=& \exp\left\{\ln(2) \left(1 -
    \entropy{\frac{(1+\multGoal)^{\frac{p}{p-1}}-1}{(1+\multGoal)^{\frac{p}{p-1}}}}\right) \right\}
  > 1
\end{eqnarray*} and we have
  \[
  M > \alpha_{p,\multGoal}^\dims \enspace.
  \]

  Similarly for $p=\infty$, Lemma~\ref{lma:hypercubeCover} can be applied yielding
  \[
    M \ge 2^{\dims\left(1 - \entropy{\frac{\multGoal}{1+\multGoal}}\right)}
  \]
  to achieve any desired accuracy $0 < \multGoal < 1$ (for which
  $\multGoal/(1+\multGoal) < 1/2$ as required by the
  Lemma). Again, by the properties of entropy the constant
  $\alpha_{\infty,\multGoal} = 2^{\left(1 -
    \entropy{\frac{\multGoal}{1+\multGoal}}\right)} > 1$ for $0 < \multGoal
  < 1$ and we have
  \[
  M > \alpha_{\infty,\multGoal}^\dims \enspace.
  \]
\end{proofof}

\section{Proof of Theorem~\ref{thm:L2-exponential}}
\label{sec:L2-exponential}

\newcommand{\sphere}[1]{\aset{S}^{#1}}
\newcommand{\scap}[2]{\aset{C}^{#1}_{#2}}
\newcommand{\gammaFunc}[1]{\Gamma\left(#1\right)}
\newcommand{\vol}[1]{\function{vol}{#1}}
\newcommand{\surfArea}[1]{\function{surf}{#1}}

\begin{figure}[t]
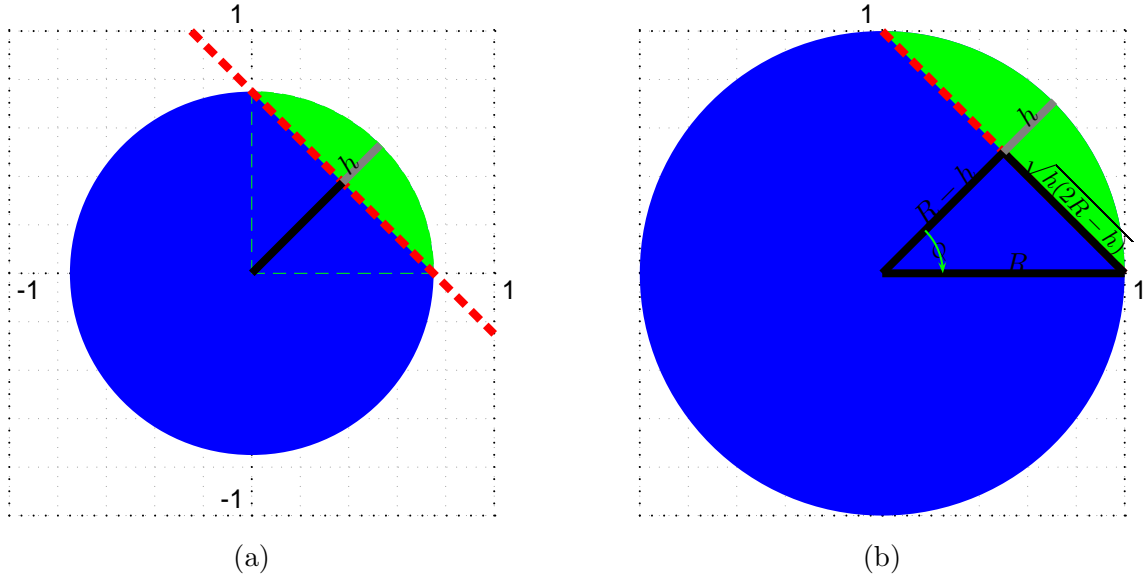

\begin{minipage}{.45\linewidth}
\begin{center}
\renewcommand{\picwidth}{0.47\linewidth}
\pspicture(-\picwidth,-\picwidth)(\picwidth,\picwidth)
  \psset{unit=\picwidth}
  \psgrid[unit=\picwidth,griddots=20,subgriddots=5](0,0)(-1,-1)(1,1)
  \pscircle*[linecolor=blue](0,0){0.75}
  \psarc*[showpoints=true,linecolor=green](0,0){0.75}{0}{90}
  \psline[linecolor=red,linestyle=dashed,linewidth=3pt]{}(-0.25,1.0)(1.0,-0.25)
  \psline[linecolor=black,linewidth=3pt](0,0)(0.375,0.375)
  \psline[linecolor=gray,linewidth=3pt](0.375,0.375)(0.5303,0.5303)
  \rput[br]{*45}(0.45,0.45){$h$}
\endpspicture\\[0.5em]
(a)
\end{center}
\end{minipage} \hfill
\begin{minipage}{.45\linewidth}
\begin{center}
\renewcommand{\picwidth}{0.47\linewidth}
\pspicture(-\picwidth,-\picwidth)(\picwidth,\picwidth)
  \psset{unit=\picwidth}
  \psgrid[unit=\picwidth,griddots=20,subgriddots=5](0,0)(-1,-1)(1,1)
  \pscircle*[linecolor=blue](0,0){1.0}
  \psarc*[linecolor=green](0,0){1.0}{0}{90}
  \psline[linecolor=red,linestyle=dashed,linewidth=3pt]{}(0,1.0)(1.0,0)
  \psline[linecolor=black,linewidth=3pt](0,0)(0.5,0.5)
  \psline[linecolor=black,linewidth=3pt](0,0)(1,0)
  \psline[linecolor=black,linewidth=3pt](0.5,0.5)(1,0)
  \psline[linecolor=gray,linewidth=3pt](0.5,0.5)(0.7101,0.7101)
  \rput[br]{*45}(0.65,0.65){$h$}
  \rput[br]{*45}(0.4,0.4){$R-h$}
  \rput[br]{*0}(0.6,0){$R$}
  \rput[br]{*315}(0.95,0.05){\footnotesize $\sqrt{h(2R-h)}$}
  \rput{*45}(0.2309699,0.0956709){$\phi$}
  \psarcn[linecolor=green]{->}{.25}{45}{0}
\endpspicture\\[0.5em]
(b)
\end{center}
\end{minipage}
\caption{This figure depictions the geometry of spherical caps.
  (a) A spherical cap of height $h$ is shown that is created by a plane
  passing through the sphere. The green region represents the area of the cap.
  (b) We see the geometry of the spherical cap. Notice that the
  intersecting hyperplane forms a right triangle with the centroid of
  the hypershere. The length of the first side of that triangle is
  $R-h$, it's hypotenuse is length $R$, and its other side is length
  $\sqrt{h(2R-h)}$. The half angle $\phi$ of the right circular cone
  can also be used to parameterize the cap.}
\label{fig:sphericalcaps}
\end{figure}

For this proof, we build on previous results for covering
hyperspheres. The proof is based on the following covering number
result by Wyner and Shannon which bounds the minimum number of
spherical caps required to cover a hypersphere. A $\dims$-dimensional
\term{spherical cap} is the region formed by the intersection of a
halfspace and a hypersphere facing away from the center of the
hypersphere as depicted in Figure~\ref{fig:sphericalcaps}. This cap is
parameterized by the hypersphere's radius $R$ and the half-angle
$\phi$ about a central radius (through the peak of the cap) as in the
right-most diagram of Figure~\ref{fig:sphericalcaps}.

Based on these formula, we now derive a bound on the number of
spherical caps of half-angle $\phi$ required to cover the sphere,
mirroring the result due to \citet{Wyner65}.

\begin{lemma}
  \label{lma:capCovering}
  \textbf{(Result based on Wyner 1965)} Covering the surface of
  $\dims$-dimensional hypersphere of radius $R$ requires at least
  \[
    \left(\frac{1}{\sin \phi} \right)^{\dims-2}
  \]
  spherical caps of half-angle $\phi$.
\end{lemma}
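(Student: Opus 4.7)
The plan is a standard area-counting argument: since the $N$ caps of half-angle $\phi$ cover the $(D-1)$-dimensional surface of the hypersphere, we must have $N \cdot \surfArea{\scap{}{\phi}} \ge \surfArea{\sphere{D-1}}$, so it suffices to upper bound the ratio $\surfArea{\scap{}{\phi}}/\surfArea{\sphere{D-1}}$ by $\sin^{D-2}(\phi)$.

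First I would set up spherical coordinates on the sphere of radius $R$ with the ``pole'' placed at the center of an arbitrary cap, so the cap of half-angle $\phi$ corresponds to polar angles $\theta \in [0,\phi]$. A well-known computation then gives
\[
\surfArea{\scap{}{\phi}} \;=\; \omega_{D-2}\, R^{D-1} \int_0^{\phi} \sin^{D-2}(\theta)\, d\theta,
\qquad
\surfArea{\sphere{D-1}} \;=\; \omega_{D-2}\, R^{D-1} \int_0^{\pi} \sin^{D-2}(\theta)\, d\theta,
\]
where $\omega_{D-2}$ is the surface area of the unit $(D-2)$-sphere. The $R^{D-1}$ factors and $\omega_{D-2}$ factors cancel when forming the covering lower bound, which is always a good sign.

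Next I would bound the numerator and denominator. For the numerator, the monotonicity of $\sin$ on $[0,\pi/2]$ (we may assume $\phi \le \pi/2$, as otherwise a single cap already covers the sphere and the bound is trivial) gives the pointwise bound $\sin(\theta) \le \sin(\phi)$ on $[0,\phi]$, so the cap integral is at most $\phi\, \sin^{D-2}(\phi)$. For the denominator, I would exploit the symmetry $\sin(\pi-\theta) = \sin(\theta)$ to rewrite it as $2\int_0^{\pi/2}\sin^{D-2}(\theta)\,d\theta$, and then use a straightforward lower bound (such as restricting to $[\phi,\pi/2]$ combined with $\sin\theta \ge \sin\phi$ on that interval, or invoking the beta-function identity $\int_0^{\pi/2}\sin^{D-2}(\theta)\,d\theta = \tfrac{\sqrt{\pi}}{2}\,\Gamma((D-1)/2)/\Gamma(D/2)$) to extract the factor that kills the surplus $\phi$ and any Gamma-ratio constant.

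The main obstacle will be calibrating these two bounds so that the constants cancel cleanly and leave exactly $(1/\sin\phi)^{D-2}$, since the crude version of the argument naturally produces an exponent of $D-1$ (the cap area scales like $\sin^{D-1}(\phi)$ for small $\phi$) together with a dimension-dependent prefactor. Getting the stated weaker exponent $D-2$ without a shrinking prefactor amounts to absorbing one factor of $\sin(\phi)$ into the Gamma-ratio coming from $\omega_{D-1}/\omega_{D-2}$; alternatively, one can bypass the delicate integral estimate entirely by a geometric projection argument, mapping each cap onto its $(D-2)$-dimensional equatorial ``rim,'' whose surface area is $\omega_{D-2}(R\sin\phi)^{D-2}$, and arguing that these rims must cover an equatorial $(D-2)$-sphere of area $\omega_{D-2} R^{D-2}$, which directly yields the claimed bound.
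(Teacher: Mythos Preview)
Your high-level plan coincides with the paper's: it too starts from Wyner's area-counting inequality
\[
M \;\ge\; \frac{\surfArea{\sphere{D-1}}}{\surfArea{\scap{}{\phi}}}
\;=\; \frac{D\sqrt{\pi}\,\Gamma\!\left(\tfrac{D+1}{2}\right)}{(D-1)\,\Gamma\!\left(1+\tfrac{D}{2}\right)}
\left[\int_0^\phi \sin^{D-2}(t)\,dt\right]^{-1},
\]
so the whole question is the integral estimate. The paper does \emph{not} use either of your two proposed routes. Instead it invokes a geometric bound (attributed to Ball): the cap of half-angle $\phi$ on a radius-$R$ sphere sits inside a ball of radius $R\sin\phi$, which translates into
\[
\int_0^\phi \sin^{D-2}(t)\,dt \;\le\; \frac{\sqrt{\pi}\,\Gamma\!\left(\tfrac{D-1}{2}\right)}{\Gamma\!\left(\tfrac{D}{2}\right)}\,\sin^{D-2}\phi.
\]
The point of this particular upper bound is that its constant is exactly a Gamma ratio of the same shape as the one already sitting in Wyner's formula; after the substitution, the identity $\dfrac{\Gamma((D+1)/2)\,\Gamma(D/2)}{\Gamma(1+D/2)\,\Gamma((D-1)/2)}=\dfrac{D-1}{D}$ makes every prefactor cancel and leaves the clean $(1/\sin\phi)^{D-2}$.

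Your crude bound $\int_0^\phi \sin^{D-2}\theta\,d\theta \le \phi\,\sin^{D-2}\phi$ is valid but leaves a stray factor of $\phi$ that does not cancel against anything in the Wyner constant, so you cannot get a prefactor-free statement from it. And the ``equatorial rim'' alternative does not work as stated: if a family of caps covers the sphere, there is no reason their $(D-2)$-dimensional rims must cover any fixed equatorial $(D-2)$-sphere (already in $D=3$, a covering by caps need not have its boundary circles cover a chosen great circle), so that projection argument does not yield a covering inequality. The missing idea is precisely the cap-in-a-ball enclosure that produces a Gamma-ratio constant matching Wyner's.
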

\begin{proof}
  In \emph{Capabilities of Bounded Discrepancy Decoding}, Wyner showed
  that the minimal number, $M$, of spherical caps of half-angle $\phi$
  required to cover $\dims$-dimensional hypersphere of radius $R$ is
  given by
  \begin{equation*}
    M \ge \frac{\dims \sqrt{\pi} \gammaFunc{\frac{\dims+1}{2}}}{(\dims - 1) \gammaFunc{1+\frac{\dims}{2}}} \left[\int_{0}^{\phi}{\sin^{\dims-2}(t)dt} \right]^{-1} \enspace.
  \end{equation*}
  This result follows directly from computing the surface area of the
  hypersphere and the spherical caps.
 
  We continue by lower bounding the above integral for a looser but
  more interpretable bound. Integrals of the form
  $\int_0^\phi{\sin^\dims(t) dt}$ also arise in computing the volume
  of a spherical cap. This volume (and thus the integral) can be
  bounded by enclosing the cap within a
  hypersphere; \cf\ \citet{Ball-FoG-1997}. This yields the following bound:
  \begin{equation*}
    \int_{0}^{\phi}{\sin^\dims(t)dt} \le \frac{\sqrt{\pi} \gammaFunc{\frac{\dims+1}{2}}}{\gammaFunc{1+\frac{\dims}{2}}} \cdot \sin^{\dims} \phi \enspace.
  \end{equation*}
  Using this bound on the integral, our bound on the size of the
  covering is
  \[
    M \ge \frac{\dims \sqrt{\pi} \gammaFunc{\frac{\dims+1}{2}}}{(\dims - 1) \gammaFunc{1+\frac{\dims}{2}}} \left[\frac{\sqrt{\pi} \gammaFunc{\frac{\dims-1}{2}}}{\gammaFunc{\frac{\dims}{2}}} \cdot \sin^{\dims-2} \phi \right]^{-1} \enspace.
  \]
  Now using properties of the gamma function, it can be shown that
  $\frac{\gammaFunc{\frac{\dims+1}{2}}
    \gammaFunc{\frac{\dims}{2}}}{\gammaFunc{1+\frac{\dims}{2}}
    \gammaFunc{\frac{\dims-1}{2}}} = \frac{\dims-1}{\dims}$ so that
  after canceling terms we arrive at our result: 
  \[
    M \ge \left(\frac{1}{\sin \phi} \right)^{\dims-2} \enspace.
  \]
\end{proof}

\begin{proofof}{Theorem~\ref{thm:L2-exponential}}\mbox{}
  Suppose a query-based algorithm submits $N < \dims + 1$ membership
  queries $\dpt^{1},\ldots,\dpt^{N}\in\reals^{\dims}$ to the
  classifier. For the algorithm to be \multGoal-optimal, these queries
  must constrain all consistent classifiers
  $\hat{\classSpace}^\mathrm{convex,\posLbl}$ to have a common point
  among their \kIMAC[\multGoal] sets. Suppose that all the responses
  are consistent with the classifier $\classifier$ defined as
  \begin{equation}
    \prediction{\dpt} = \begin{cases}
      +1\:, & \mbox{if } \lpCostFunc{2}{\dpt} < \maxCost \\
      -1\:, & \mbox{otherwise}
    \end{cases}\enspace;
    \label{eq:L2Predictor}
  \end{equation}
  For this classifier, $\xplus$ is convex since $\lpCost{2}$ is a
  convex function, $\ball{\minCost}{\lpCost{2}} \subset \xplus$ since
  $\minCost < \maxCost$, and $\ball{\maxCost}{\lpCost{2}} \not\subset
  \xplus$ since $\xplus$ is the open \maxCost-ball whereas
  $\ball{\maxCost}{\lpCost{2}}$ is the closed \maxCost-ball.
  Moreover, since $\xplus$ is the open \maxCost-ball, $\nexists \;
  \dpt \in \xminus \; \mathrm{s.t.} \; \lpCostFunc{2}{\dpt} <
  \maxCost$ therefore
  $\function{\MAC}{\classifier,\lpCost{2}}=\maxCost$, and any
  \multGoal-optimal points $\dpt^\prime \in
  \function{\multIMAC}{\classifier,\lpCost{2}}$ must satisfy $\maxCost
  \le \lpCostFunc{2}{\dpt^\prime} \le (1 + \multGoal) \maxCost$.
  
  Now consider an alternative classifier $\funcName{g}$ that responds
  identically to $\classifier$ for $\dpt^{1},\ldots,\dpt^{N}$ but has
  a different convex positive set $\xplus[\funcName{g}]$. Without loss
  of generality suppose the first $M \le N$ queries are positive and
  the remaining are negative. Let $\aset{G} =
  \function{conv}{\dpt^1,\ldots,\dpt^M}$; that is, the convex hull of
  the $M$ positive queries.  We will assume $\xtarget \in \aset{G}$
  since if it is not, then we constuct the set $\xplus[\funcName{g}]$
  as in the proof for Theorems~\ref{thm:mlower} and~\ref{thm:alower}
  above and achieve $\function{\MAC}{\classifier,\lpCost{2}}=\minCost$
  thereby showing our desired result. Now consider the points
  $\vec{z}^i = \maxCost \frac{\dpt^i}{\lpCostFunc{2}{\dpt^i}}$; \ie,
  the projection of each of the positive queries onto the surface of
  the \LP[2] ball $\ball{\maxCost}{\lpCost{2}}$. Since each positive
  query lies along the line between $\xtarget$ and its projection
  $\vec{z}^i$, by convexity and the fact that $\xtarget \in \aset{G}$,
  we have $\aset{G} \subset
  \function{conv}{\vec{z}^1,\vec{z}^2,\ldots,\vec{z}^M}$. We will call
  this enlarged hull $\hat{\aset{G}}$. These $M$ projected points
  $\set{\vec{z}^i}$ must form a covering of the $\maxCost$-hypersphere
  as the locii of caps of half-angle $\phi^\ast = \arccos\left(
    (1+\multGoal)^{-1} \right)$. If not, then there exists some
  point on the surface of this hypersphere that is at least an angle
  $\phi^\ast$ from all $\vec{z}^i$ points and the resulting
  $\phi^\ast$-cap centered at this uncovered point is not in
  $\hat{\aset{G}}$ (since a cap is defined as the intersection of the
  hypersphere and a halfspace). Moreover, by definition of the
  $\phi^\ast$-cap, it achieves a minimal \LP[2] cost of $\maxCost \cos
  \phi^\ast$. Thus, if we fail to achieve a $\phi^\ast$-covering of
  the $\maxCost$-hypersphere, the alternative classifier
  $\funcName{g}$ has $\function{\MAC}{\funcName{g},\lpCost{2}} <
  \maxCost \cos \phi^\ast = \maxCost/(1+\multGoal)$ and any
  $\dpt \in \function{\multIMAC}{\funcName{g},\lpCost{2}}$ must have
  \[
  \lpCostFunc{2}{\dpt} \le (1+\multGoal)\MAC < (1+\multGoal)
  \frac{\maxCost}{1+\epsilon} = \maxCost \enspace,
  \]
  whereas any $\vec{y} \in \function{\multIMAC}{\classifier,\adCost}$
  must have $\adCostFunc{\vec{y}} \ge \maxCost$.  Thus, we would have
  $\function{\multIMAC}{\classifier,\adCost} \cap
  \function{\multIMAC}{\funcName{g},\adCost} = \emptyset$ and thus
  fail to achieve \multGoal-multiplicative optimality. Thus, we have
  shown that an $\phi^\ast$-covering is necessary for
  \multGoal-multiplicative optimality. However, from
  Lemma~\ref{lma:capCovering}, to have a $\phi^\ast$-covering we must
  have
  \[
  M \ge \left(\frac{1}{\sin \phi^\ast} \right)^{\dims-2} \enspace .
  \]
  Using the trigonometric identity $\sin\left(\arccos(x)\right) =
  \sqrt{1-x^2}$ we can substitute for $\phi^\ast$ and find
  \begin{eqnarray*}
    M & \ge & \left(\frac{1}{\sin\left( \arccos\left( \frac{1}{1+\multGoal} \right) \right)} \right)^{\dims-2} \\
    & \ge & \left(\frac{(1+\multGoal)^2}{(1+\multGoal)^2-1} \right)^{\frac{\dims-2}{2}} \enspace.
  \end{eqnarray*}
\end{proofof}

\end{document}